\newtheorem{assumption}[theorem]{Assumption}
\let\Ginclude@graphics\@org@Ginclude@graphics 
\title[Faster Convergence of RSGD with Increasing BS]{Faster Convergence of Riemannian Stochastic Gradient Descent with Increasing Batch Size}
\author{\Name{Kanata Oowada} \Email{kanata7527@gmail.com}\\
\Name{Hideaki Iiduka} \Email{iiduka@cs.meiji.ac.jp}\\
\addr Meiji University, Japan}
\begin{document}

\maketitle
\thispagestyle{plain} 

\begin{abstract}
We theoretically analyzed the convergence behavior of Riemannian stochastic gradient descent (RSGD) and found that using an increasing batch size leads to faster convergence than using a constant batch size, not only with a constant learning rate but also with a decaying learning rate, such as cosine annealing decay and polynomial decay. The convergence rate improves from $O(T^{-1}+C)$ with a constant batch size to $O(T^{-1})$ with an increasing batch size, where $T$ denotes the total number of iterations and $C$ is a constant. Using principal component analysis and low-rank matrix completion, we investigated, both theoretically and numerically, how an increasing batch size affects computational time as quantified by stochastic first-order oracle (SFO) complexity. An increasing batch size was found to reduce the SFO complexity of RSGD. Furthermore, an increasing batch size was found to offer the advantages of both small and large constant batch sizes.
\end{abstract}

\begin{keywords}
Batch Size; Stochastic First-order Oracle Complexity; Learning Rate; Riemannian Optimization; Riemannian Stochastic Gradient Descent
\end{keywords}

\section{Introduction}
\begin{table*}[ht]\label{table:exi_rslt}
\small
\begin{tabular}{cccc}
\hline
Study/Theorem &Batch Size & Learning Rate & Convergence Analysis\\
\hline
\citet{Bonnabel2013Stoc} & -- & s.a. LR & $\underset{t\to \infty}{\lim}\mathbb{E}[\mathrm{grad}f(x_t)]= 0 \text{ a.s.}$\\
\hline
\citet{Zhang2016Firs} & -- & $\eta_t=O((L_r + \sqrt{t})^{-1})$ & $\mathbb{E}[f(x_T) - f^{\star}]=O(\sqrt{T}(C + T)^{-1})$\\
\hline
\citet{Tripuraneni2018Aversto} & Constant & $\eta_t = O(t^{-\alpha})$ & $\mathbb{E}\|\Delta_T\|= O(T^{-\frac{1}{2}})$\\
\hline
\citet{Hosseini2020Analter} & Constant & $\eta = O(T^{-\frac{1}{2}})$ & $\|G_T\|^2=O(T^{-\frac{1}{2}})$\\
\hline
\citet{Durmus2021OnRie} & Constant & Constant & $\mathbb{E}[f(x_T)-f^{\star}]=O(C_1^T+C_2)$\\
\hline
\citet{Sakai2024ConvHada} & Constant & $\eta_t =O((t+1)^{-\frac{1}{2}})$ & $\|G_T\|^2=O(T^{-\frac{1}{2}}\log T)$\\
\hline
\citet{Sakai2025Agen} & Increasing & Constant & $\|G_T\|^2=O(T^{-1})$\\
\hline
Theorem \ref{thm:c_d} & Constant & Constant and Decay & $\|G_T\|^2=O(T^{-1}+C)$ \\
\hline
Theorem \ref{thm:i_d} & Increasing & Constant and Decay & $\|G_T\|^2=O(T^{-1})$\\
\hline
Theorem \ref{thm:i_w} & Increasing & Warm-up & $\|G_T\|^2=O((T-T_w)^{-1})$\\
\hline
Theorem \ref{thm:c_w} & Constant & Warm-up & $\|G_T\|^2=O((T-T_w)^{-1} +C)$\\
\hline
\end{tabular}
\centering
\caption{Comparison of RSGD convergence analyses, where $\alpha\in(\frac{1}{2},1), C_1\in(0,1), C_2$ and $C$ are constants, and $(x_t)$ is a sequence generated by RSGD. The usual stochastic approximation LR (s.a. LR) is defined as $\sum_{t=0}^{\infty}\eta_t=\infty, \sum_{t=0}^{\infty}\eta_t^2<\infty$. $\mathbb{E}[\|\Delta_T\|]\coloneq\mathbb{E}[\|R_{x^{\star}}^{-1}(x_T)\|]$. $T$ is the total number of iterations, and $T_w$ is the number of warm-up iterations (see Section \ref{i_w}). $\|G_T\|^2\coloneq\min_{t\in\{0,\cdots, T-1\}}\mathbb{E}[\|\mathrm{grad}f(x_t)\|^2_{x_t}]$. ‘Decay’ includes LR schedules of the form $\eta_t=O(t^{-\frac{1}{2}})$. More detailed explanations of the convergence criteria are provided in Appendix \ref{apdix:explain_conv_criteria}.}
\end{table*}
Stochastic gradient descent (SGD) \citep{Robbins1951Astoc} is a basic algorithm, widely used in machine learning \citep{Liu2021Swintr, He2016Deepres, Krizhevsky2012ImageNe}. Riemannian stochastic gradient descent (RSGD) was introduced in \citet{Bonnabel2013Stoc}. Riemannian optimization \citep{Absil2008opti, Boumal2023Anintr}, which addresses RSGD and its variants, has attracted attention due to its potential application in various tasks, including many machine learning tasks. For example, it has been used in principal components analysis (PCA) \citep{Breloy2021Majo, Liu2020Simp, Roy2018Geom}, low-rank matrix completion \citep{Vandereycken2013Lowr, Boumal2015Lowr, Kasai2016Lowra}, convolutional neural networks \citep{Wang2020Orthogonal, Huang2017deep}, and graph neural networks \citep{Zhu2020Graph, Chami2019Hype, Liu2019Hype}. It has also been used in applications of optimal transportation theory \citep{Lin2020Proje, Weber2023Riem}. It is well established that the performance of Euclidean SGD strongly depends on both the batch size (BS) and learning rate (LR) settings \citep{Goyal2017Accur, Smith2018dontdecay, Zhang2019which, Lin2020Dontu}. In the context of Riemannian SGD, \citet{Ji2023Adap, Bonnabel2013Stoc, Kasai2019Riem, Kasai2018Riemrec} addressed the use of a constant BS, and \citet{Sakai2025Agen, Han2022Imp} addressed the use of an adaptive BS, although the latter did not consider RSGD specifically. In parallel, various decaying LR strategies, including cosine annealing \citep{Loshchilov2017Wup} and polynomial decay \citep{Chieh2018Polylr}, have been devised, and their effectiveness has been demonstrated for the Euclidean case. Motivated by these previous studies, we developed a novel convergence analysis of RSGD, proved that the convergence rate is improved with an increasing BS, and showed that the stochastic first-order oracle (SFO) complexity is reduced with an increasing BS. Our numerical results support our theoretical results.

\subsection{Previous Results and Our Contributions}
\textbf{Constraint:} \citet{Bonnabel2013Stoc, Zhang2016Firs, Durmus2021OnRie, Sakai2024ConvHada} considered Hadamard manifolds or their submanifolds. \cite{Tripuraneni2018Aversto} treated Riemannian manifolds that satisfy the condition required to ensure a unique geodesic connecting any two points, including Hadamard manifolds. \cite{Sakai2025Agen} addressed embedded submanifolds of Euclidean space. In contrast, we considered general Riemannian manifolds that encompass all of the above conditions.

\noindent
\textbf{Learning rate:} \cite{Bonnabel2013Stoc} considered s.a. LRs, 
and \cite{Zhang2016Firs, Tripuraneni2018Aversto, Sakai2024ConvHada} considered diminishing LRs of the form $\eta_t = \frac{1}{t^a}$. \cite{Hosseini2020Analter, Durmus2021OnRie} considered constant LRs. \cite{Sakai2025Agen} used both constant and diminishing LRs. We considered constant, diminishing, cosine annealing, and polynomial decay LRs, as well as their warm-up versions.

\noindent
\textbf{Batch size:} Because \cite{Bonnabel2013Stoc} considered expected risk minimization using expectation instead of sample mean—as is customary in empirical risk minimization—the concept of BS is not generally applicable in the expected-risk-minimization setting. \cite{Zhang2016Firs, Tripuraneni2018Aversto, Hosseini2020Analter, Durmus2021OnRie, Sakai2024ConvHada} considered a constant BS. \cite{Sakai2025Agen} and this study considered an increasing BS. Furthermore, we numerically and theoretically compared using a constant BS with using an increasing BS on the basis of SFO complexity. Our contributions are as follows: (i) we theoretically showed that using an increasing BS yields a more favorable SFO complexity of $O(\epsilon^{-2})$, whereas using a constant BS equal to the critical BS yields a rate of $O(\epsilon^{-4})$; (ii) we numerically observed that an increasing BS yields both a better optimal solution—in terms of attaining a smaller gradient norm—and a shorter computational time than either a small or a large constant BS (see Section \ref{sec:SFO} for more details).

\noindent
\textbf{Objective function:} \cite{Bonnabel2013Stoc} considered three times continuously differentiable functions, with both the gradient and Hessian uniformly bounded. \cite{Tripuraneni2018Aversto} treated functions that are twice continuously differential, subject to additional conditions on the Hessian. Other studies considered only once continuously differentiable functions. \cite{Zhang2016Firs} addressed convex and smooth functions, while \cite{Durmus2021OnRie} considered strongly convex and smooth functions. \cite{Hosseini2020Analter, Sakai2024ConvHada, Sakai2025Agen} treated more general classes of functions, specifically nonconvex functions with bounded gradients. We treated nonconvex functions without bounded gradients.

\noindent
\textbf{Convergence rate:} A comparison of the previous and current RSGD convergence analyses is presented in Table \ref{table:exi_rslt}; here we outline the key results. \citet{Tripuraneni2018Aversto} obtained a convergence rate of $O(T^{-\frac{1}{2}})$ (with a criterion other than the gradient norm) using an LR of $\eta=O(T^{-\frac{1}{2}})$; however, this rate decays to zero as $T\to\infty$, making it unsuitable for practical use. \cite{Durmus2021OnRie} attained exponential convergence under strong convexity and smoothness assumptions. \cite{Sakai2025Agen} attained a rate of $O(T^{-1})$ for several adaptive methods, including RSGD. Our work differs from \cite{Sakai2025Agen} in four key aspects : (i) we provide both theoretical and numerical evidence—based on SFO complexity—that using an increasing BS leads to better performance than using a constant BS; (ii) in addition to a constant LR and a diminishing LR, we addressed cosine annealing, polynomial decay, and warm-up LRs; (iii) \cite{Sakai2024ConvHada} focused on embedded submanifolds of Euclidean space unlike us; (iv) we do not assume bounded gradient norms, thereby expanding the range of applicable scenarios.

\noindent
\textbf{Contributions:} 
\begin{itemize}
\item We developed a convergence analysis of RSGD that incorporates an increasing BS, a cosine annealing LR, and a polynomial decay LR, under assumptions more general than those used in prior work. Our analysis shows that using an increasing BS improves the convergence rate of RSGD from $O(T^{-1}+C)$ to $O(T^{-1})$.
\item We numerically and theoretically demonstrated—using PCA and low-rank matrix completion (LRMC) tasks—the advantage of an increasing BS in reducing computational time, as quantified by SFO complexity. Specifically, we observed that an increasing BS yields a better optimal solution (compared with a small constant BS) in terms of attaining a smaller gradient norm and a shorter computation time (compared with a large constant BS). These findings are consistent with our theoretical analysis, which shows that an increasing BS reduces the SFO complexity for achieving $\|G_T\|^2\leq\epsilon^2$ from $O(\epsilon^{-4})$—with a constant BS set equal to the critical BS—to $O(\epsilon^{-2})$ in the experimental setting.
\end{itemize}

\section{Preliminaries}
\label{sec:preli}
Let $\mathcal{M}$ be a Riemannian manifold and $T_x\mathcal{M}$ denote the tangent space at $x\in\mathcal{M}$. On $T_x\mathcal{M}$, the inner product of $\mathcal{M}$ is denoted by $\langle\cdot, \cdot\rangle_x$ and induces the norm $\|\cdot\|_x$. For a smooth map $f\colon\mathcal{M}\rightarrow\mathbb{R}$, we can define gradient $\mathrm{grad} f$ as a unique vector field that satisfies $\forall v\in T_x\mathcal{M}\colon Df(x)[v] = \langle \mathrm{grad}f(x), v\rangle_x$. A Riemannian manifold is generally not equipped with vector addition, whereas in Euclidean space iterative methods are updated by addition. Iterative updates are instead performed using alternative operations—such as retraction.
\begin{definition}[Retraction] 
Let $0_x$ denote the zero element of $T_x\mathcal{M}$. A map $R\colon T\mathcal{M} \ni (x,v) \mapsto R_x(v)\in\mathcal{M}$ is called a retraction on $\mathcal{M}$ if it satisfies the two following conditions for all $x\in \mathcal{M}$. (I) $R_x(0_x) = x$; (II) with the canonical identification $T_{0_x}T_x\mathcal{M}\simeq T_x\mathcal{M}$, $DR_x(0_x) = \mathrm{id}_{T_x\mathcal{M}}$, where $\mathrm{id}_{T_x\mathcal{M}}\colon T_x\mathcal{M}\to T_x\mathcal{M}$ denotes the identity map.
\end{definition}
Iterative methods on Riemannian manifold are defined by $x_{t+1}=R_{x_t}(\eta_t d_t)$ generally. Exponential maps are often used as retractions. The following assumption plays a central role in Lemma \ref{lem:undl_anal}.
\begin{assumption}
[Retraction Smoothness]\label{asm:rtr_smooth} Let $f\colon\mathcal{M}\rightarrow\mathbb{R}$ be a smooth map. Then there exists $L_r>0$, such that $^{\forall}x\in\mathcal{M}, ^{\forall}v\in T_x\mathcal{M}$, $f(R_x(v)) \leq f(x) + \langle \mathrm{grad} f(x), v\rangle_x + \frac{L_r}{2} \|v\|_{x}^2$.
\end{assumption}
In the Euclidean space setting, $L$-smoothness implies a property similar to retraction smoothness. The property corresponding to $L$-smoothness in Euclidean space is defined for $f\colon\mathcal{M}\rightarrow\mathbb{R}$ as $^{\exists}L > 0, ^{\forall}x, y \in \mathcal{M}\colon \|\mathrm{grad}f(x) - \Gamma^x_y \mathrm{grad}f(y)\|_{x}\leq L_r d(x,y)$, where $\Gamma$ is the parallel transport from $T_y\mathcal{M}$ to $T_x\mathcal{M}$, and $d (\cdot, \cdot)$ is the Riemannian distance. This condition is sufficient for Assumption \ref{asm:rtr_smooth} with $R\coloneq \mathrm{Exp}$ (see \citet[Corollary 10.54]{Boumal2023Anintr}). This case is frequently used (e.g., \citet{Zhang2016Firs, Criscitiello2023Curva, Kim2022neste, Liu2017Acceler}). Other sufficient conditions for Assumption \ref{asm:rtr_smooth} were identified by \citet[Lemma 3.5]{Kasai2018Riemrec} and \citet[Proposition 3.2]{Sakai2025Agen}. Now, we consider the empirical risk minimization problem such that 
\begin{align*}
\text{minimize}\ f(x) \coloneq \frac{1}{N}\sum_{j=1}^N f_j(x)\quad \text{subject to}\ x\in\mathcal{M},
\end{align*}
where each $f_j\colon\mathcal{M}\rightarrow\mathbb{R}$ is smooth and lower bounded. Therefore, $f$ is also smooth and lower bounded. This assumption is often made for both Euclidean space and Riemannian space. The lower boundedness of $f$ is essential for analyses using optimization theory because unbounded $f$ may not have optimizers. We denote an optimal value of $f$ by $f^{\star}$ and let $N$ denote the size of the dataset. In many machine learning tasks, either the dimension of model parameters $x$ or the size of dataset $N$ is large. Hence, we use the following minibatch gradient to efficiently approximate the gradient of $f$: $\mathrm{grad} f_B(x) \coloneq \frac{1}{b}\sum_{j=1}^b \mathrm{grad} f_{\xi_j}(x)$, where $B$ represents a minibatch with size $b$, and $(\xi_j)_{j=1}^b$ is a sequence of $\{1,\cdots,N\}$-valued i.i.d. random variables distributed as $\Pi$. That is, the gradient of $f_B (x)=\frac{1}{b}\sum_{j=1}^b f_{\xi_j}(x)$ is used instead of $\mathrm{grad}f(x)$. The following assumption is reasonable given this situation.
\begin{assumption}
[Bounded Variance Estimator]\label{asm:sto_gra} The stochastic gradient given by a distribution $\Pi$ is an unbiased estimator of the full gradient and has bounded variance: (I) $\forall x \in \mathcal{M}\colon\mathbb{E}_{\xi \sim \Pi}\left[\mathrm{grad} f_{\xi}(x)\right] = \mathrm{grad} f(x)$; (II) $\exists \sigma>0, \forall x \in \mathcal{M}\colon \mathbb{V}_{\xi \sim \Pi}(\mathrm{grad} f_{\xi}(x)) \leq \sigma^2$.
\end{assumption}
For example, if $f$ is lower bounded, satisfies Assumption \ref{asm:rtr_smooth}, and $\Pi$ is taken to be the uniform distribution over $\{1,\ldots,N\}$, then Assumption \ref{asm:sto_gra} holds. RSGD is defined by $x_{t+1} = R_{x_t}(-\eta_t \mathrm{grad} f_{B_t}(x_t))$, and is a generalization of Euclidean SGD. Because $(\xi_{i,t})_{i,t}$ are i.i.d. samples, $\boldsymbol{\xi}_{T}\coloneq(\xi_{1,T},\cdots,\xi_{b_T,T})^{\top}$ is independent of $(x_t)_{t=0}^{T}$. Note that the BS $(b_t)_t$ may vary at each iteration. The detailed definitions of $\mathbb{E}_{\xi \sim \Pi}, \mathbb{V}_{\xi \sim \Pi},$ and $\mathbb{E}$ can be found in Appendix \ref{apdix:proofs}.

\section{Convergence Analysis}
\label{secti:conv_anal}
We begin by presenting a lemma that will be used in the convergence analysis of RSGD. Lemma \ref{lem:undl_anal} plays a central role is the following convergence analyses; its proof is provided in Appendix \ref{prf:undl_anal}.

\begin{lemma}
[Underlying Analysis]\label{lem:undl_anal} Let $(x_t)_t$ be a sequence generated by RSGD and let $\eta_{\max}>0$. Consider a positive-valued sequence $(\eta_t)_t$ such that $\eta_t \in [0,\eta_{\max}] \subset [0, \frac{2}{L_r})$. Then, under Assumptions \ref{asm:rtr_smooth} and \ref{asm:sto_gra}, we obtain
\[
\fcolorbox{black}{gray!20}{$\displaystyle{
 \min_{t\in\{0, \cdots, T-1\}}\mathbb{E}[\|\mathrm{grad}f(x_t)\|_{x_t}^2]
 \leq \frac{2(f(x_0) - f^{\star})}{2 - L_r \eta_{\max}} \frac{1}{\sum_{t=0}^{T-1} \eta_t} + \frac{L_r \sigma^2}{2 - L_r \eta_{\max}} \frac{\sum_{t=0}^{T-1} \eta_t^2 b_t^{-1}}{\sum_{t=0}^{T-1} \eta_t}.
 }$}
\]
\end{lemma}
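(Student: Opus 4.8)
\medskip

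The plan is to run the standard descent-lemma argument, but carefully tracking the minibatch variance term and then summing over iterations. First I would apply Assumption~\ref{asm:rtr_smooth} (retraction smoothness) to the RSGD update $x_{t+1} = R_{x_t}(-\eta_t \mathrm{grad} f_{B_t}(x_t))$ with $v = -\eta_t \mathrm{grad} f_{B_t}(x_t)$, obtaining
\[
f(x_{t+1}) \leq f(x_t) - \eta_t \langle \mathrm{grad} f(x_t), \mathrm{grad} f_{B_t}(x_t)\rangle_{x_t} + \frac{L_r \eta_t^2}{2}\|\mathrm{grad} f_{B_t}(x_t)\|_{x_t}^2.
\]
Next I would take the conditional expectation with respect to $\boldsymbol{\xi}_t$ given $x_t$. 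By Assumption~\ref{asm:sto_gra}(I) and the i.i.d.\ property of the minibatch samples, $\mathbb{E}[\mathrm{grad} f_{B_t}(x_t)\mid x_t] = \mathrm{grad} f(x_t)$, so the inner-product term becomes $-\eta_t\|\mathrm{grad} f(x_t)\|_{x_t}^2$. For the quadratic term I would use the bias-variance decomposition $\mathbb{E}[\|\mathrm{grad} f_{B_t}(x_t)\|_{x_t}^2\mid x_t] = \|\mathrm{grad} f(x_t)\|_{x_t}^2 + \mathbb{V}(\mathrm{grad} f_{B_t}(x_t)\mid x_t)$, and then the averaging-over-a-minibatch-of-size-$b_t$ fact that the variance of the minibatch gradient is at most $\sigma^2/b_t$ by Assumption~\ref{asm:sto_gra}(II). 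This yields
\[
\mathbb{E}[f(x_{t+1})] \leq \mathbb{E}[f(x_t)] - \eta_t\left(1 - \frac{L_r \eta_t}{2}\right)\mathbb{E}[\|\mathrm{grad} f(x_t)\|_{x_t}^2] + \frac{L_r \eta_t^2 \sigma^2}{2 b_t},
\]
after also taking the outer (full) expectation.

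\medskip

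Then I would use $\eta_t \leq \eta_{\max} < 2/L_r$ to lower-bound the coefficient: $1 - L_r\eta_t/2 \geq 1 - L_r\eta_{\max}/2 = (2 - L_r\eta_{\max})/2 > 0$. Rearranging gives
\[
\frac{2 - L_r\eta_{\max}}{2}\,\eta_t\,\mathbb{E}[\|\mathrm{grad} f(x_t)\|_{x_t}^2] \leq \mathbb{E}[f(x_t)] - \mathbb{E}[f(x_{t+1})] + \frac{L_r \eta_t^2 \sigma^2}{2 b_t}.
\]
Summing from $t=0$ to $T-1$, the right-hand side telescopes to $f(x_0) - \mathbb{E}[f(x_T)] + \frac{L_r\sigma^2}{2}\sum_{t=0}^{T-1}\eta_t^2 b_t^{-1} \leq f(x_0) - f^\star + \frac{L_r\sigma^2}{2}\sum_{t=0}^{T-1}\eta_t^2 b_t^{-1}$, where lower boundedness of $f$ gives $\mathbb{E}[f(x_T)] \geq f^\star$. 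On the left, bounding each $\mathbb{E}[\|\mathrm{grad} f(x_t)\|_{x_t}^2]$ below by the minimum over $t \in \{0,\dots,T-1\}$ factors it out of the sum, leaving $\frac{2-L_r\eta_{\max}}{2}\big(\sum_{t=0}^{T-1}\eta_t\big)\min_t \mathbb{E}[\|\mathrm{grad} f(x_t)\|_{x_t}^2]$. Dividing through by $\frac{2-L_r\eta_{\max}}{2}\sum_{t=0}^{T-1}\eta_t$ (positive since the $\eta_t$ are positive-valued) produces exactly the boxed inequality.

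\medskip

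I expect the only genuinely delicate point to be the minibatch variance bound $\mathbb{V}(\mathrm{grad} f_{B_t}(x_t)\mid x_t) \leq \sigma^2/b_t$: one must justify that, for i.i.d.\ samples $\xi_{1,t},\dots,\xi_{b_t,t}\sim\Pi$ independent of $x_t$, the variance of the average $\frac{1}{b_t}\sum_j \mathrm{grad} f_{\xi_{j,t}}(x_t)$ equals $\frac{1}{b_t}$ times the single-sample variance, which is where the independence of $\boldsymbol{\xi}_t$ from $(x_t)$ noted in Section~\ref{sec:preli} is used. Everything else is the routine descent-lemma telescoping; the conditioning/tower-property bookkeeping (iterating expectations over $\boldsymbol{\xi}_0,\dots,\boldsymbol{\xi}_{T-1}$) should be spelled out but presents no real difficulty.
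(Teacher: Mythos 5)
Your proposal is correct and follows essentially the same route as the paper: the paper first establishes the identical descent inequality (its Lemma~\ref{apdix:des_lem}, proved via retraction smoothness, unbiasedness of the minibatch gradient under conditioning, and the bias--variance expansion giving the $\sigma^2/b_t$ term), then telescopes, bounds $\mathbb{E}[f(x_0)-f(x_T)]$ by $f(x_0)-f^{\star}$, factors out the minimum using $1-L_r\eta_t/2\geq 1-L_r\eta_{\max}/2>0$, and divides by $\sum_{t=0}^{T-1}\eta_t$. The one point you flag as delicate—the $\sigma^2/b_t$ variance reduction for the i.i.d.\ minibatch average—is exactly the fact the paper records in Appendix~\ref{apdix:proofs} before stating the descent lemma, so no gap remains.
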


\subsection{Case (i): Constant BS; Constant or Decaying LR}
\label{c_d}
In this case, we consider a BS $(b_t)_t$ and an LR $(\eta_t)_t$ such that $b_t = b$ and $\eta_{t+1}\leq\eta_t$. In particular, we present the following examples with constant or decaying LRs.
\small
\begin{align}
&\text{Constant LR:} \quad \eta_t = \eta_{\max}, \label{eq:const_lr}\\
&\text{Diminishing LR:} \quad \eta_t = \frac{\eta_{\max}}{\sqrt{t+1}}, \label{eq:dim_lr}\\
&\text{Cosine Annealing LR:}
\quad \eta_t \coloneq \eta_{\min} + \frac{\eta_{\max} - \eta_{\min}}{2} \left(1 + \cos \frac{t}{T}\pi \right), \label{eq:cosan_lr}\\ 
&\text{Polynomial Decay LR:}
\quad \eta_t \coloneq \eta_{\min} + (\eta_{\max} - \eta_{\min})\left(1 - \frac{t}{T}\right)^p,\label{eq:poly_dec_lr}
\end{align}
\normalsize
where $\eta_{\max}$ and $\eta_{\min}$ are positive values satisfying $0 \leq \eta_{\min} \leq \eta_{\max}<\frac{2}{Lr}$. Note that $\eta_{\max}$ (resp. $\eta_{\min}$) becomes the maximum (resp. minimum) value of $\eta_t$; namely, $\eta_t \in [\eta_{\min},\eta_{\max}] \subset [0,\frac{2}{L_r})$. 
\begin{theorem}
\label{thm:c_d}
We consider LRs \eqref{eq:const_lr}, \eqref{eq:dim_lr}, \eqref{eq:cosan_lr}, and \eqref{eq:poly_dec_lr} and a constant BS $b_t = b>0$ under the assumptions of Lemma \ref{lem:undl_anal}. Then, we obtain
\begin{align*}
&\text{Diminishing LR \eqref{eq:dim_lr}}:\quad \min_{t\in\{0, \cdots, T-1\}}\mathbb{E}[\|\mathrm{grad}f(x_t)\|_{x_t}^2]\leq \frac{Q_1 + Q_2\sigma^2 b^{-1}\log T}{\sqrt{T}}=O\left(\frac{\log T}{\sqrt{T}}\right),\\
&\text{Otherwise \eqref{eq:const_lr}, \eqref{eq:cosan_lr}, \eqref{eq:poly_dec_lr}}:\quad \min_{t\in\{0, \cdots, T-1\}}\mathbb{E}[\|\mathrm{grad}f(x_t)\|_{x_t}^2]\leq \frac{\tilde{Q}_1}{T} + \frac{\tilde{Q}_2\sigma^2}{b}=O\left(\frac{1}{T}+\frac{1}{b}\right),
\end{align*}
where $Q_1, Q_2, \tilde{Q}_1$, and $\tilde{Q}_2$ are constants that do not depend on $T$. 
\end{theorem}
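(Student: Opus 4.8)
The plan is to specialize Lemma~\ref{lem:undl_anal} to each learning-rate schedule with $b_t = b$ constant, so that the bound reduces to
\[
\min_{t\in\{0,\cdots,T-1\}}\mathbb{E}[\|\mathrm{grad}f(x_t)\|_{x_t}^2]
\leq \frac{2(f(x_0)-f^\star)}{2-L_r\eta_{\max}}\cdot\frac{1}{\sum_{t=0}^{T-1}\eta_t}
+ \frac{L_r\sigma^2}{(2-L_r\eta_{\max})b}\cdot\frac{\sum_{t=0}^{T-1}\eta_t^2}{\sum_{t=0}^{T-1}\eta_t}.
\]
Everything then comes down to estimating the two scalar quantities $S_1 := \sum_{t=0}^{T-1}\eta_t$ from below and $R := \bigl(\sum_{t=0}^{T-1}\eta_t^2\bigr)\big/\bigl(\sum_{t=0}^{T-1}\eta_t\bigr)$ from above for each schedule.

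First I would handle the diminishing LR \eqref{eq:dim_lr}: here $\eta_t = \eta_{\max}/\sqrt{t+1}$, so $S_1 = \eta_{\max}\sum_{t=0}^{T-1}(t+1)^{-1/2} \geq \eta_{\max}\int_1^{T+1}x^{-1/2}\,dx = 2\eta_{\max}(\sqrt{T+1}-1) = \Omega(\sqrt{T})$, while $\sum_{t=0}^{T-1}\eta_t^2 = \eta_{\max}^2\sum_{t=0}^{T-1}(t+1)^{-1} \leq \eta_{\max}^2(1+\log T) = O(\log T)$. Substituting gives the first displayed bound with $Q_1, Q_2$ absorbing the constants $\tfrac{2(f(x_0)-f^\star)}{2-L_r\eta_{\max}}$, $\eta_{\max}$, $\tfrac{L_r}{2-L_r\eta_{\max}}$, etc. For the constant LR \eqref{eq:const_lr} it is immediate: $S_1 = \eta_{\max}T$ and $\sum\eta_t^2/\sum\eta_t = \eta_{\max}$, yielding $O(1/T) + O(1/b)$.

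The cosine annealing \eqref{eq:cosan_lr} and polynomial decay \eqref{eq:poly_dec_lr} schedules are the part requiring slightly more care, and I expect this to be the main (though still routine) obstacle. For both, $\eta_t \geq \eta_{\min}$ pointwise, but using only that would give $S_1 \geq \eta_{\min}T$, which is fine for the $O(1/T)$ term provided $\eta_{\min}>0$; if one wants the argument to also cover $\eta_{\min}=0$, I would instead lower-bound $S_1$ by noting that for the first half of the iterations ($t \leq T/2$) the cosine (resp.\ $(1-t/T)^p$) factor is bounded below by an absolute constant, so $S_1 = \Omega(\eta_{\max}T)$ regardless. The numerator satisfies $\sum_{t=0}^{T-1}\eta_t^2 \leq \eta_{\max}^2 T$ trivially, so $R \leq \eta_{\max}^2 T / (cT) = O(1)$, and we again get $O(1/T)+O(1/b)$. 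Collecting the schedule-independent constants into $\tilde{Q}_1,\tilde{Q}_2$ completes the proof. The only subtlety is keeping track of whether $\eta_{\min}$ appears in a denominator anywhere; since it does not (it only ever helps as a lower bound on $\eta_t$), the constants remain well-defined even in degenerate cases.
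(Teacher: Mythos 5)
Your proposal is correct and follows essentially the same route as the paper: specialize Lemma \ref{lem:undl_anal} with $b_t=b$, then lower-bound $\sum_{t}\eta_t$ and upper-bound $\sum_t \eta_t^2$ for each schedule. The only difference is in the elementary estimates for \eqref{eq:cosan_lr} and \eqref{eq:poly_dec_lr} — the paper uses the exact integral identity $\int_0^T\cos(t\pi/T)\,dt=0$ and a Riemann-sum argument to get the sharper constants $\frac{\eta_{\max}+\eta_{\min}}{2}T$ and $\frac{\eta_{\max}+p\eta_{\min}}{p+1}T$, whereas your ``first half of the iterations'' bound is cruder but equally valid for the stated $O(\tfrac{1}{T}+\tfrac{1}{b})$ conclusion.
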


\subsection{Case (ii): Increasing BS; Constant or Decaying LR}
\label{i_d}
In this case, we consider a BS $(b_t)_t$ and an LR $(\eta_t)_t$ such that $b_t \leq b_{t+1}$ and $\eta_{t+1}\leq\eta_t$. We use the same examples as in \textbf{Case (i)} [$\eqref{eq:const_lr},\cdots, \eqref{eq:poly_dec_lr}$], again with constant or decaying LRs, and a BS that increases every $K\in\mathbb{N}$ steps, where $\mathbb{N}$ is the set of positive integers. We let $T$ denote the total number of iterations and define $M\coloneq \lfloor\frac{T}{K}\rfloor$ to represent the number of times the BS is increased. The BS, which takes the form of $\gamma^m b_0$ or $(am + b_0)^p$ every $K$ steps, is an example of an increasing BS. We can formalize the resulting BSs: for every $m\in\{0, \ldots, M-1\}, t \in S_m \coloneq[mK, (m+1)K)\cap\mathbb{N}_0$, 
\begin{align}
&\text{Exponential Growth BS: }b_t \coloneq b_0 \gamma^{m\left\lceil\frac{t}{(m+1)K}\right\rceil},\label{eq:exp_g_bs}\\
&\text{Polynomial Growth BS: }b_t \coloneq \left(am\left\lceil\frac{t}{(m+1)K}\right\rceil + b_0\right)^c,\label{eq:pol_g_bs}
\end{align}
where $\gamma, c > 1, a>0$, and $\mathbb{N}_0\coloneq \mathbb{N}\cup\{0\}$.
\begin{theorem}
\label{thm:i_d}
We consider BSs \eqref{eq:exp_g_bs} and \eqref{eq:pol_g_bs} together with LRs \eqref{eq:const_lr}, \eqref{eq:dim_lr}, \eqref{eq:cosan_lr} and \eqref{eq:poly_dec_lr} under the assumptions stated in Lemma \ref{lem:undl_anal}. Then, the following results hold for both constant and increasing BSs.
\begin{align*}
&\text{Diminishing LR \eqref{eq:dim_lr}}:\quad\min_{t\in\{0, \cdots, T-1\}}\mathbb{E}[\|\mathrm{grad}f(x_t)\|_{x_t}^2]\leq \frac{Q_1 + Q_2\sigma^2 b_0^{-1}}{\sqrt{T}}=O\left(\frac{1}{\sqrt{T}}\right),\\
&\text{Otherwise \eqref{eq:const_lr}, \eqref{eq:cosan_lr}, \eqref{eq:poly_dec_lr}}:\quad \min_{t\in\{0, \cdots, T-1\}}\mathbb{E}[\|\mathrm{grad}f(x_t)\|_{x_t}^2]\leq \frac{\tilde{Q}_1 + \tilde{Q}_2\sigma^2b_0^{-1}}{T}=O\left(\frac{1}{T}\right),
\end{align*}
where $Q_1, Q_2, \tilde{Q}_1$, and $\tilde{Q}_2$ are constants that do not depend on $T$. 
\end{theorem}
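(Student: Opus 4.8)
The plan is to feed each learning-rate schedule into the master inequality of Lemma~\ref{lem:undl_anal}, so that the proof collapses to two scalar estimates: a lower bound on $\sum_{t=0}^{T-1}\eta_t$ and an upper bound on $\sum_{t=0}^{T-1}\eta_t^2 b_t^{-1}$. Once these are in hand, both displayed inequalities follow by collecting all $T$-independent factors into $Q_1,Q_2$ (for the diminishing LR) and $\tilde Q_1,\tilde Q_2$ (for the remaining three schedules).

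\emph{Lower bounds on $\sum_{t=0}^{T-1}\eta_t$.} For the constant LR \eqref{eq:const_lr} this sum is exactly $T\eta_{\max}$. For the diminishing LR \eqref{eq:dim_lr}, a sum--integral comparison gives $\sum_{t=0}^{T-1}(t+1)^{-1/2}\ge\int_1^{T+1}x^{-1/2}\,dx=2(\sqrt{T+1}-1)\ge c_1\sqrt T$. For cosine annealing \eqref{eq:cosan_lr} and polynomial decay \eqref{eq:poly_dec_lr} I would not rely on $\eta_{\min}$ (which may be $0$) but instead keep only the first half of the indices: for $t\le T/2$ one has $1+\cos(t\pi/T)\ge 1$ and $(1-t/T)^p\ge 2^{-p}$, so in both cases $\sum_{t=0}^{T-1}\eta_t\ge c_1 T$ with $c_1$ depending only on $\eta_{\max}-\eta_{\min}$ and $p$. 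Consequently the first term of Lemma~\ref{lem:undl_anal} is $O(T^{-1/2})$ for \eqref{eq:dim_lr} and $O(T^{-1})$ otherwise.

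\emph{The core estimate: $\sum_{t=0}^{T-1}\eta_t^2 b_t^{-1}$.} This is where the increasing batch size pays off. Since $\eta_t\le\eta_{\max}$, we have $\sum_{t=0}^{T-1}\eta_t^2 b_t^{-1}\le\eta_{\max}^2\sum_{t=0}^{T-1}b_t^{-1}$ (for the diminishing LR, using $\eta_t^2\le\eta_{\max}^2$ gives the same bound), so everything reduces to the claim that $\sum_{t=0}^{T-1}b_t^{-1}\le\tilde C$ for a constant $\tilde C$ independent of $T$, valid for the batch sizes \eqref{eq:exp_g_bs} and \eqref{eq:pol_g_bs}. To prove this I would split $\{0,\dots,T-1\}$ into the length-$K$ blocks $S_0,\dots,S_{M-1}$ plus the tail $[MK,T)\cap\mathbb{N}_0$ of at most $K$ indices, where $M=\lfloor T/K\rfloor$; on each $S_m$ the batch size equals $b_0\gamma^m$ for \eqref{eq:exp_g_bs} and $(am+b_0)^c$ for \eqref{eq:pol_g_bs}, so summing block by block,
\[
\sum_{t=0}^{T-1}b_t^{-1}\le K\sum_{m=0}^{M}b_0^{-1}\gamma^{-m}\le\frac{K}{b_0}\cdot\frac{\gamma}{\gamma-1}
\qquad\text{or}\qquad
\sum_{t=0}^{T-1}b_t^{-1}\le K\sum_{m=0}^{M}(am+b_0)^{-c}\le\frac{K}{b_0^{c}}+\frac{K}{a(c-1)b_0^{c-1}},
\]
the geometric series converging because $\gamma>1$ and the second series converging because $c>1$ (bound its tail by $\int_0^\infty(ax+b_0)^{-c}\,dx$). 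Dividing $\eta_{\max}^2\tilde C$ by the lower bound on $\sum_{t=0}^{T-1}\eta_t$ makes the second term of Lemma~\ref{lem:undl_anal} equal to $O(\sigma^2 b_0^{-1}T^{-1/2})$ for \eqref{eq:dim_lr} and $O(\sigma^2 b_0^{-1}T^{-1})$ otherwise.

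Adding the two estimates in Lemma~\ref{lem:undl_anal} and absorbing $f(x_0)-f^\star$, $L_r$, $\eta_{\max}$, $K$, $\gamma$ (or $a,c$), and the comparison constant $c_1$ into $Q_1,Q_2,\tilde Q_1,\tilde Q_2$ yields the stated bounds. I expect the main obstacle to be the key claim $\sum_{t=0}^{T-1}b_t^{-1}\le\tilde C$: one must check that the ceiling expressions in \eqref{eq:exp_g_bs}--\eqref{eq:pol_g_bs} really do make $b_t$ constant on every block $S_m$ (including $m=0$ and the partial tail block when $K\nmid T$), and the convergence of $\sum_m(am+b_0)^{-c}$ is precisely where $c>1$ is indispensable---without it a $\log T$ factor reappears, exactly as in Theorem~\ref{thm:c_d}, which is the essential contrast between the two theorems. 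Everything else is routine sum--integral bookkeeping.
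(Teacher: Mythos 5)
Your proposal is correct and follows essentially the same route as the paper's proof: both feed each schedule into Lemma \ref{lem:undl_anal}, reuse the lower bounds on $\sum_{t}\eta_t$ of order $\sqrt{T}$ or $T$, and reduce the variance term to the $T$-independent bound $\sum_{t=0}^{T-1}b_t^{-1}\leq \tilde{C}$ via blockwise summation over $S_m$ plus the tail, exactly as in \eqref{eq:esti_exp_bs} and \eqref{eq:esti_pol_bs}. The only cosmetic differences are that the paper bounds the polynomial-growth series by $K\zeta(c)/\underline{a}^{\lfloor c\rfloor}$ rather than by an integral tail, and lower-bounds the cosine-annealing and polynomial-decay LR sums by a sum--integral comparison over all of $[0,T]$ rather than by restricting to $t\leq T/2$.
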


\subsection{Case (iii): Increasing BS; Warm-up Decaying LR}
\label{i_w}
In this case, we consider a BS $(b_t)_t$ and an LR $(\eta_t)_t$ such that $b_t \leq b_{t+1}$ and $\eta_t \leq \eta_{t+1}$ for $(t\leq T_w - 1)$ and $\eta_{t+1} \leq \eta_t$ for $(t \geq T_w)$. As examples of an increasing warm-up LR, we consider an exponential growth LR and a polynomial growth LR, both increasing every $K'$ steps. We set $K$, as defined in \textbf{Case (ii)}, to be $lK'$, where $l\in\mathbb{N}$ (thus $K>K'$). Namely, we consider a setting in which the BS is increased every $l$ times the LR is increased. To formulate examples of an increasing LR, we define $M'\coloneq\lfloor\frac{T}{K'}\rfloor$ and formalize the LR: for every $m\in\{0,\ldots,M'-1\}, t\in S'_m\coloneq[mK',(m+1)K')\cap\mathbb{Z}$, \ 
\small
\begin{minipage}{0.48\linewidth}
\begin{equation}\label{eq:exp_g_lr}
\text{Exponential Growth LR:}\ \eta_t \coloneq \eta_0 \delta^{m\left\lceil\frac{t}{(m+1)K'}\right\rceil}, 
\end{equation}
\end{minipage}\hfill
\begin{minipage}{0.48\linewidth}
\begin{equation}\label{eq:pol_g_lr}
\text{Polynomial Growth LR:}\ \eta_t \coloneq \left(sm\left\lceil\frac{t}{(m+1)K'}\right\rceil + \eta_0\right)^{q},
\end{equation}
\end{minipage}
\normalsize
where $s>0$ and $q>1$. Furthermore, we choose $\gamma, \delta>1$, and $l\in\mathbb{N}$ such that $\delta^{2l}<\gamma$ holds. Additionally, we set $l_w\in\mathbb{N}$ such that $T \geq T_w\coloneq l_w K' \geq lK'$. The examples of an increasing BS used in this case are an exponential growth BS \eqref{eq:exp_g_bs} and a polynomial growth BS \eqref{eq:pol_g_bs}. As examples of a warm-up LR, we use LRs that are increased using the exponential growth LR \eqref{eq:exp_g_lr} and the polynomial growth LR \eqref{eq:pol_g_lr} corresponding respectively to the exponential and polynomial growth BSs for the first $T_w$ steps and then decreased using the constant LR \eqref{eq:const_lr}, the diminishing LR \eqref{eq:dim_lr}, the cosine annealing LR \eqref{eq:cosan_lr}, or the polynomial decay LR \eqref{eq:poly_dec_lr} for the remaining $T-T_w$ steps. Note that $\eta_{\max} \coloneq \eta_{T_w -1}$. A more detailed version of Theorem \ref{thm:i_w} is provided in Appendix \ref{prf:i_w}.
\begin{theorem}
\label{thm:i_w}
We consider BSs \eqref{eq:exp_g_bs} and \eqref{eq:pol_g_bs} together with warm-up LRs \eqref{eq:exp_g_lr} and \eqref{eq:pol_g_lr} with decay parts given by \eqref{eq:const_lr}, \eqref{eq:dim_lr}, \eqref{eq:cosan_lr}, or \eqref{eq:poly_dec_lr} under the assumptions stated in Lemma \ref{lem:undl_anal}. Then, the following results hold for both constant and increasing BSs: (I) Decay part [Diminishing \eqref{eq:dim_lr}]: $\|G_T\|^2=O((\sqrt{T+1}-\sqrt{T_w +1})^{-1})$; (II) Decay part [Otherwise \eqref{eq:const_lr}, \eqref{eq:cosan_lr}, \eqref{eq:poly_dec_lr}]: $\|G_T\|^2=O((T-T_w)^{-1})$.
\end{theorem}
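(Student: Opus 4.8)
The plan is to reduce everything to Lemma~\ref{lem:undl_anal}: it is enough to (a) bound $\sum_{t=0}^{T-1}\eta_t$ from below and (b) bound $\sum_{t=0}^{T-1}\eta_t^2 b_t^{-1}$ from above by a constant independent of $T$; dividing then produces the claimed rates. First I would check that the hypotheses of the lemma hold: the warm-up profile increases to its peak $\eta_{T_w-1}=\eta_{\max}$, and each decay profile \eqref{eq:const_lr}--\eqref{eq:poly_dec_lr} never exceeds its own peak $\eta_{\max}$, so $\eta_t\in[0,\eta_{\max}]\subset[0,2/L_r)$ throughout, and Lemma~\ref{lem:undl_anal} applies.

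For (a), since all $\eta_t\ge 0$ I would simply drop the warm-up block, $\sum_{t=0}^{T-1}\eta_t\ge\sum_{t=T_w}^{T-1}\eta_t$, and then estimate $\sum_{t=T_w}^{T-1}\eta_t$ exactly as in the proofs of Theorems~\ref{thm:c_d} and~\ref{thm:i_d} but with the summation index shifted to start at $T_w$. Comparison with $\int_{T_w}^{T}(x+1)^{-1/2}\,dx$ gives $\sum_{t=T_w}^{T-1}\eta_{\max}(t+1)^{-1/2}\ge 2\eta_{\max}\bigl(\sqrt{T+1}-\sqrt{T_w+1}\bigr)$ for the diminishing decay \eqref{eq:dim_lr}; the constant decay \eqref{eq:const_lr} gives exactly $\eta_{\max}(T-T_w)$; and for cosine annealing \eqref{eq:cosan_lr} and polynomial decay \eqref{eq:poly_dec_lr} a routine Riemann-sum estimate of the corresponding profile over $[T_w/T,1]$ gives $\sum_{t=T_w}^{T-1}\eta_t\ge c_1(T-T_w)$ for a positive constant $c_1$ depending only on $\eta_{\min},\eta_{\max},p$. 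These are the two denominators $\sqrt{T+1}-\sqrt{T_w+1}$ and $T-T_w$ appearing in (I) and (II).

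For (b), I would split at $T_w$. On the decay block $\eta_t\le\eta_{\max}$, so $\sum_{t=T_w}^{T-1}\eta_t^2 b_t^{-1}\le\eta_{\max}^2\sum_{t=0}^{\infty}b_t^{-1}$, and the latter series converges to a $T$-independent constant because $b_t$ follows either the geometric growth \eqref{eq:exp_g_bs} or the polynomial growth \eqref{eq:pol_g_bs} with exponent $c>1$ --- the same summation already carried out in Theorem~\ref{thm:i_d} --- while for a constant BS it is bounded using that $T_w=l_wK'$ is fixed. The warm-up block $\sum_{t=0}^{T_w-1}\eta_t^2 b_t^{-1}$ is the only genuinely new estimate: on the $m$-th learning-rate block the exponential warm-up/BS pair gives $\eta_t^2 b_t^{-1}=(\eta_0^2/b_0)\,\delta^{2m}\gamma^{-\lfloor m/l\rfloor}\le(\eta_0^2\gamma/b_0)\,(\delta^{2l}/\gamma)^{m/l}$, so summing the $K'$ terms in each block and then over $m$ yields a geometric series with ratio $\delta^{2l}/\gamma<1$, bounded by a constant not depending on $l_w$ (hence not on $T$); the polynomial warm-up/BS pair is controlled by the analogous exponent inequality recorded in Appendix~\ref{prf:i_w}. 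Summing the two blocks gives $\sum_{t=0}^{T-1}\eta_t^2 b_t^{-1}\le C$ with $C$ independent of $T$.

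Feeding (a) and (b) into Lemma~\ref{lem:undl_anal} bounds $\|G_T\|^2$ by a $T$-independent constant divided by $\sum_{t=T_w}^{T-1}\eta_t$, and the two lower bounds from (a) then give (I) and (II) respectively. The main obstacle is the warm-up piece of (b): because both $\eta_t$ and $b_t$ grow there, a naive bound on $\sum_{t<T_w}\eta_t^2 b_t^{-1}$ would blow up exponentially, and the point of the hypothesis $\delta^{2l}<\gamma$ (and its polynomial analogue) is exactly to force $\eta_t^2/b_t$ to contract from block to block so that this sum behaves like a convergent geometric series. Once that is in hand, the rest is a reindexed replay of the estimates already established for Theorems~\ref{thm:c_d} and~\ref{thm:i_d}.
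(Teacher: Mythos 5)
Your argument is correct, but it takes a genuinely different route from the paper's. The paper's proof (Theorem \ref{thm:i_w_detailed} in Appendix \ref{prf:i_w}) never touches the warm-up block at all: it telescopes the descent lemma only over $t\in\{T_w,\dots,T-1\}$ and bounds $\min_{t\in\{T_w,\dots,T-1\}}\mathbb{E}[\|\mathrm{grad}f(x_t)\|_{x_t}^2]$ by $\bigl(2(f(x_0)-f^{\star})+L_r\sigma^2\sum_{t=T_w}^{T-1}\eta_t^2 b_t^{-1}\bigr)\big/\bigl((2-L_r\eta_{\max})\sum_{t=T_w}^{T-1}\eta_t\bigr)$, which dominates $\|G_T\|^2=\min_{t\in\{0,\dots,T-1\}}\mathbb{E}[\cdot]$; the variance sum over the decay phase is then controlled exactly as in Theorem \ref{thm:i_d} (geometric series or $\zeta(c)$), and your lower bounds on $\sum_{t=T_w}^{T-1}\eta_t$ match the paper's. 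You instead apply Lemma \ref{lem:undl_anal} over the full horizon, which forces you to bound the warm-up contribution $\sum_{t<T_w}\eta_t^2 b_t^{-1}$; your observation that $\eta_t^2/b_t\le(\eta_0^2\gamma/b_0)(\delta^{2l}/\gamma)^{m/l}$ contracts blockwise precisely because $\delta^{2l}<\gamma$ is a nice bonus --- it is the only argument in either proof that actually uses that hypothesis, which the paper's restricted-window proof silently bypasses. The trade-off: the paper's route is shorter and needs no condition tying the warm-up LR growth to the BS growth; yours explains why $\delta^{2l}<\gamma$ is assumed, at the cost of one extra estimate. One soft spot: for the polynomial warm-up LR \eqref{eq:pol_g_lr} paired with the polynomial growth BS \eqref{eq:pol_g_bs}, the ``analogous exponent inequality'' you invoke is not actually recorded anywhere in the paper (no condition relating $q$ to $c$ is assumed), so your blockwise contraction does not go through verbatim there; it is rescued only because $T_w=l_wK'$ is necessarily bounded (otherwise $\eta_{\max}=\eta_{T_w-1}$ would exceed $2/L_r$), making the warm-up sum a finite $T$-independent constant regardless --- or, more cleanly, by adopting the paper's device of summing only from $T_w$ onward.
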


\subsection{Case (iv): Constant BS; Warm-up Decaying LR}
\label{c_w}
We consider a BS $(b_t)_t$ and an LR $(\eta_t)_t$ such that $b_t = b$ and $\eta_t \leq \eta_{t+1}$ for $(t\leq T_w - 1)$ and $\eta_{t+1} \leq \eta_t$ for $(t \geq T_w)$. As examples of a warm-up LR, we use the exponential growth LR \eqref{eq:exp_g_lr} and the polynomial growth LR \eqref{eq:pol_g_lr} for the first $T_w$ steps and then the constant LR \eqref{eq:const_lr}, the diminishing LR \eqref{eq:dim_lr}, the cosine annealing LR \eqref{eq:cosan_lr}, or the polynomial decay LR \eqref{eq:poly_dec_lr} for the remaining $T-T_w$ steps. The other conditions are the same as those in \textbf{Case (iii)}. A more detailed version of Theorem \ref{thm:c_w} is provided in Appendix \ref{prf:c_w}.
\begin{theorem}\label{thm:c_w}
We consider a constant BS $b_t=b>0$ and warm-up LRs \eqref{eq:exp_g_lr} and \eqref{eq:pol_g_lr} with decay parts given by \eqref{eq:const_lr}, \eqref{eq:dim_lr}, \eqref{eq:cosan_lr}, or \eqref{eq:poly_dec_lr} under the assumptions stated in Lemma \ref{lem:undl_anal}. Then, we obtain the following results: (I) Decay part [Diminishing \eqref{eq:dim_lr}]: $\|G_T\|^2=O(\log\frac{T}{T_w}(\sqrt{T+1}-\sqrt{T_w +1})^{-1})$, (II) Decay part [Otherwise \eqref{eq:const_lr}, \eqref{eq:cosan_lr}, \eqref{eq:poly_dec_lr}]: $\|G_T\|^2=O((T-T_w)^{-1}+b^{-1})$.
\end{theorem}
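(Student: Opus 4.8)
The plan is to reduce Theorem~\ref{thm:c_w} to two scalar estimates by specializing Lemma~\ref{lem:undl_anal} to the constant batch size $b_t\equiv b$ and the warm-up schedule. With $b_t\equiv b$ we have $\sum_{t=0}^{T-1}\eta_t^2 b_t^{-1}=b^{-1}B_T$, so Lemma~\ref{lem:undl_anal} gives
\[
\|G_T\|^2\le\frac{2(f(x_0)-f^{\star})}{2-L_r\eta_{\max}}\cdot\frac{1}{A_T}+\frac{L_r\sigma^2}{(2-L_r\eta_{\max})\,b}\cdot\frac{B_T}{A_T},\qquad A_T\coloneq\textstyle\sum_{t=0}^{T-1}\eta_t,\quad B_T\coloneq\textstyle\sum_{t=0}^{T-1}\eta_t^2 .
\]
It therefore suffices to bound $A_T$ from below and $B_T/A_T$ from above. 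In all the cases considered, the warm-up profiles are bounded above by their terminal value $\eta_{\max}=\eta_{T_w-1}$ and each decay profile is bounded above by $\eta_{\max}$, so $\eta_t\le\eta_{\max}<2/L_r$ and Lemma~\ref{lem:undl_anal} indeed applies. Throughout I treat $T_w$ and the warm-up parameters ($K',l_w,\delta,s,q,\eta_0$) as fixed, so all asymptotics are in $T$.

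Next I would dispose of the warm-up block $t\in\{0,\dots,T_w-1\}$ by showing it contributes only a $T$-independent constant to $B_T$ and a nonnegative amount to $A_T$. For the exponential growth LR~\eqref{eq:exp_g_lr}, summing the geometric progression over the $l_w$ sub-blocks of length $K'$ bounds $\sum_{t=0}^{T_w-1}\eta_t^2$ by $\tfrac{\delta^2}{\delta^2-1}K'\eta_{\max}^2$; for the polynomial growth LR~\eqref{eq:pol_g_lr}, an elementary power-sum estimate gives a bound depending only on $s,q,\eta_0,K',l_w$. Consequently the rates are governed entirely by the decay block $t\in\{T_w,\dots,T-1\}$, which I analyze exactly as in the proof of Theorem~\ref{thm:c_d} but with the summation index shifted to start at $T_w$.

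For the diminishing decay part~\eqref{eq:dim_lr}, integral comparison gives $\sum_{t=T_w}^{T-1}(t+1)^{-1/2}\ge 2(\sqrt{T+1}-\sqrt{T_w+1})$ and $\sum_{t=T_w}^{T-1}(t+1)^{-1}\le\log(T/T_w)$, so $A_T=\Omega(\sqrt{T+1}-\sqrt{T_w+1})$ and $B_T=O(\log(T/T_w))$; substituting into the displayed bound yields part~(I), the $\log$ factor coming precisely from the variance term $B_T/A_T$. For the constant~\eqref{eq:const_lr}, cosine annealing~\eqref{eq:cosan_lr}, and polynomial decay~\eqref{eq:poly_dec_lr} parts, the same elementary estimates used in Theorem~\ref{thm:c_d} give $A_T\ge c_0(T-T_w)$ for some $c_0=c_0(\eta_{\min},\eta_{\max},p)>0$ and $B_T\le \mathrm{const}+\eta_{\max}^2(T-T_w)$, hence $1/A_T=O((T-T_w)^{-1})$ and $B_T/A_T=O(1)$, giving part~(II).

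I expect the difficulties here to be bookkeeping rather than conceptual. The two points that need care are: (a) verifying that the warm-up sums $\sum_{t<T_w}\eta_t^2$ are bounded independently of $T$ for both growth profiles while keeping track of the normalization $\eta_{\max}=\eta_{T_w-1}$; and (b) the lower bound $A_T\ge c_0(T-T_w)$ in part~(II) when $\eta_{\min}=0$, where one cannot simply use $\eta_t\ge\eta_{\min}$ and must instead integrate the cosine or polynomial-decay profile over the decay block — this is where one uses that $T_w$ is small relative to $T$, so the profile stays bounded below on a constant fraction of that block. In contrast to the increasing-batch-size analysis of Theorem~\ref{thm:i_w}, no constraint relating the growth rates of the LR and the BS (such as $\delta^{2l}<\gamma$) is needed here, because with a constant batch size the variance contribution is simply $b^{-1}B_T/A_T$.
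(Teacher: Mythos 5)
Your proposal is correct and follows essentially the same route as the paper: both reduce to Lemma \ref{lem:undl_anal} and reuse the integral-comparison estimates from Theorem \ref{thm:c_d} on the decay block $\{T_w,\dots,T-1\}$, yielding $A_T=\Omega(\sqrt{T+1}-\sqrt{T_w+1})$ or $\Omega(T-T_w)$ and the matching bounds on $\sum_t\eta_t^2/b$. The only difference is bookkeeping: the paper restarts the telescoping at $t=T_w$ and bounds $\min_{t\in\{T_w,\dots,T-1\}}\mathbb{E}[\|\mathrm{grad}f(x_t)\|_{x_t}^2]$ (so the warm-up block drops out entirely), whereas you keep the full sums over $\{0,\dots,T-1\}$ and absorb the warm-up contribution to $B_T$ into a $T$-independent constant; both correctly imply the stated bound on $\|G_T\|^2$.
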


\section{Numerical Experiment}\label{sec:numl_expe}
We experimentally evaluated the performance of RSGD for the two types of BSs and various types of LRs introduced in Section \ref{secti:conv_anal}. The experiments were run on an iMac (Intel Core i5, 2017) running the macOS Ventura operating system (ver. 13.7.1). The algorithms were written in Python (3.12.7) using the NumPy (1.26.0) and Matplotlib (3.9.1) packages. The Python code is available at \url{https://github.com/iiduka-researches/RSGD_acml2025.git}. We set $p=2.0$ in \eqref{eq:poly_dec_lr} and $\eta_{\min}:=0$. In \textbf{Cases (i) and (ii)}, we used an initial LR $\eta_{\max}$ selected from $\{0.5, 0.1, 0.05, 0.01, 0.005\}$. In \textbf{Case (ii)}, we set $K=1000, \gamma=3.0$, and $a=c=2.0$. All the plots of the objective function values presented in this section are provided in Appendices \ref{subapdix:sfo_obj} and \ref{appdix:obj}. Those for \textbf{Cases (iii) and (iv)} are provided in Appendix \ref{sec:addi_exper}.

\subsection{Principal Component Analysis}\label{nume:pca}
\begin{figure}
\centering
\includegraphics[width=0.24\linewidth]{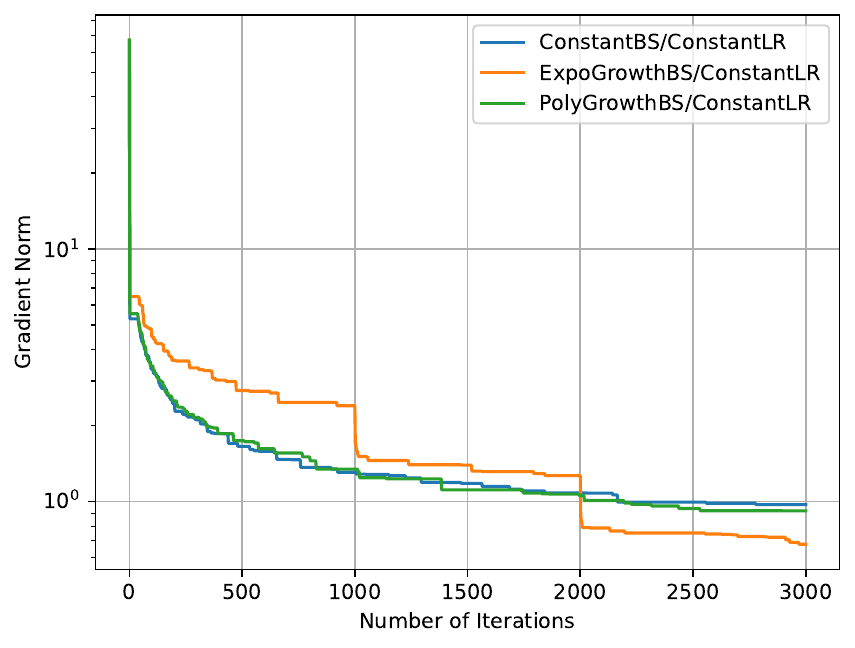}
\includegraphics[width=0.24\linewidth]{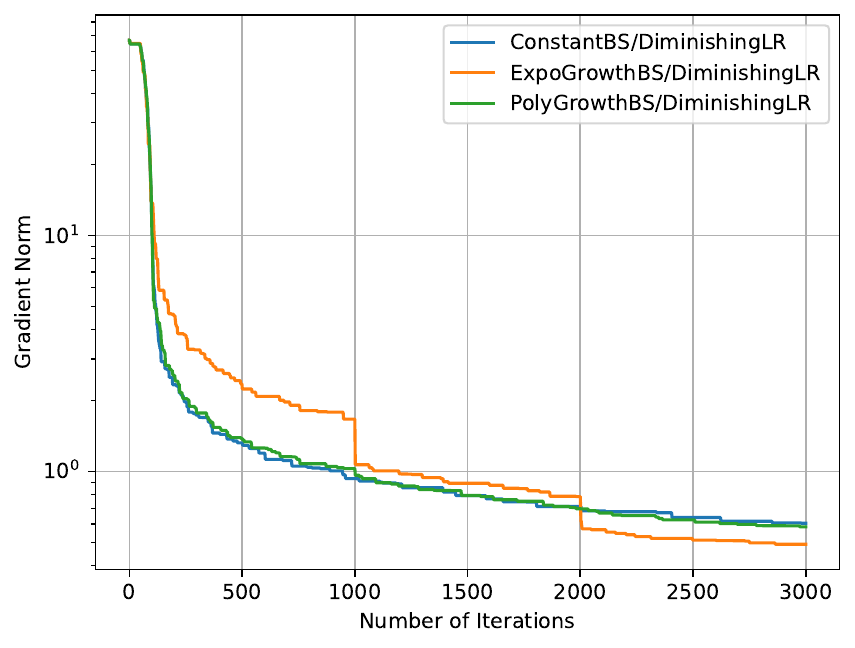}
\includegraphics[width=0.24\linewidth]{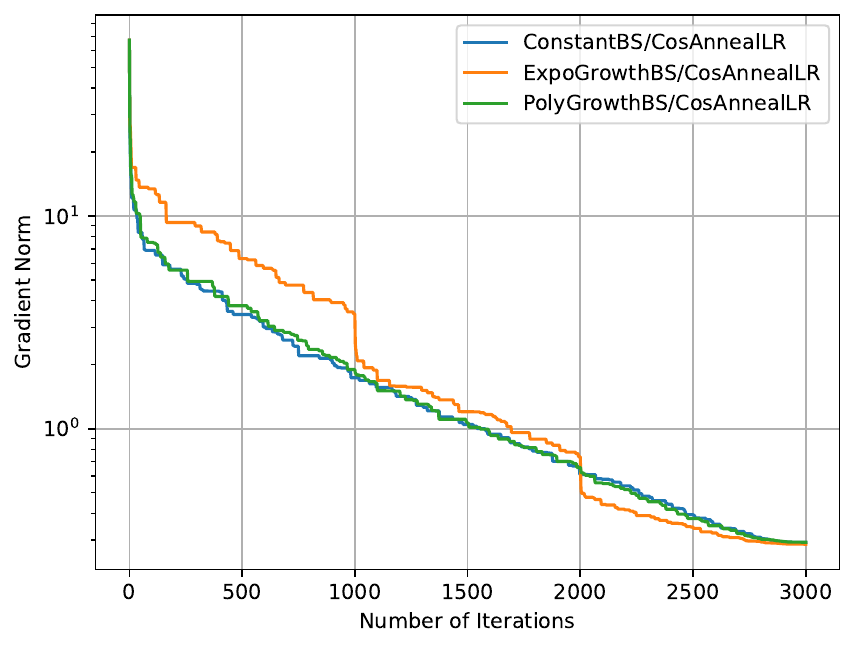}
\includegraphics[width=0.24\linewidth]{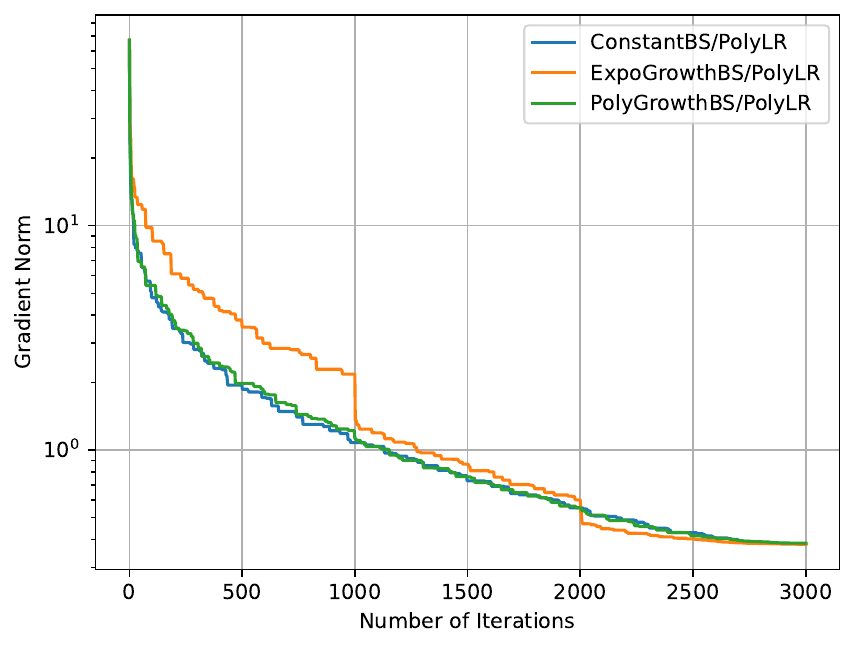}
\caption{Norm of the gradient of the objective function versus the number of iterations for LRs \eqref{eq:const_lr}, \eqref{eq:dim_lr}, \eqref{eq:cosan_lr}, and \eqref{eq:poly_dec_lr} on COIL100 dataset (PCA).}
\label{fig:COIL100_grad}
\end{figure}

\begin{figure}
\centering
\includegraphics[width=0.24\linewidth]{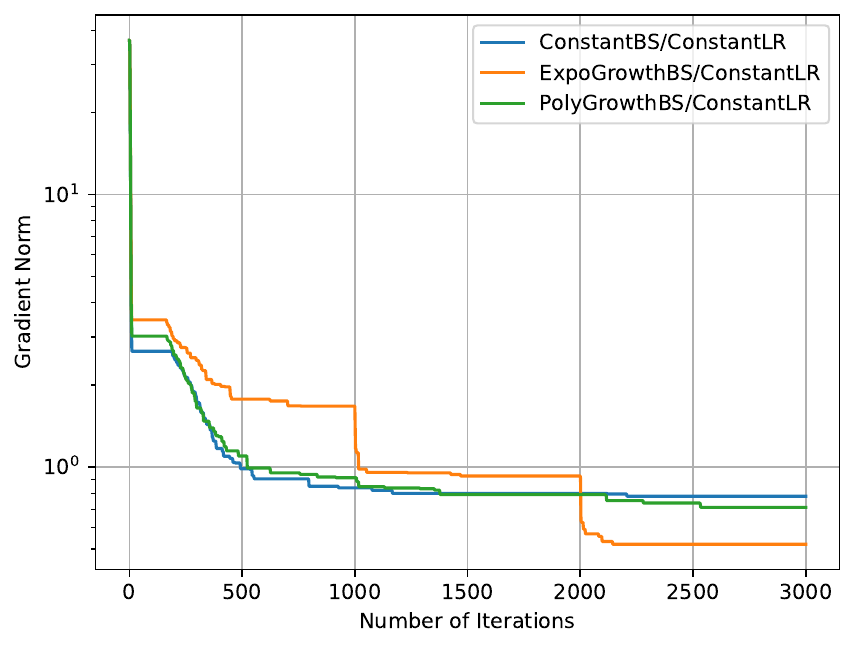}
\includegraphics[width=0.24\linewidth]{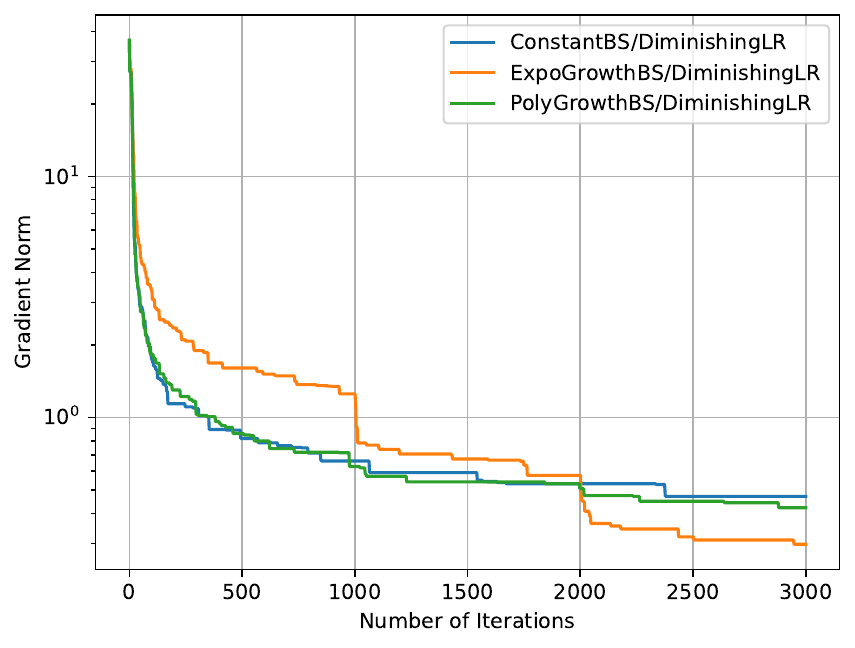}
\includegraphics[width=0.24\linewidth]{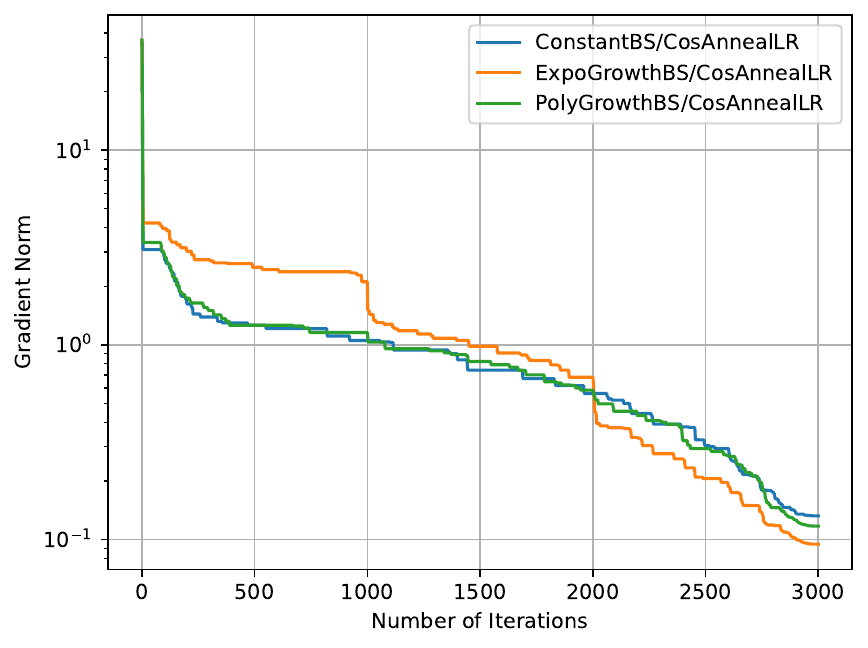}
\includegraphics[width=0.24\linewidth]{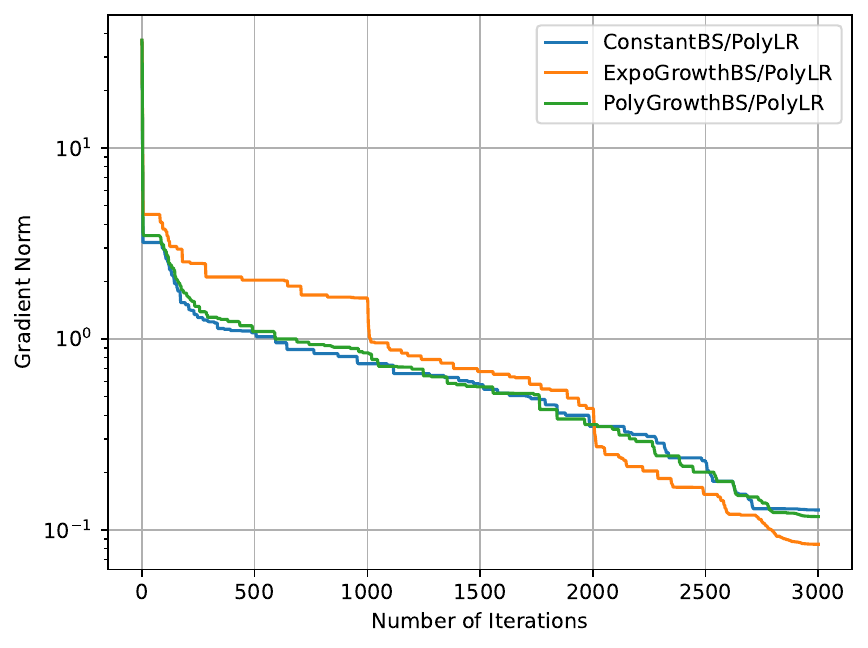}
\caption{Norm of the gradient of the objective function versus the number of iterations for LRs \eqref{eq:const_lr}, \eqref{eq:dim_lr}, \eqref{eq:cosan_lr}, and \eqref{eq:poly_dec_lr} on MNIST dataset (PCA).}
\label{fig:MNIST_grad}
\end{figure}
\noindent
We can formulate the PCA problem as an optimization problem on the Stiefel manifold \citep{Kasai2019Riem}; for a given dataset $\{x_j\}_{j=1,\ldots,N} \subset \mathbb{R}^{n}$ and $r$ $(\leq n)$, $\text{minimize}_{U\in\mathrm{St}(r,n)}f(U) \coloneq \frac{1}{N}\sum_{j=1}^N \|x_j - UU^\top x_j\|^2$, where $\mathrm{St}(r,n)\coloneq\{U\in\mathbb{R}^{r\times n}\mid U^{\top}U = I_n\}$. We set $r=10$ and used the COIL100 \citep{Nene1996Coil100} and MNIST \citep{LeCun1998Themnist} datasets. The Columbia Object Image Library (COIL100) dataset contains $7200$ color camera images of $100$ objects ($72$ poses per object) taken from different angles. We resized the images to $32\times 32$ pixels and transformed each one into a $1024$ $(=32^2)$ dimensional vector. Hence, we set $(N,n,r)=(7200,1024,10)$. The MNIST dataset contains $60,000$ $28\times28$-pixel grayscale images of handwritten digits $0$ to $9$. We transformed each image into a $784$ $(=28^2)$ dimensional vector and normalized each pixel to the range $[0,1]$. Hence, we set $(N,n,r)=(60000,784,10)$. Furthermore, we used a constant BS with $b_t\coloneq 2^{10}$, an exponential growth BS with an initial value $b_0\coloneq 3^5$, and a polynomial growth BS with an initial value $b_0\coloneq 30$.

\subsection{Low-rank Matrix Completion}
\label{nume:lrmc}
\begin{figure}
\centering
\includegraphics[width=0.24\linewidth]{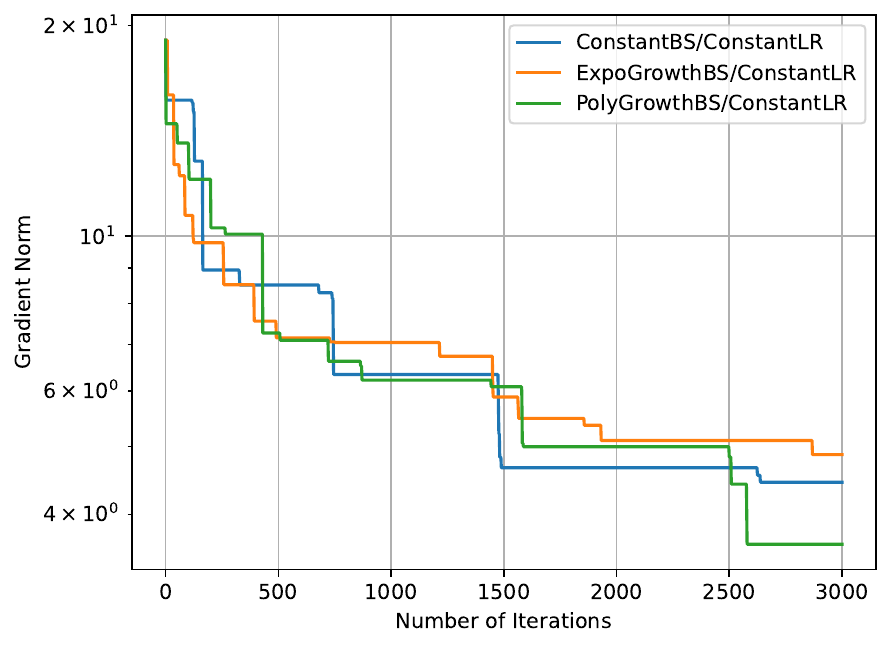}
\includegraphics[width=0.24\linewidth]{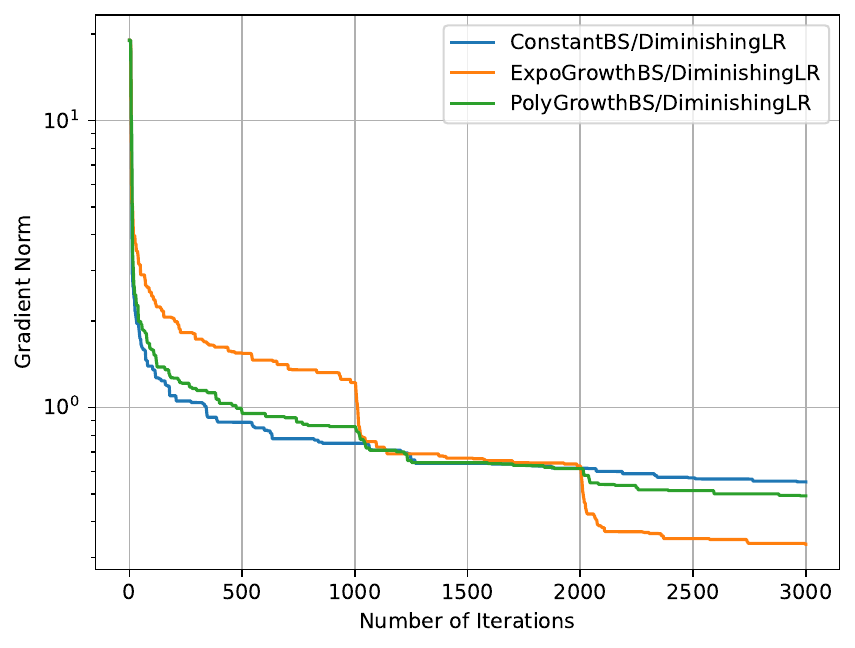}
\includegraphics[width=0.24\linewidth]{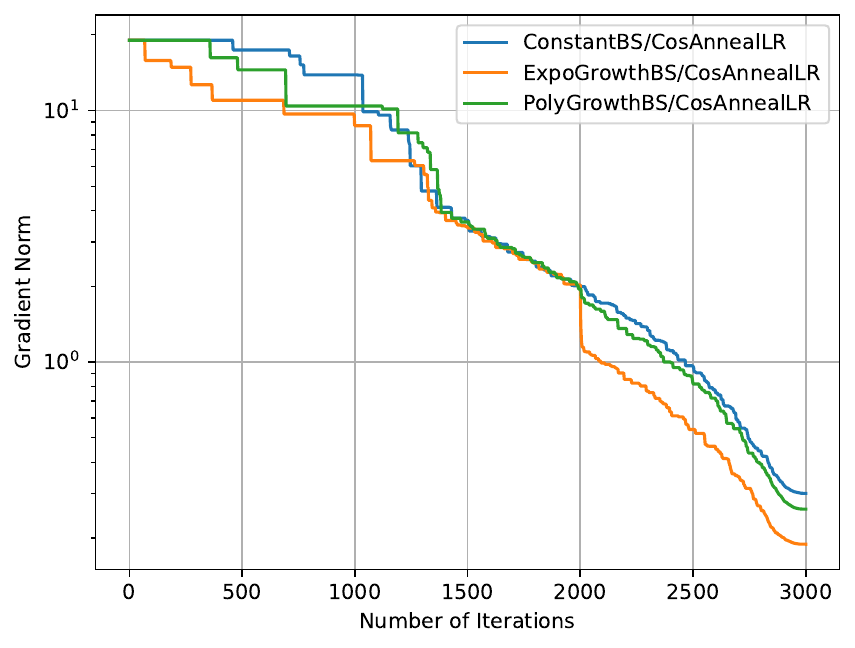}
\includegraphics[width=0.24\linewidth]{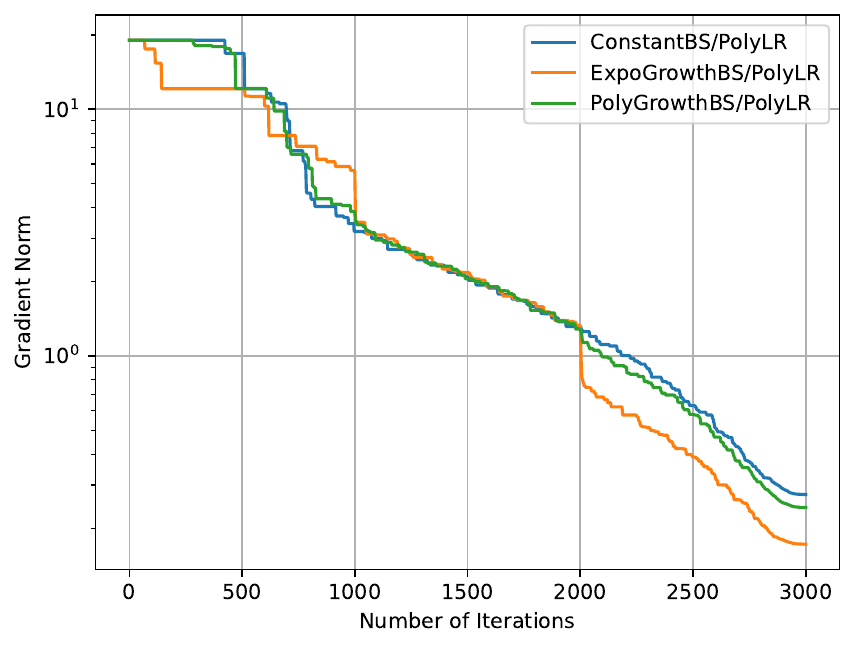}
\caption{Norm of the gradient of the objective function versus the number of iterations for LRs \eqref{eq:const_lr}, \eqref{eq:dim_lr}, \eqref{eq:cosan_lr}, and \eqref{eq:poly_dec_lr} on MovieLens-1M dataset (LRMC).}
\label{fig:ml-1m_grad}
\end{figure}

\begin{figure}
\centering
\includegraphics[width=0.24\linewidth]{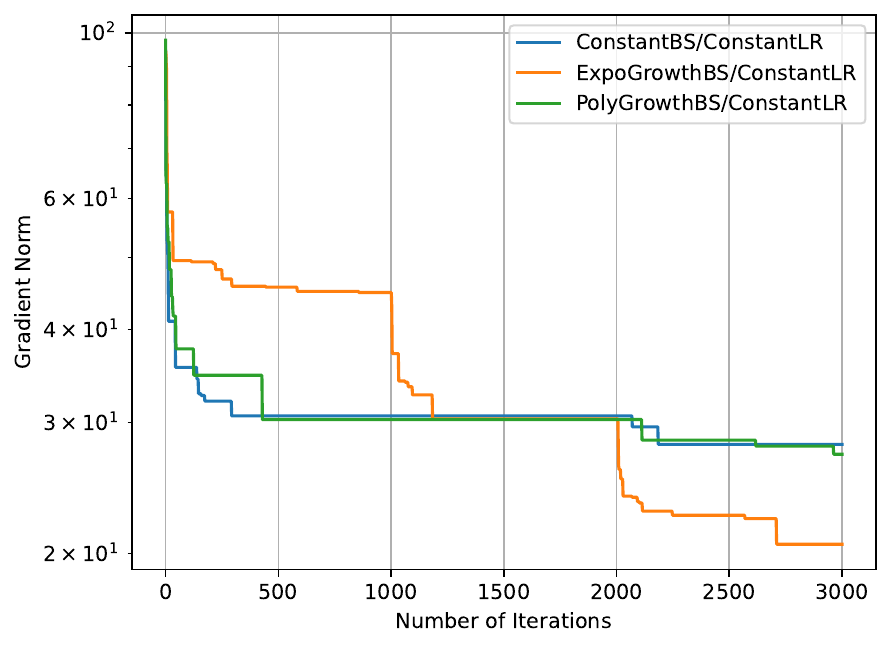}
\includegraphics[width=0.24\linewidth]{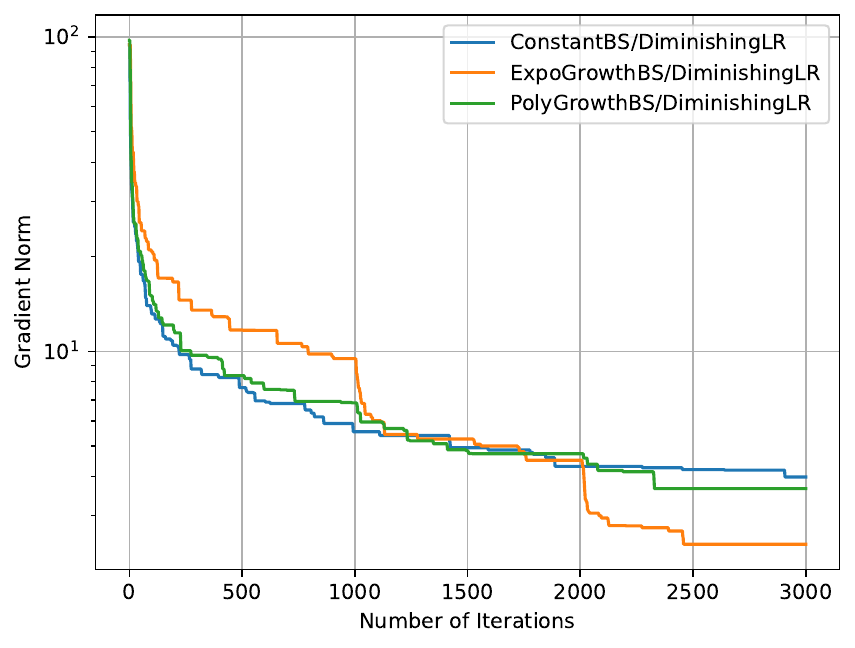}
\includegraphics[width=0.24\linewidth]{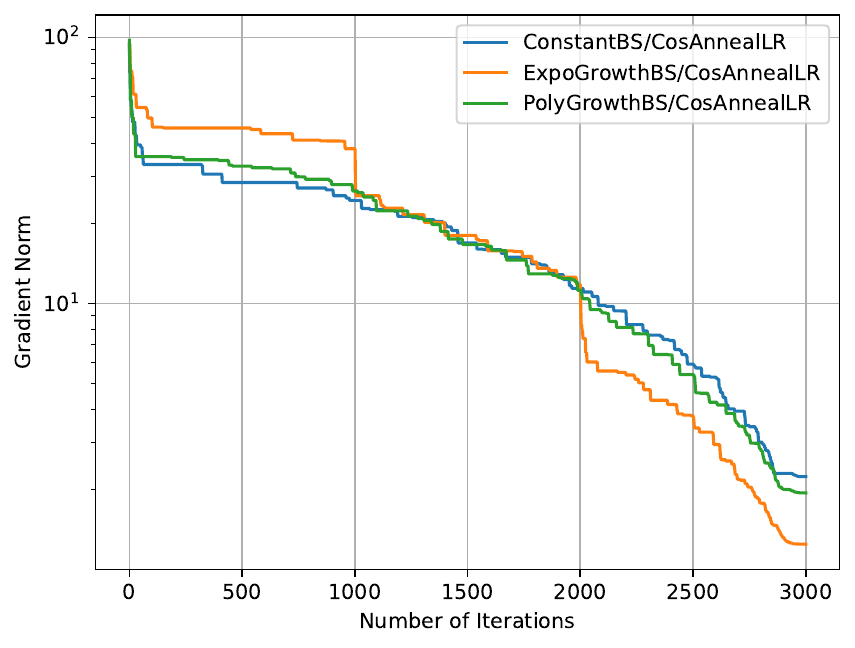}
\includegraphics[width=0.24\linewidth]{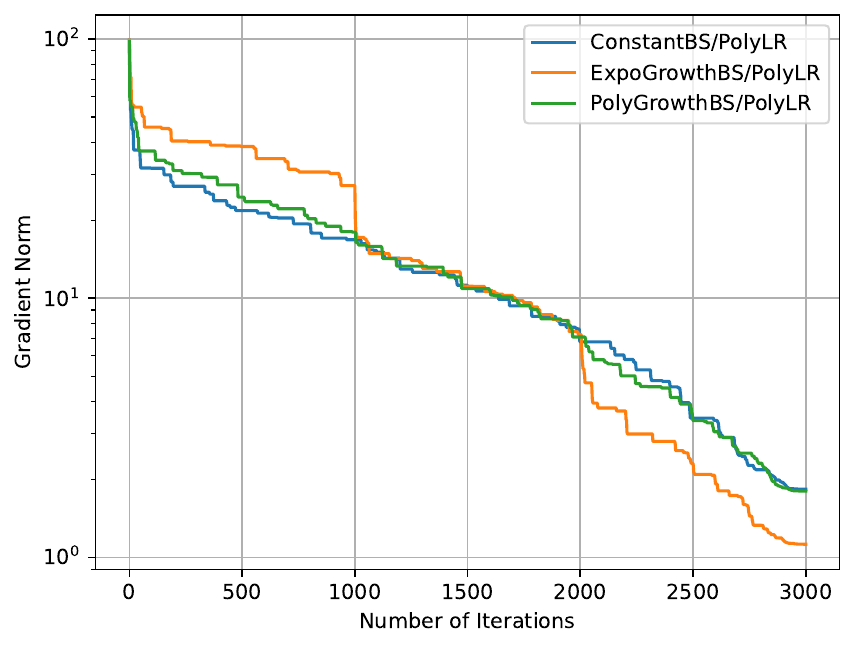}
\caption{Norm of the gradient of the objective function versus the number of iterations for LRs \eqref{eq:const_lr}, \eqref{eq:dim_lr}, \eqref{eq:cosan_lr}, and \eqref{eq:poly_dec_lr} on Jester dataset (LRMC).}
\label{fig:jester_grad}
\end{figure}
\noindent
The LRMC problem involves completing an incomplete matrix $Z=(z_1,\ldots,z_N)\in\mathbb{R}^{n\times N}$; $\Omega$ denotes a set of indices for which we know the entries in $Z$. For $a\in\mathbb{R}^n$, we define $P_{\Omega_i}(a)$ such that the $j$-th element is $a_j$ if $(i,j)\in\Omega$ and $0$ otherwise. For $U\in\mathbb{R}^{n\times r}, z\in\mathbb{R}^n$, $q_j(U,z):=\mathrm{argmin}_{a\in\mathbb{R}^r}\|P_{\Omega_j}(Ua-z)\|$. We can now formulate the LRMC problem as the following optimization problem on the Grassmann manifold \citep{Boumal2015Lowr}: $\text{minimize}_{U\in\mathrm{Gr}(r, n)}f(U) \coloneq \frac{1}{N}\sum_{j=1}^N \|P_{\Omega_j}(Uq_j(U,z_j) - z_j) \|^2$, where $\mathrm{Gr}(r, n)\coloneq \text{St}(r,n)/\text{O}(r)$. We set $r = 10$ and used the MovieLens-1M \citep{Maxwell2016Themov} and Jester datasets \citep{Goldberg2001Eigent}. The MovieLens-1M dataset contains $1,000,209$ ratings given by $6040$ users on $3952$ movies. Every rating lies in $[0,5]$. We normalized each rating to the range $[0,1]$. Hence, we set $(N,n,r)=(3952,6040,10)$. The Jester dataset contains ratings of $100$ jokes from $24,983$ users. Every rating is bounded by the range $[-10,10]$. Hence, we set $(N,n,r)=(24983,100,10)$. Furthermore, we used a constant BS $b_t\coloneq 2^8$, an exponential growth BS with an initial value of $b_0\coloneq 3^4$, and a polynomial growth BS with an initial value of $b_0\coloneq 14$.

The performances in terms of the gradient norm of the objective function versus the number of iterations for LRs \eqref{eq:const_lr}, \eqref{eq:dim_lr}, \eqref{eq:cosan_lr}, and \eqref{eq:poly_dec_lr} on the COIL100, MNIST, MovieLens-1M, and Jester datasets are plotted in Figures \ref{fig:COIL100_grad}, \ref{fig:MNIST_grad}, \ref{fig:ml-1m_grad}, and \ref{fig:jester_grad}, respectively. Achieving a small gradient norm was better with an increasing BS than with a constant BS. Among the increasing BSs, the exponential growth BS outperformed the polynomial BS. Because the magnitude of increase in the exponential BS is $O(\gamma^m)$ and that in the polynomial BS is $O(m)$, these numerical results indicate that a larger rate of increase in BS leads to better performance.

\subsection{Comparison of Computational Time between Constant BS and Increasing BS versus SFO Complexity}\label{sec:SFO}
\begin{figure}
\centering
\includegraphics[width=0.24\linewidth]{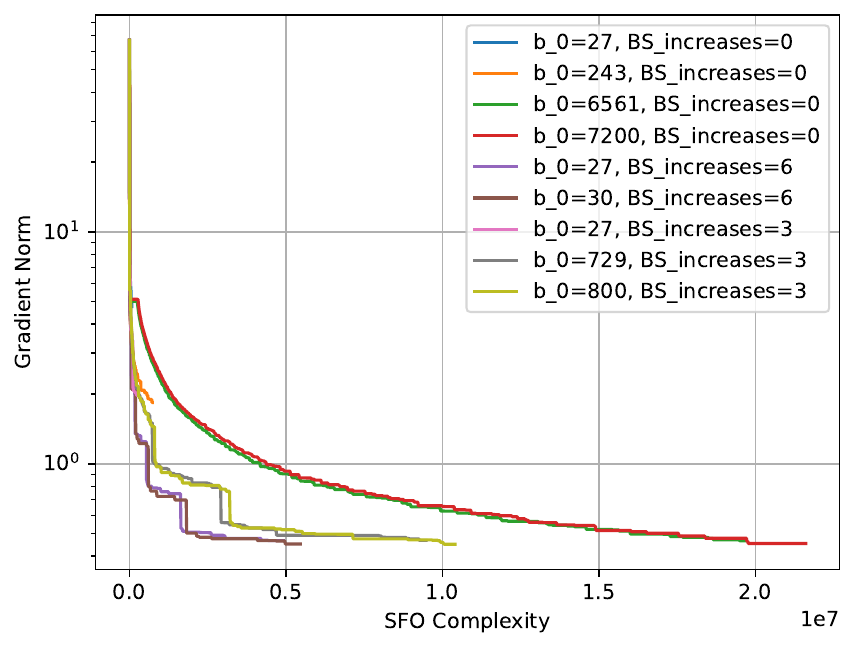}
\includegraphics[width=0.24\linewidth]{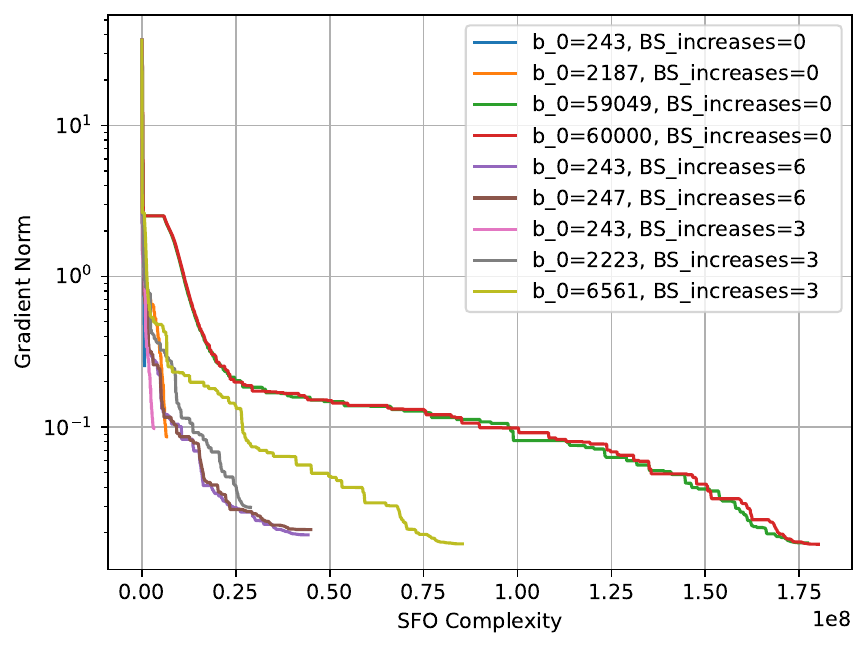}
\includegraphics[width=0.24\linewidth]{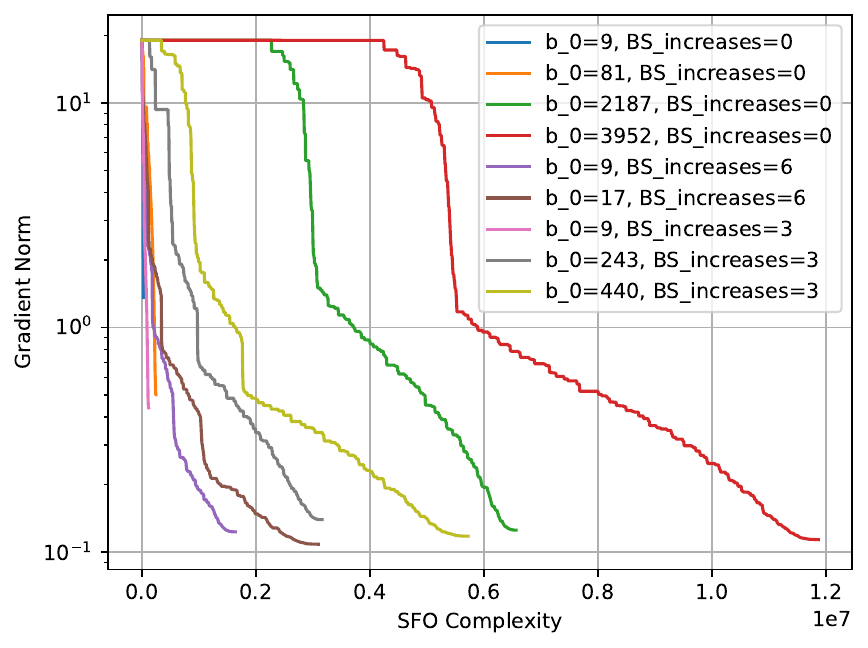}
\includegraphics[width=0.24\linewidth]{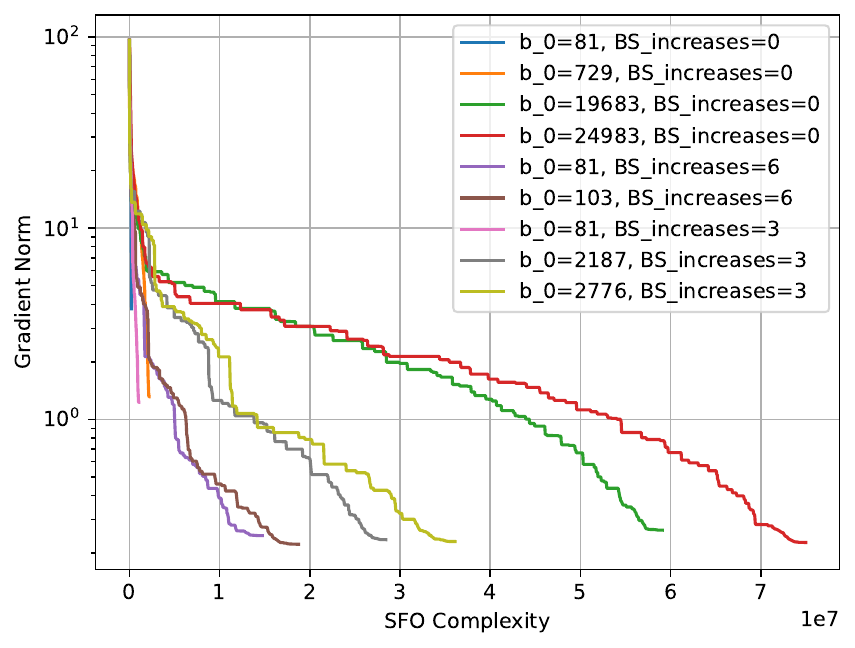}
\caption{Norm of objective function gradient versus SFO complexity. Datasets used were COIL100 (PCA), MNIST (PCA), MovieLens-1M (LRMC), and Jester (LRMC) in order from left to right. A cosine annealing LR was used except for COIL100, for which a constant LR was used. For `BS\_increases$=0$,' a constant BS $b=b_0$ was used. For `BS\_increases$=3$’ and `BS\_increases$=6$,' the BS was increased $3$ and $6$ times, respectively, in accordance with the exponential growth BS.}
\label{fig:sfo_vs_gradnorm}
\end{figure}

\begin{table}[]
\centering
\small
\begin{tabular}{|c|c|c|c|c|}
\hline
& COIL100 (PCA) & MNIST (PCA) & MovieLens-1M (LRMC) & Jester (LRMC)\\
\hline
Large constant BS & 1411.37 & 2697.24 & 4482.41 & 5432.54\\
Full constant BS & 1488.34 & 2719.66 & 12947.91 & 6816.28\\
Increasing BS & \textbf{468.53} & \textbf{745.13} & \textbf{1133.92} & \textbf{1695.85}\\
\hline
\end{tabular}
\caption{Computational time [s] (CPU time) for large-constant, full-constant and increasing batch sizes.}
\label{table:comp_time}
\end{table}
What are the differences between using a large constant BS, a small constant BS, and an increasing BS? We numerically investigated this question on the basis of SFO complexity, defined as the number of stochastic gradient evaluations executed over $T$ iterations \citep{Agarwal2015Ifo, Shallue2019Measuring, Sato2023Existe}. For a constant BS $b$, SFO complexity is represented by $bT$. For other BSs, it can be computed numerically, serves as a proxy for computational time. Figure \ref{fig:sfo_vs_gradnorm} plots the gradient norm of the objective function versus SFO complexity for PCA and LRMC. Each curve corresponds to RSGD for $3000$ steps under one of the following settings: constant BS, BS tripled every $1000$ steps (three increases in total), or BS tripled every $500$ steps (six increases in total). These settings follow the update formula for constant or exponential growth BS. Additional experimental details are provided in the caption of Figure \ref{fig:sfo_vs_gradnorm}. As shown in the figure, an increasing BS combines the advantages of both small and large BSs—namely shorter computational time and convergence to a solution with a smaller final gradient norm.

As shown in Figure \ref{fig:sfo_vs_gradnorm}, although both full and large constant BSs lead to optima achieving a small gradient norm, they require more SFO complexity. In contrast, while small constant BSs require less SFO complexity, they do not lead to optima achieving a small gradient norm. As shown in Table \ref{table:comp_time}, the computational time (CPU time) of an increasing BS is shorter than that of both large and full constant BSs. Between small and large constant BSs, there is a trade-off between convergence to a solution with a smaller final gradient norm and a shorter computational time. Our results show that an increasing BS balances this trade-off effectively.

\textbf{Why is increasing BS better?} Our theoretical analyses provide an answer to this question. From Theorem \ref{thm:c_d}, the SFO complexity of a constant BS with the critical BS for achieving $\|G_T\|^2\leq\epsilon^2$ is $O(\epsilon^{-4})$. From Theorem \ref{thm:i_d}, the SFO complexity of an exponential growth BS for achieving $\|G_T\|^2\leq\epsilon^2$ is $O(\gamma^{\epsilon^{-2}})$, where $\gamma \coloneq 3$ in this experiment. Although $O(\gamma^{\epsilon^{-2}})$ is inferior to $O(\epsilon^{-4})$ in our strict theoretical setting—with fixed $K$ (number of steps for each BS) and dynamic $M$ (number of BS increases)—the assumptions in this experiment differ. Specifically, $K$ depended on the value assigned to $M$, and our analysis can be applied to this experimental setting. Under these conditions, the SFO complexity of an exponential growth BS for achieving $\|G_T\|^2\leq\epsilon^2$ is $O(\epsilon^{-2})$, which is superior to $O(\epsilon^{-4})$ and consistent with our experimental results. This provides our theoretical justification: increasing BS achieves lower SFO complexity than a constant BS. Summarizing the above, we obtain the following theorem. The derivation is provided in Appendix \ref{apdix:SFO}.
\begin{theorem}\label{thm:reduce_sfo}
Let $\epsilon >0$, and consider an LR scheduler other than the diminishing LR one. Under the assumptions of Theorems \ref{thm:c_d} and \ref{thm:i_d}, the SFO complexity for achieving $\|G_T \|^2<\epsilon^2$ is $O(\epsilon^{-4})$ for a constant BS and $O(\epsilon^{-2})$ for an increasing BS.
\end{theorem}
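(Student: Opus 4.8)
The plan is to read the SFO counts directly off the convergence bounds of Theorems \ref{thm:c_d} and \ref{thm:i_d} specialised to a non-diminishing LR, and to exploit the one structural difference between them: for a constant BS the bound is $\|G_T\|^2\le \tilde{Q}_1 T^{-1}+\tilde{Q}_2\sigma^2 b^{-1}$, whose second summand is a ``noise floor'' that does not shrink as $T\to\infty$, whereas for an increasing BS the bound is $\|G_T\|^2\le(\tilde{Q}_1+\tilde{Q}_2\sigma^2 b_0^{-1})T^{-1}$, which contains \emph{no} $T$-independent term. Throughout, the SFO complexity over $T$ iterations is $\sum_{t=0}^{T-1}b_t$, which equals $bT$ in the constant-BS case.

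First I treat the constant BS. Fix $\epsilon>0$ and impose $\|G_T\|^2<\epsilon^2$. Since $\tilde{Q}_2\sigma^2 b^{-1}$ does not depend on $T$, this term alone must fall below $\epsilon^2$, forcing $b>\tilde{Q}_2\sigma^2\epsilon^{-2}$; taking the balanced (``critical'') value $b\coloneq 2\tilde{Q}_2\sigma^2\epsilon^{-2}=\Theta(\epsilon^{-2})$ leaves $\tilde{Q}_1 T^{-1}<\epsilon^2/2$, i.e.\ $T>2\tilde{Q}_1\epsilon^{-2}=\Theta(\epsilon^{-2})$, so the SFO complexity is $bT=\Theta(\epsilon^{-4})$. (An AM--GM argument applied to minimising $bT$ subject to $\tilde{Q}_1 T^{-1}+\tilde{Q}_2\sigma^2 b^{-1}\le\epsilon^2$ shows the minimum is attained at exactly this critical $b$ and equals $\Theta(\epsilon^{-4})$, so $\epsilon^{-4}$ is optimal among all constant batch sizes; only the $O$-bound is needed for the statement.)

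Next I treat the increasing BS. The absence of a $T$-independent term in $(\tilde{Q}_1+\tilde{Q}_2\sigma^2 b_0^{-1})T^{-1}$ means $\|G_T\|^2<\epsilon^2$ holds as soon as $T>(\tilde{Q}_1+\tilde{Q}_2\sigma^2 b_0^{-1})\epsilon^{-2}=\Theta(\epsilon^{-2})$, with \emph{no} batch size scaled with $\epsilon$. It then remains to bound $\sum_{t=0}^{T-1}b_t$. In the parametrisation used in the experiments --- the number of batch-size increases $M$ is held fixed and the inter-increase interval is $K=\lceil T/M\rceil$ --- the batch size never exceeds the constant $\bar b\coloneq b_0\gamma^{M}$ for the exponential-growth schedule \eqref{eq:exp_g_bs} (and is likewise bounded by a constant for the polynomial-growth schedule \eqref{eq:pol_g_bs}), so $\sum_{t=0}^{T-1}b_t\le \bar b\,T=\Theta(T)=O(\epsilon^{-2})$, giving the asserted $O(\epsilon^{-2})$.

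The step I expect to be the main obstacle is this last one: one has to confirm that, in the fixed-$M$ regime with $K$ growing proportionally to $T$, the $O(T^{-1})$ guarantee of Theorem \ref{thm:i_d} still controls $\|G_T\|^2$, so that $T=\Theta(\epsilon^{-2})$ really does suffice. Concretely, via Lemma \ref{lem:undl_anal}, one must show that the ratio of $\sum_{t=0}^{T-1}\eta_t^2 b_t^{-1}$ to $\sum_{t=0}^{T-1}\eta_t$ is bounded by a constant depending on $M,b_0,\gamma$ rather than on $K$; this is where the bookkeeping of Appendix \ref{apdix:SFO} is needed. For contrast, in the strict setting of Theorem \ref{thm:i_d} ($K$ fixed, $M=\lfloor T/K\rfloor$ dynamic) the batch size grows without bound, $\sum_{t=0}^{T-1}b_t=\Theta\big(Kb_0\frac{\gamma^{M}-1}{\gamma-1}\big)$ with $M=\Theta(\epsilon^{-2})$, and the SFO complexity degrades to $O(\gamma^{\epsilon^{-2}})$, so the improvement to $O(\epsilon^{-2})$ is a feature of the experimental parametrisation rather than of the strict one.
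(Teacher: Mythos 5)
Your constant-BS argument is the paper's own: the noise floor $\tilde{Q}_2\sigma^2 b^{-1}$ forces $b=\Theta(\epsilon^{-2})$, the paper computes the critical value $b^{\star}=2\tilde{Q}_2\sigma^2\epsilon^{-2}$ by differentiating $\mathrm{SFO}^{\mathrm{const}}_{\epsilon}(b)=b^2\tilde{Q}_1/(b\epsilon^2-\tilde{Q}_2\sigma^2)$, and both routes give $4\tilde{Q}_1\tilde{Q}_2\sigma^2\epsilon^{-4}$. Your increasing-BS argument also follows the paper's route (fixed $M$, $K=T/M$, total SFO bounded by a constant multiple of $T$), but the justification you give for $T=\Theta(\epsilon^{-2})$ being sufficient is not quite right as stated. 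You argue that the bound $(\tilde{Q}_1+\tilde{Q}_2\sigma^2 b_0^{-1})T^{-1}$ ``contains no $T$-independent term,'' but the constant $\tilde{Q}_2$ in Theorem \ref{thm:i_d} is proportional to $K$ (the paper writes $\tilde{Q}_2=K\tilde{Q}_3$), so in the fixed-$M$ regime where $K=T/M$ the second summand equals $\tilde{Q}_3\sigma^2/(b_0 M)$ --- a $T$-independent noise floor exactly analogous to the constant-BS one. The paper deals with this by solving $MK=T=(\tilde{Q}_1+\tilde{Q}_3K\sigma^2 b_0^{-1})/\epsilon^2$ for $K$, obtaining $K=\tilde{Q}_1/(M\epsilon^2-\tilde{Q}_3\sigma^2 b_0^{-1})$, which is finite only when $M\epsilon^2>\tilde{Q}_3\sigma^2 b_0^{-1}$; the resulting SFO complexity $\frac{b_0(\gamma^M-1)}{\gamma-1}K$ is then $O(\epsilon^{-2})$ with $M,\gamma,b_0$ treated as constants. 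You correctly flag this as ``the main obstacle'' and defer it to the appendix's bookkeeping, but a complete proof has to carry out that bookkeeping --- in particular, it must either impose the constraint relating $M$ and $\epsilon$ or restrict to the regime where the floor lies below $\epsilon^2$ --- rather than assert that the floor is absent. Everything else (the cruder bound $\sum_{t=0}^{T-1} b_t\le b_0\gamma^{M}T$ in place of the exact geometric sum, and the contrast with the strict fixed-$K$ regime giving $O(\gamma^{\epsilon^{-2}})$) matches the paper up to constants.
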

\begin{remark}
Under the rigorous conditions of our theory, $M\to\infty$ implies that $T=MK + \text{const}.\to\infty$ with fixed $K$. In contrast, the conditions of our experiment do not permit an increase in BS to achieve $T \to \infty$ because it would require $K \to \infty$ for $T$ to diverge with fixed $M$, which means that the number of steps for the initial BS is infinite. However, as is done in many experimental setups, including ours, $T$ is predetermined as a finite value before conducting the experiment. This setting is equivalent to using the number of steps as the stopping criterion. Under this practical setting, the conditions of our experiment are meaningful because $T<\infty$ implies $K<\infty$ with a fixed $M$. On the other hand, by considering the conditions of our proofs, our theoretical analysis can also be applied to calculate SFO complexities, even within our experimental setting. From these reasons, our theoretical analysis can support the numerical results. To enhance understanding, the derivation of the SFO complexity for an increasing BS can be found in Appendix \ref{apdix:SFO}.
\end{remark}
\begin{figure}
\centering
\includegraphics[width=0.24\linewidth]{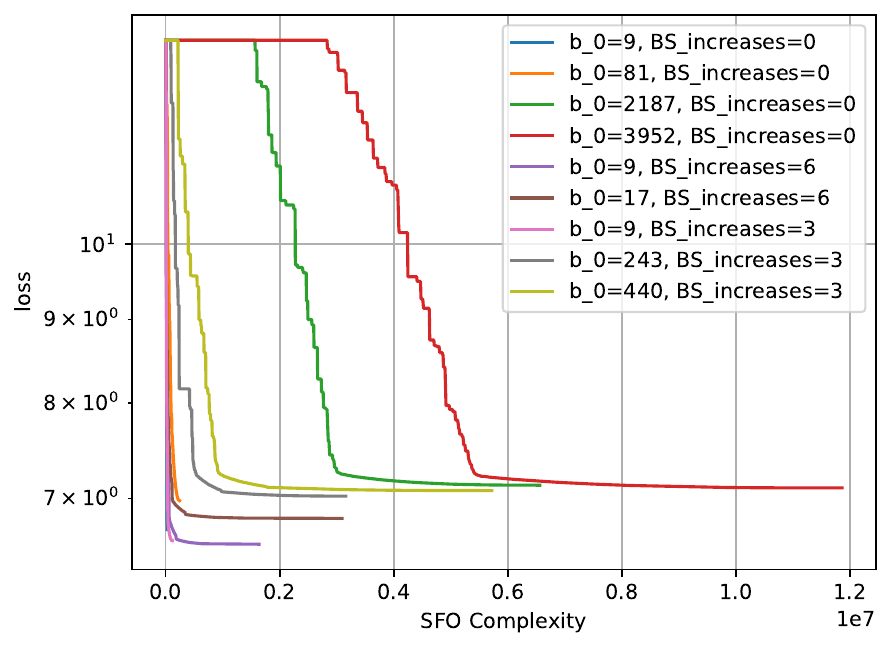}
\includegraphics[width=0.24\linewidth]{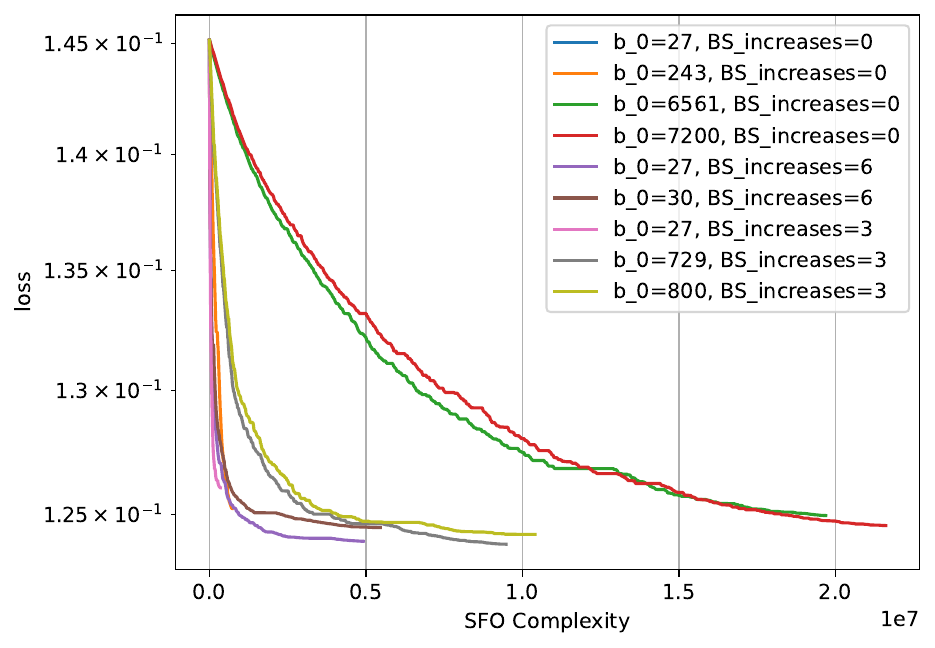}
\includegraphics[width=0.24\linewidth]{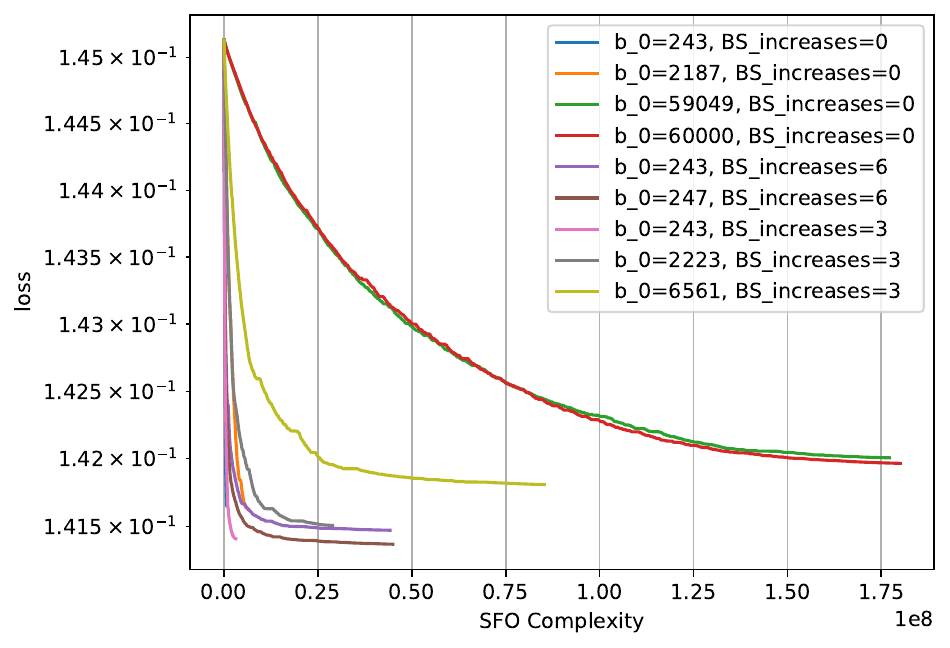}
\includegraphics[width=0.24\linewidth]{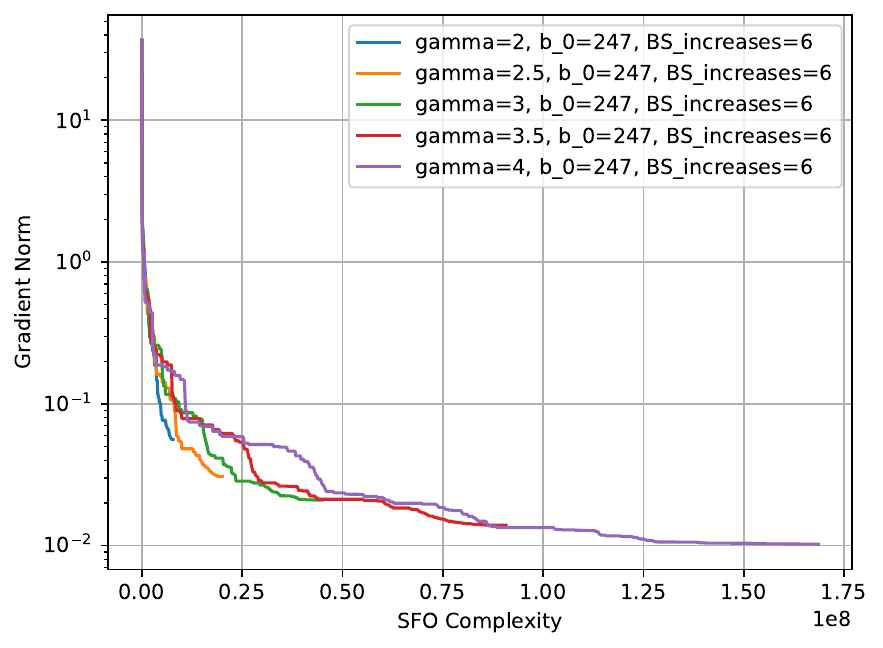}
\caption{The plot on the far left depicts the objective function value (loss) for MNIST (PCA) under the same setting as in Figure \ref{fig:sfo_vs_gradnorm}. The plot in the middle left (resp. right) depicts the objective function value (loss) for our proposed optimization problem when $(N,n)=(7200, 1024)$ (resp. $(60000, 1024)$). The plot on the far right depicts the trade-off with respect to the setting of $\gamma$.}
\label{fig:generali}
\end{figure}
\begin{table}
\centering
\footnotesize
\begin{tabular}{|c|c|c|c|c|c|c|c|c|c|}\hline
$(b_0, M)$ & (27,0) & (243, 0) & (6561, 0) & (7200, 0) &(27, 3) &(729, 3) &(800, 3) & (27, 6) &(30, 6) \\
\hline
$x$ (COIL100) & 2.233 & 1.055 & 0.903 & \textbf{0.902} & 1.173 & 0.911 & \textbf{0.902} & 0.973 & 0.951 \\
\hline\hline
$(b_0, M)$ & (243, 0) & (2187, 0) & (59049, 0) & (60000, 0) & (243, 3) & (2223, 3) & (6561, 3) & (243, 6) & (247, 6)\\
\hline
$y$ (MNIST) & 4.846 & 2.292 & 4.847 & 5.332 & 1.141 & \textbf{-4.622} & 2.134 & 4.247 & -4.139\\
\hline
\end{tabular}
\caption{Let $U_{\text{EVD}}$ be the solution to PCA by eigenvalue decomposition, and let $U_{\text{RSGD}}$ be the solution to PCA by RSGD. Then, the differences between the representations of the subspaces spanned by the principal components of $U_{\text{EVD}}$ and $U_{\text{RSGD}}$ are given by $l\coloneq\|U_{\text{EVD}}U_{\text{EVD}}^{\top}-U_{\text{RSGD}}U_{\text{RSGD}}^{\top}\|$. For comparison with the results for COIL100, we introduce $x$ such that $l=9.487+x\times 10^{-3}$, and for MNIST, we introduce $y$ such that $l=0.4+y\times 10^{-4}$. $M$ is the number of BS increments.}
\label{table:evd_pca}
\end{table}

\textbf{Objective Function Values}: Generalization performance is an important issue, and the objective function value serves as a criterion for measuring it. As shown in part of Appendix \ref{subapdix:sfo_obj}, although the exponential growth BS performed better than or equal to the constant BS with respect to the objective function values, the differences in the values were small. One possible reason for this is that the objective function may be flat around the optimal solution. To test this hypothesis, we conducted an experiment comparing the solution obtained by RSGD with that obtained by eigenvalue decomposition in PCA. When the data dimensionality is not large, PCA can obtain the exact solution by eigenvalue decomposition, and the difference from the approximate solution obtained by RSGD can be used as a measure of generalization. As shown in Table \ref{table:evd_pca}, the difference is small, which indicates that the objective functions of PCA with COIL100 and MNIST are flat around the optimal (exact) solution. Conversely, by considering optimization problems in which the objective functions do not have this property, the superiority of an increasing BS in terms of generalization performance can be revealed more clearly. In fact, this has already been demonstrated in LRMC with MovieLens-1M, as shown at the far left of Figure \ref{fig:generali} (the corresponding results for the other datasets are provided in Appendix \ref{subapdix:sfo_obj}). For further verification of this, we formulated a new optimization problem on the sphere $\text{S}^{n-1}$: $\text{minimize}_{w\in\text{S}^{n-1}}f(w)\coloneq\frac{1}{N}\sum_{j=1}^N\sqrt{|\langle x_j, w\rangle|}$, where $\{x_j\}_{j=1}^N$ is a dataset uniformly sampled from $\text{S}^{n-1}$ (see Appendix \ref{apdix:new_opt_prob} for more details). We set $(N,n)=(7200,1024)$ and $(60000,1024)$ and used the cosine annealing LR with an initial LR $\eta_{\max}=0.01$. The BS was configured in exactly the same way as that for the COIL100 and MNIST datasets in Section \ref{sec:SFO}. As in the two central graphs of Figure \ref{fig:generali}, an increasing BS achieved superior performance in terms of the objective function value for this proposed problem. Although the order of the differences (vertical axis) $O(10^{-2})-O(10^{-1})$ may appear small, it is 10–100 times larger than that observed for PCA because the differences in the values shown in Table \ref{table:evd_pca} are on the order of $O(10^{-4})-O(10^{-3})$. These results further confirm the validity of our hypothesis for this problem. Given these results, we suggest that an increasing BS should also improve generalization performance. Turning to a different topic, the objective function of our proposed optimization problem has an unbounded gradient norm (see Appendix \ref{apdix:new_opt_prob}), and the corresponding experiments numerically confirmed that our theoretical results are applicable even in this case. The graphs of the gradient norm for our proposed problem are provided in Appendix \ref{subapdix:new_prob_gradnorm}.

\textbf{Guidelines for setting $\gamma, b_0$ and $M$ in the exponential growth BS }: From Theorem \ref{thm:i_d}, $\|G_T\|^2=O(1+\frac{\gamma}{\gamma-1})=O(1+b_0^{-1})=O(M^{-1})$ and $\text{SFO}^{\text{incr}}_{\epsilon}=O(\gamma^{M})=O((b_0^2-1)^{-1}b_0^3)=O(\gamma^M M^{-1})$ hold. These theoretical results suggest a trade-off: for smaller gradient norm, $\gamma, b_0,$ and $M$ should be assigned larger values while for smaller SFO complexity, $\gamma, b_0$, and $M$ should be assigned smaller values. The far-right plot in Figure \ref{fig:generali} illustrates the trade-off between the gradient norm and SFO complexity achieved experimentally on MNIST (PCA) using the cosine annealing LR with an initial LR $\eta_{\max}=0.01$, where $(b_0,M)=(247,6)$, while varying $\gamma$ among $\{2.0, 2.5, 3.0, 3.5, 4.0\}$. For $b_0$, this trade-off can be observed by comparing the three entries under `BS\_increases$=3$' or the two entries under `BS\_increases$=6$' in Figure \ref{fig:sfo_vs_gradnorm}. For $M$, which represents the value `BS\_increases', this trade-off can be observed by comparing the purple entry with the pink entry in Figure \ref{fig:sfo_vs_gradnorm}. From these results, we were able to show, both theoretically and experimentally, that for each hyperparameter the trade-off between the gradient norm and SFO complexity holds. Figure \ref{fig:sfo_vs_gradnorm} shows that an increasing BS with small $b_0$ and large $M$ $(=6)$ leads to multiple optima characterized by a small gradient norm and relatively low SFO complexity, indicating that well-balanced configurations were achieved by taking into account the trade-offs inherent in $b_0$ and $M$ individually. The relevant derivation is provided in Appendix \ref{apdix:hypra_tra_deri}.

\section{Conclusion}
\label{sec:conclu}
Our theoretical analysis, conducted with several learning rate schedules, including cosine annealing and polynomial decay, demonstrated that using an increasing batch size rather than a constant batch size improves the convergence rate of Riemannian stochastic gradient descent. This result is supported by our experimental results. Furthermore, an increasing batch size yields optima characterized by a smaller gradient norm within a shorter computational time owing to reductions in the stochastic first-order oracle complexity. These findings indicate that an increasing batch size combines the advantages of both small and large constant batch size. Due to the nature of the experimental tasks, we were unable to directly investigate the effect on generalization performance. However, evaluation via the objective function value suggests that an increasing batch size also enhances generalization performance. We believe our results, which clarify one aspect of the effectiveness of an increasing batch size for generalization performance, provide valuable insight into this topic.

\acks{This work was supported by the Japan Society for the Promotion of Science (JSPS) KAKENHI Grant Number 24K14846 awarded to Hideaki Iiduka.}

\bibliography{biblio}
\newpage

\appendix
\section{Calculation of SFO Complexity}\label{apdix:SFO}
\subsection{SFO Complexity with Constant BS and Increasing BS}\label{subapdix:SFO}
We compute the SFO complexity required to achieve $\|G_T\|^2\leq \epsilon^2$ $(\epsilon > 0)$, as described in Section \ref{sec:SFO}.

\noindent
[\textbf{Constant BS}]\\
From Theorem \ref{thm:c_d}, the number of iterations $T$ required to achieve $\|G_T\|^2 \leq \epsilon^2$ is
\begin{equation*}
T = \frac{\tilde{Q}_1}{\epsilon^2 - \tilde{Q}_2 \sigma^2 b^{-1}}.
\end{equation*}
Because the SFO complexity of constant BS can be represented by $\text{SFO}^{\text{const}}(b)=bT$, the $\text{SFO}^{\text{const}}(b)$ required to achieve $\|G_T\|^2\leq \epsilon^2$ is given by
\begin{align*}
\text{SFO}_{\epsilon}^{\text{const}}(b)
= b T
= b \frac{\tilde{Q}_1}{\epsilon^2 - \tilde{Q}_2 \sigma^2 b^{-1}}
= \frac{b^2 \tilde{Q}_1}{b \epsilon^2 - \tilde{Q}_2 \sigma^2}.
\end{align*}
Because
\begin{align*}
\frac{d}{db}\text{SFO}_{\epsilon}^{\text{const}}(b)
= \frac{2b \tilde{Q}_1(b \epsilon^2 - \tilde{Q}_2 \sigma^2) - b^2 \tilde{Q}_1\epsilon^2}{(b \epsilon^2 - \tilde{Q}_2 \sigma^2)^2}
= \frac{b \tilde{Q}_1(b\epsilon^2 - 2\tilde{Q}_2\sigma^2)}{(b \epsilon^2 - \tilde{Q}_2 \sigma^2)^2}
\end{align*}
holds, from
\begin{align*}
b\leq \frac{2\tilde{Q}_2 \sigma^2}{\epsilon^2}\Rightarrow \frac{d}{db}\text{SFO}_{\epsilon}^{\text{const}}(b)\leq 0 \quad \text{and} \quad b\geq \frac{2\tilde{Q}_2 \sigma^2}{\epsilon^2}\Rightarrow \frac{d}{db}\text{SFO}_{\epsilon}^{\text{const}}(b)\geq 0,
\end{align*}
we have the critical BS of constant BS:
\begin{align*}
b^{\star}
\coloneq \frac{2\tilde{Q}_2 \sigma^2}{\epsilon^2}
= \underset{b>0}{\text{argmin}}\ \text{SFO}_{\epsilon}^{\text{const}}(b),
\end{align*}
which implies
\begin{align*}
\text{SFO}_{\epsilon}^{\text{const}}(b^{\star})
= \frac{4\tilde{Q}_1 \tilde{Q}_2^2 \sigma^4 \epsilon^{-4} }{\tilde{Q}_2 \sigma^2}
= 4\tilde{Q}_1 \tilde{Q}_2 \sigma^2 \epsilon^{-4}
= O(\epsilon^{-4}).
\end{align*}

\noindent
[\textbf{Increasing BS}]\newline
From Theorem \ref{thm:i_d}, the number of iterations $T$ required to achieve $\|G_T\|^2 \leq \epsilon^2$ is given by
\begin{equation*}
T = \frac{\tilde{Q}_1 + \tilde{Q}_2 \sigma^2 b_0^{-1}}{\epsilon^2}.
\end{equation*}
We assume a setting in which $T=MK$, meaning that by the $T$-th iteration, the BS has been increased $M$ times, and RSGD has been updated for $K$ steps at each BS. If the exact value of $T$ takes the form $T =MK + k$ $(1\leq k\leq K-1)$, we can set $T$ to $M(K+1)$ to ensure that $\|G_T\|^2\leq\epsilon^2$ still holds. Similarly, if $T$ takes the form $T =MK – k$ $(1\leq k\leq K-1)$, we can set $T$ to $MK$ to satisfy the same condition. Hence, our assumption that $T=MK$ is reasonable.\newline
SFO complexity for increasing BS (i.e., exponential growth BS) can be represented as
\begin{align*}
\text{SFO}^{\text{incr}}
= \left(\sum_{m=0}^{M-1} b_0 \gamma^m \right)K
= \frac{b_0}{\gamma -1} K(\gamma^M-1).
\end{align*}
Under our theoretical assumption that $K$ is fixed and $M$ is dynamic, and with $M = \frac{T}{K}$, the SFO complexity of increasing BS for achieving $\|G_T\|^2\leq\epsilon^2$ is
\begin{align*}
\text{SFO}_{\epsilon}^{\text{incr}}
= \frac{b_0 K}{\gamma -1} (\gamma^{\frac{\tilde{Q}_1 + \tilde{Q}_2 \sigma^2 b_0^{-1}}{K\epsilon^2}}-1)
= O(\gamma^{\epsilon^{-2}}).
\end{align*}
To calculate SFO complexity under our experimental setting, we must reconsider the proof \ref{prf:i_d} because it relies on letting $M\to\infty$, which is not permitted in our experimental setting. This issue can, however, be resolved by the following discussion. The key is to replace the evaluation at the penultimate inequality of \eqref{eq:esti_exp_bs} with
\begin{align*}
\sum_{t=0}^{T-1} \frac{1}{b_t}
&= \sum_{m=0}^{M-1} \frac{K}{b_0 \gamma^m} + \frac{T - KM}{b_0 \gamma^M}
\leq \sum_{m=0}^{M} \frac{K}{b_0 \gamma^m}
= \frac{K(1-\gamma^{-M-1})}{b_0(1-\gamma^{-1})}\\
&= \frac{K(\gamma^{M}-\gamma^{-1})}{b_0\gamma^{M-1}(\gamma -1)}
\leq \frac{K\gamma^{M}}{b_0\gamma^{M-1}(\gamma -1)}
=\frac{K\gamma}{b_0(\gamma -1)}.
\end{align*}
Thus,
\begin{align*}
\sum_{t=0}^{T-1} \frac{\eta_{\max}^2}{b_t}
\leq \frac{\eta_{\max}^2K\gamma}{b_0(\gamma -1)}
\end{align*}
implies the same result with Theorem \ref{thm:i_d}:
\begin{align*}
\|G_T\|^2
\leq \frac{\tilde{Q}_1+\tilde{Q_2}\sigma^2b_0^{-1}}{T}.
\end{align*}
Because $\tilde{Q}_2$ contains $K$, as shown by the formal statement of Theorem \ref{thm:i_d} (with a constant LR and an exponential growth BS), we rearrange $\tilde{Q}_2$ for calculating SFO complexity under our experimental setting as
\begin{align*}
\tilde{Q}_2 = K \tilde{Q}_3,
\end{align*} 
where $\tilde{Q_3}$ is defined through this substitution. Then, we obtain
\begin{align*}
MK 
= T
= \frac{\tilde{Q}_1 + \tilde{Q}_2 \sigma^2 b_0^{-1}}{\epsilon^2}
= \frac{\tilde{Q}_1 + \tilde{Q}_3 K \sigma^2 b_0^{-1}}{\epsilon^2},
\end{align*}
which implies 
\begin{align*}
K = \frac{\tilde{Q}_1}{M\epsilon^2 - \tilde{Q}_3 \sigma^2 b_0^{-1}}.
\end{align*}
Therefore, under our experimental setting—where $K$ is determined by fixed $M$—and with $K = \frac{T}{M}$, the SFO complexity of increasing BS for achieving $\|G_T\|^2\leq\epsilon^2$ becomes
\begin{align*}
\text{SFO}_{\epsilon}^{\text{incr}}
= \frac{b_0 (\gamma^M -1 )}{\gamma -1}\frac{\tilde{Q}_1}{M\epsilon^2 - \tilde{Q}_3 \sigma^2 b_0^{-1}}
= O(\epsilon^{-2}).
\end{align*}
The calculations in this section are for the case of a constant LR, but it is immediately clear from Theorems \ref{thm:c_d} and \ref{thm:i_d} that exactly the same results hold for a cosine annealing LR, which was also used in our experiments.

\subsection{Objective Function Value versus SFO Complexity}
\label{subapdix:sfo_obj}
\begin{figure}[H]
 \centering
 \includegraphics[width=0.24\linewidth]{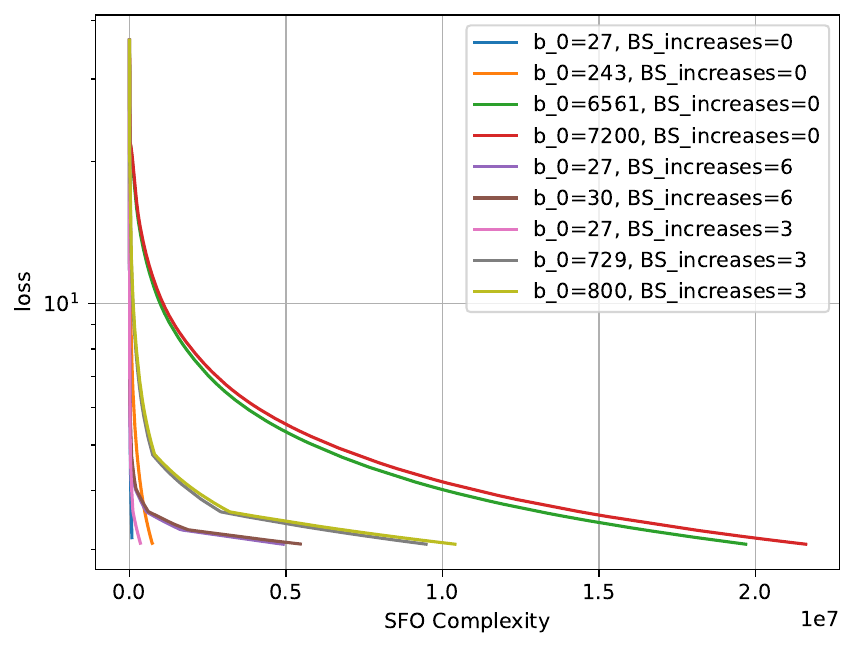}
 \includegraphics[width=0.24\linewidth]{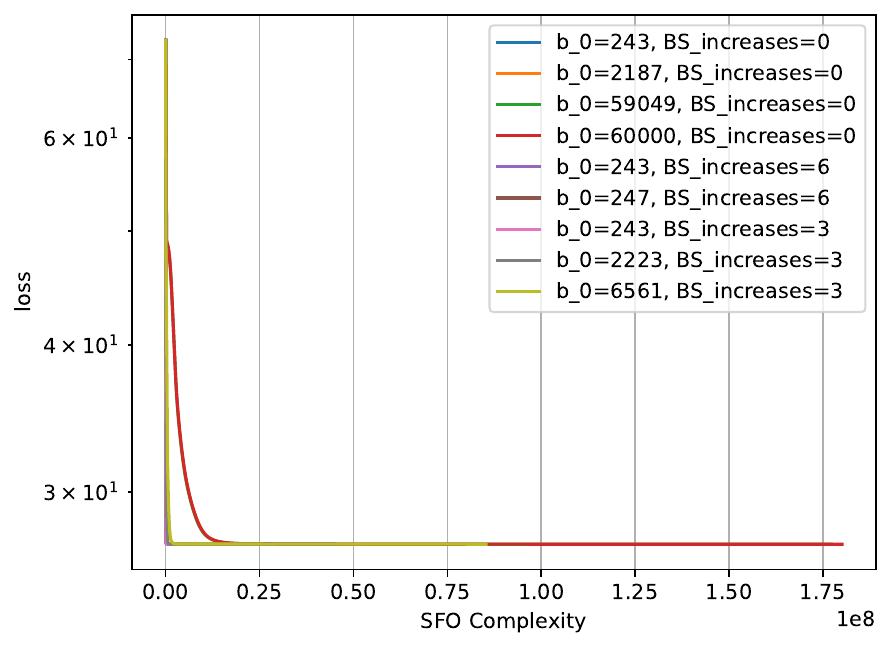}
 \includegraphics[width=0.24\linewidth]{graphs/SFO/ml-1m-loss-CosAnnealLR--0.01.pdf}
 \includegraphics[width=0.24\linewidth]{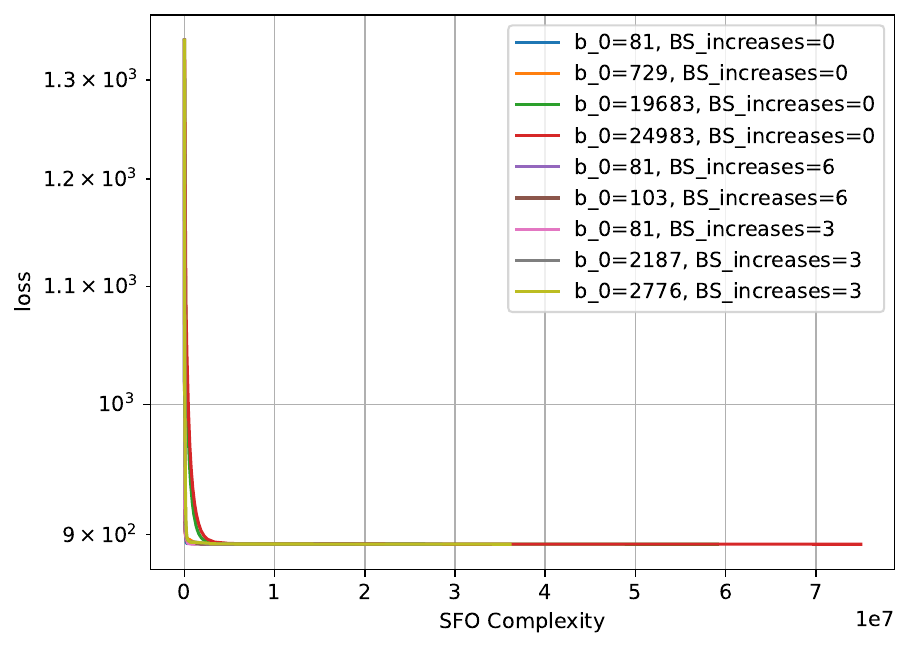}
 \caption{Objective function value (loss) versus SFO complexity on COIL100 (PCA), MNIST (PCA), MovieLens-1M (LRMC), and Jester (LRMC) datasets in order from left to right.}
 \label{fig:obj_SFO}
\end{figure}
For the COIL100 (PCA) and MovieLens-1M (LRMC), performance was better with an increasing BS than with a constant BS. Although the differences in the objective function values for the MNIST (PCA) and Jester (LRMC) are small (one possible reason for this is that the objective function may be flat around the optimal solution), the performance with an increasing BS was equal to or better than that with a constant BS. A more detailed discussion of this hypothesis is presented in Section \ref{sec:SFO}.

\section{Trade-offs for Each Hyperparameter of Exponential Growth BS Scheduler}\label{apdix:hypra_tra_deri}
We calculate the trade-offs for each hyperparameter of the exponential growth BS.

[$\gamma$]: From Theorem \ref{thm:i_d}, $\|G_T\|^2=O\left(1+\frac{\gamma}{\gamma-1}\right)$ and $\text{SFO}_{\epsilon}^{\text{incr}}=O(\gamma^M)$ hold, and $\gamma^M$ is clearly monotonically increasing with respect to $\gamma$. Let $f(\gamma)\coloneq1+\frac{\gamma}{\gamma-1}$. Because $f'(\gamma)=\frac{-1}{(\gamma-1)^2}<0$, $f$ is monotonically decreasing. Thus, for a smaller gradient norm, $\gamma$ should be set to a larger value, while for a smaller SFO complexity, $\gamma$ should be set to a smaller value.

[$b_0$]: From Theorem \ref{thm:i_d}, $\|G_T\|^2=O\left(1+b_0^{-1}\right)$ and $\text{SFO}_{\epsilon}^{\text{incr}}=O\left(\frac{b_0^3}{b_0^2-1}\right)$ hold. $1+b_0^{-1}$ is clearly monotonically decreasing. Let $g(x)\coloneq\frac{x^3}{x^2-1}\ (x>1)$. Because $g'(x)=\frac{x^2(x-\sqrt{3})(x+\sqrt{3})}{(x^2-1)^2}$, $g$ is monotonically decreasing for $1<x\leq\sqrt{3}$, and monotonically increasing for $x\geq\sqrt{3}$. Now, because $x>1$ and $b_0\in\mathbb{N}$, it suffices to consider only the case $x\geq 2$, in which $g$ is monotonically increasing. Summarizing the above and considering $b_0\in\text{dom}\ g$, we find that, for a smaller gradient norm, $b_0$ should be set to a larger value, while for a smaller SFO complexity, $b_0$ should be set to a smaller value.

[$M$]: Recall $T=MK$. From Theorem \ref{thm:i_d}, $\|G_T\|^2=O\left(\frac{1}{T}+\frac{K}{T}\right)=O\left(\frac{1}{MK}+\frac{K}{MK}\right)=O\left(M^{-1}\right)$, and $\text{SFO}_{\epsilon}^{\text{incr}}=O\left(\frac{\gamma^M}{M}\right)$ hold. $M^{-1}$ is clearly monotonically decreasing. Let $h(x)\coloneq\frac{\gamma^x}{x}\ (x\geq 1)$. Because $h'(x)=\frac{\gamma^x(x\ln\gamma-1)}{x^2}$, $h$ is monotonically increasing for $x\geq\frac{1}{\ln\gamma}$ when $\gamma\geq e$. Thus, when we set $\gamma\geq e$ ($\gamma$ was set to 3 in our experiments except for the far right plot in Figure \ref{fig:generali}), we find that, for a smaller gradient norm, $M$ should be set to a larger value, while for a smaller SFO complexity, $M$ should be set to a smaller value.

Finally, we provide a remark on the setting of $b_0$ in our experiments. Although the initial BS ($b_0$) appears different for each dataset, the values were similarly determined using the same method to enable a comparison of the effectiveness between a small constant BS, a large constant BS, and an increasing BS. The $N$ was set to the total number of data points in each dataset. For a large initial BS, it was set to $3^k$, where $k$ is the largest integer such that $3^k<N$. For a medium-size initial BS, it was set to $3^{k-3}$. For the smallest initial BS, it was set to $3^{k-5}$. For example, because $N=7200$ for the COIL100 dataset, the large, the medium-size small, and the smallest initial BSs were set to $3^8=6561, 3^5=243, 3^3=27$, respectively.

\section{Our Proposed Optimization Problem}\label{apdix:new_opt_prob}
Our proposed optimization problem:
\[
\begin{aligned}
&\text{minimize} &&f(w)\coloneq \frac{1}{N}\sum_{j=1}^N \sqrt{|\langle x_j, w\rangle|},\\
&\text{subject to} & &w \in\text{S}^{n-1}\coloneq\{w\in\mathbb{R}^n\mid\|w\|=1\}.
\end{aligned}
\]
$\{x_j\}_{j=1}^N$ is a dataset uniformly sampled from $\text{S}^{n-1}$. $N$ is a total number of data points in the dataset, and $n$ is a dimension of each one. The objective function defined above has unbounded gradient norms, which allows us to experimentally verify that our theoretical results are indeed applicable to functions with unbounded gradient norms. In our experiment, $R_x(d)\coloneq\frac{x+d}{\|x+d\|}$ was used as a retraction for $\text{S}^{n-1}$.
\begin{proposition}
The gradient norm of the objective function for this problem $f$ is not bounded.
\end{proposition}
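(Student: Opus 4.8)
The plan is to compute the Riemannian gradient of $f$ on the sphere explicitly and exhibit a sequence of points at which its norm diverges. The objective $f(w) = \frac{1}{N}\sum_{j=1}^N \sqrt{|\langle x_j, w\rangle|}$ is a sum of compositions $w \mapsto \sqrt{|\langle x_j, w\rangle|}$, so it suffices to understand a single summand $g_j(w) \coloneq \sqrt{|\langle x_j, w\rangle|}$. First I would compute the Euclidean gradient of $g_j$ at points where $\langle x_j, w\rangle \neq 0$: by the chain rule, $\nabla g_j(w) = \frac{\mathrm{sign}(\langle x_j, w\rangle)}{2\sqrt{|\langle x_j, w\rangle|}}\, x_j$, whose Euclidean norm is $\frac{\|x_j\|}{2\sqrt{|\langle x_j, w\rangle|}} = \frac{1}{2\sqrt{|\langle x_j, w\rangle|}}$ since $x_j \in \mathrm{S}^{n-1}$. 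This already blows up as $w$ approaches the ``equator'' hyperplane $\{w : \langle x_j, w\rangle = 0\}$.

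Next I would pass from the Euclidean gradient to the Riemannian gradient on $\mathrm{S}^{n-1}$, which is obtained by orthogonal projection onto the tangent space $T_w\mathrm{S}^{n-1} = \{v : \langle v, w\rangle = 0\}$, i.e. $\mathrm{grad}\, g_j(w) = \nabla g_j(w) - \langle \nabla g_j(w), w\rangle w$. The key point is to choose a sequence $w_k$ along which the projection does \emph{not} kill the divergence. Fix any two orthonormal vectors: take $w$ near $x_1^{\perp}$ but with a small component along $x_1$. Concretely, pick a unit vector $u \perp x_1$ and set $w_k \coloneq \cos(\theta_k) u + \sin(\theta_k) x_1$ with $\theta_k \to 0^+$. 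Then $\langle x_1, w_k\rangle = \sin\theta_k \to 0$, so $\|\nabla g_1(w_k)\| = \frac{1}{2\sqrt{\sin\theta_k}} \to \infty$. The tangential component of $\nabla g_1(w_k)$ is $\nabla g_1(w_k) - \langle \nabla g_1(w_k), w_k\rangle w_k$; since $\nabla g_1(w_k)$ is parallel to $x_1$ and $\langle x_1, w_k\rangle = \sin\theta_k \neq 1$ for $\theta_k$ small, the projection retains a fraction $\sqrt{1 - \sin^2\theta_k} = \cos\theta_k \to 1$ of the norm, so $\|\mathrm{grad}\, g_1(w_k)\|_{w_k} = \frac{\cos\theta_k}{2\sqrt{\sin\theta_k}} \to \infty$. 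Finally I would argue that the other summands $g_j$, $j \geq 2$, do not cancel this: for generic data (or after a further small perturbation of the sequence within the equator of $x_1$), the quantities $\langle x_j, w_k\rangle$ stay bounded away from zero, so $\mathrm{grad}\, g_j(w_k)$ remains bounded; hence $\|\mathrm{grad}\, f(w_k)\|_{w_k} \geq \frac{1}{N}\|\mathrm{grad}\, g_1(w_k)\|_{w_k} - \frac{1}{N}\sum_{j\geq 2}\|\mathrm{grad}\, g_j(w_k)\|_{w_k} \to \infty$.

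The main obstacle is the cancellation issue in the last step: a priori the divergent summand could be cancelled by another summand that is also singular along the same direction, or the tangential projections could conspire to stay bounded. I would handle this by choosing the approach direction $w_k$ to lie in the equator of $x_1$ plus a vanishing normal component, while simultaneously staying a fixed distance from the equators of all other $x_j$ — this is possible because finitely many hyperplanes cannot cover a neighborhood of a generic point of $\{w : \langle x_1, w\rangle = 0\} \cap \mathrm{S}^{n-1}$ (which has dimension $n-2 \geq 1$ as long as $n \geq 2$). Since the data are sampled from a continuous distribution on $\mathrm{S}^{n-1}$, almost surely no two $x_j$ are parallel and one can pick such a direction; alternatively, one can simply state the proposition for a fixed dataset in ``general position'' and note this holds with probability one. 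A cleaner variant avoiding any genericity: restrict attention to $N=1$ or observe that it suffices to exhibit \emph{one} dataset for which the gradient is unbounded, which the explicit construction above provides directly.
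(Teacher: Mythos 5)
Your proposal is correct and follows essentially the same route as the paper: both construct a sequence $w_k=\sin(\theta_k)\,x_1+\cos(\theta_k)\,u$ with $u\perp x_1$ (the paper parametrizes it as $w_t=\tfrac{x_i}{t\|x_i\|}+\sqrt{1-t^{-2}}\,u$), show that the tangentially projected gradient of the singular summand diverges like $|\sin\theta_k|^{-1/2}$, and conclude via the reverse triangle inequality against the remaining bounded terms. The only difference is in handling the other summands: you impose an (almost-sure) general-position condition so that exactly one term diverges, whereas the paper instead splits the indices into the set $\Lambda=\{j:\langle x_j,u\rangle=0\}$, whose terms all diverge, and its complement — a mild trade-off of a genericity assumption for a more delicate case analysis, with no substantive change to the argument.
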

\begin{proof}
When $\langle x_j, w\rangle\not=0\ (^{\forall}j\in\{1,\ldots,N\})$, 
\begin{align*}
\nabla f(w)=\frac{1}{N}\sum_{j=1}^N \text{Sign}(\langle x_j, w\rangle)|\langle x_j, w\rangle|^{-\frac{1}{2}}x_j
\end{align*}
is obtained by
\begin{align*}
\frac{\partial f}{\partial w_i}(w)
&=\frac{1}{N}\sum_{j=1}^N \frac{\partial }{\partial w_i}\sqrt{|\langle x_j, w\rangle|}
= \frac{1}{N}\sum_{j=1}^N |\langle x_j, w\rangle|^{-\frac{1}{2}}\text{Sign}(\langle x_j, w\rangle)\frac{\partial }{\partial w_i}\sum_{k=1}^N x_j^k w_k\\
&=\frac{1}{N}\sum_{j=1}^N \text{Sign}(\langle x_j, w\rangle)|\langle x_j, w\rangle|^{-\frac{1}{2}}x_j^i,
\end{align*}
where $w=(w_1,\cdots,w_n)^{\top}, x_j=(x_j^1,\cdots,x_j^n)^{\top}\in\text{S}^{n-1}\subset\mathbb{R}^n$. Thus, the gradient on $\text{S}^{n-1}$ is
\begin{align*}
\text{grad}f(w)=(I-ww^{\top})\nabla f(w)=\frac{1}{N}\sum_{j=1}^N \text{Sign}(\langle x_j, w\rangle)|\langle x_j, w\rangle|^{-\frac{1}{2}}(I-ww^{\top})x_j.
\end{align*}
Fix one data $x_i$ such that $x_i\not=0$. Let $u\in\text{S}^{n-1}$ be a vector obtained by orthonormalizing $x_i$ via the Gram-Schmidt process. We consider a sequence $(w_t)_{t\in\mathbb{N}}$ such that $w_t\coloneq \frac{x_i}{t\|x_i\|}+\sqrt{1-\frac{1}{t^2}}u$, which lies in $\text{S}^{n-1}$ because
\begin{align*}
\|w_t\|^2
=\frac{1}{t^2\|x_i\|^2}\|x_i\|^2+\left(1-\frac{1}{t^2}\right)\|u\|^2+\frac{2}{t\|x_i\|}\sqrt{1-\frac{1}{t^2}}\langle x_i,w\rangle
=1.
\end{align*}
By simple calculations,
\begin{align*}
w_tw_t^{\top}
&=\left(\frac{x_i}{t\|x_i\|}+\sqrt{1-\frac{1}{t^2}}u\right)\left(\frac{x_i^{\top}}{t\|x_i\|}+\sqrt{1-\frac{1}{t^2}}u^{\top}\right)\\
&=\frac{x_i x_i^{\top}}{t^2\|x_i\|^2}+\frac{1}{t^2\|x_i\|^2}\sqrt{1-\frac{1}{t^2}}(x_i u^{\top}+ u x_i^{\top})+ \left(1-\frac{1}{t^2}\right)uu^{\top}\\
&\overset{t\to\infty}{\longrightarrow} uu^{\top}
\end{align*}
and
\begin{align*}
\langle x_j, w_t\rangle
=\left\langle x_j, \frac{x_i}{t\|x_i\|}+\sqrt{1-\frac{1}{t^2}}u\right\rangle
=\frac{\langle x_j, x_i\rangle}{t\|x_i\|}+\sqrt{1-\frac{1}{t^2}}\langle x_j, u\rangle
\end{align*}
hold. Now, we define a set $\Lambda$ satisfying $\{x_j\}_{j\in \Lambda}\subset\{x_j\}_{j=1}^N$ such that $\langle x_j, u\rangle=0$. We find $\langle x_j, w_t\rangle=\frac{\langle x_j, x_i\rangle}{t\|x_i\|}(^{\forall}j\in\Lambda)$, and $\Lambda\not=\varnothing$ because of $i\in\Lambda$. Note that, for all $j\not\in\Lambda$, $\langle x_j, u\rangle\not=0$ holds, which yields $\langle x_j, w_t\rangle\not=0$. From these observations, if $\langle x_j, x_i\rangle\not=0\ (^{\forall}j\in\Lambda)$,
\begin{align*}
\|\text{grad}f(w_t)\|
&=\left\|\frac{1}{N}\sum_{j=1}^N \text{Sign}(\langle x_j, w_t\rangle)|\langle x_j, w_t\rangle|^{-\frac{1}{2}}(I-w_tw_t^{\top})x_j\right\|\\
&=\frac{1}{N}\left\|\sum_{j=1}^N |\langle x_j, w_t\rangle|^{-\frac{1}{2}}(I-w_tw_t^{\top})x_j\right\|\\
&\geq \frac{1}{N}\left\|\sum_{j\in\Lambda}|\langle x_j, w_t\rangle|^{-\frac{1}{2}}(I-w_tw_t^{\top})x_j\right\| - \frac{1}{N}\left\|\sum_{j\not\in\Lambda} |\langle x_j, w_t\rangle|^{-\frac{1}{2}}(I-w_tw_t^{\top})x_j\right\|\\
&=\frac{1}{N}\sqrt{\frac{t\|x_i\|}{\langle x_j, x_i\rangle}}\left\|\sum_{j\in\Lambda}(I-w_tw_t^{\top})x_j\right\|- \frac{1}{N}\left\|\sum_{j\not\in\Lambda} \left|\frac{\langle x_j, x_i\rangle}{t\|x_i\|}+\sqrt{1-\frac{1}{t^2}}\langle x_j, u\rangle\right|^{-\frac{1}{2}}(I-w_tw_t^{\top})x_j\right\|\\
&\geq \frac{1}{N}\sqrt{\frac{t\|x_i\|}{\langle x_j, x_i\rangle}}\left\|\sum_{j\in\Lambda}(I-w_tw_t^{\top})x_j\right\|- \frac{1}{N}\sum_{j\not\in\Lambda} \left|\frac{\langle x_j, x_i\rangle}{t\|x_i\|}+\sqrt{1-\frac{1}{t^2}}\langle x_j, u\rangle\right|^{-\frac{1}{2}}\left\|(I-w_tw_t^{\top})x_j\right\|\\
&\overset{t\to\infty}{\longrightarrow}\left(\infty - \frac{1}{N}\sum_{j\not\in\Lambda} \left|\langle x_j, u\rangle\right|^{-\frac{1}{2}}\left\|(I-uu^{\top})x_j\right\|
=\right)\infty
\end{align*}
holds; even if $\langle x_j, x_i\rangle=0\ (^{\exists}j\in\Lambda)$, for all $t\in\mathbb{N}$,
\begin{align*}
\|\text{grad}f(w_t)\|
&\geq \frac{1}{N}\left\|\sum_{j\in\Lambda}|\langle x_j, w_t\rangle|^{-\frac{1}{2}}(I-w_tw_t^{\top})x_j\right\| - \frac{1}{N}\left\|\sum_{j\not\in\Lambda} |\langle x_j, w_t\rangle|^{-\frac{1}{2}}(I-w_tw_t^{\top})x_j\right\|\\
&= \left(\infty - \frac{1}{N}\left\|\sum_{j\not\in\Lambda} |\langle x_j, w_t\rangle|^{-\frac{1}{2}}(I-w_tw_t^{\top})x_j\right\|=\right)\infty
\end{align*}
holds. These complete the proof.
\end{proof}

\subsection{Gradient Norm versus SFO Complexity for Our Proposed] Optimization Problem}\label{subapdix:new_prob_gradnorm}
\begin{figure}[H]
\centering
\includegraphics[width=0.49\linewidth]{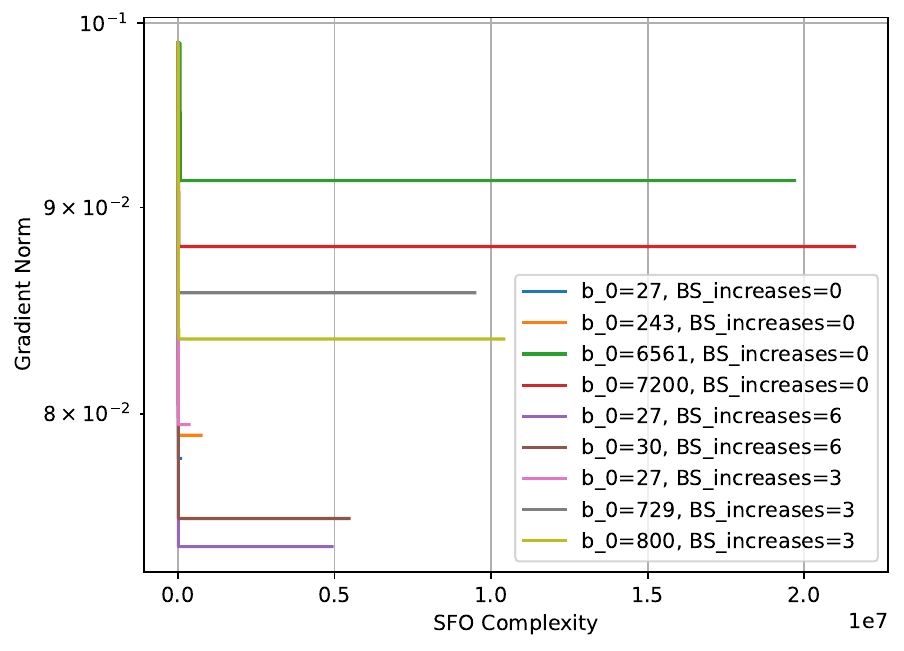}
\includegraphics[width=0.49\linewidth]{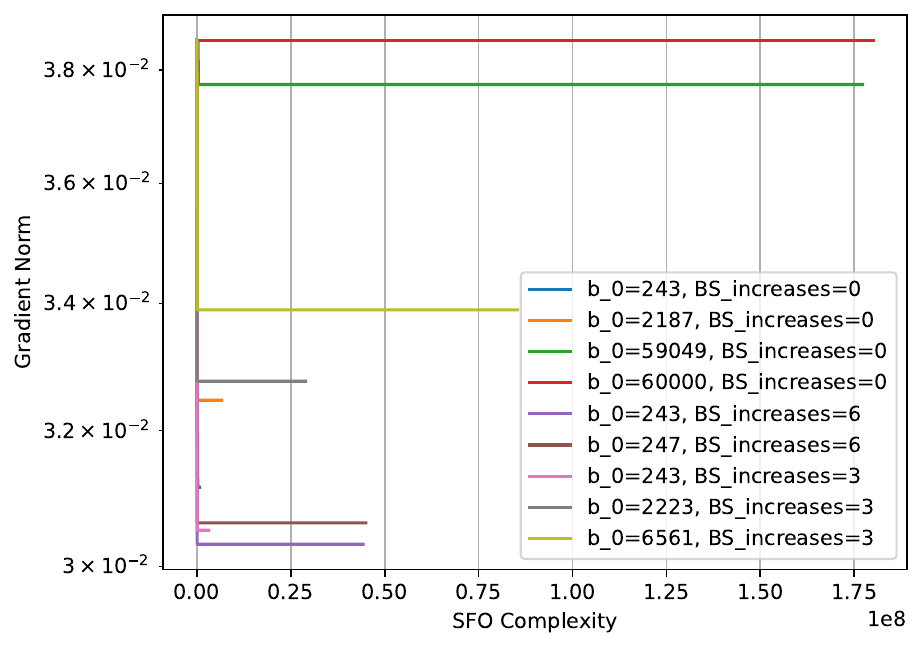}
\caption{Norm of objective function gradient versus SFO complexity. Plot on left (resp. right) shows norm of objective function gradient for our proposed problem when $(N,n)=(7200,1024)$ (resp. when $(N,n)=(60000,1024)$).}
 \label{fig:newprob_gradnorm}
\end{figure}
As shown in the plot of the norm of objective function gradient versus SFO complexity \ref{fig:newprob_gradnorm} for our proposed problem, performance was better with an increasing BS than with either a small or large constant BS. This result can be explained by Theorem \ref{thm:reduce_sfo}, which shows that an increasing BS reduces SFO complexity compared with a constant BS.

\section{Proofs of our Lemma and Theorems}\label{apdix:proofs}
Recall that $\Pi$ is a $\{1,\cdots,N\}$-valued probability distribution. Thus, we consider the probability space $(\{1,\cdots,N\},\mathfrak{P}(\{1,\cdots,N\}), \Pi)$. For a random variable $\xi$ distributed as $\Pi$ and a function $g:\{1,\cdots,N\}\times\mathcal{M}\ni(\xi,x)\mapsto g_{\xi}(x)\in\mathbb{R}$, we define $\mathbb{E}_{\xi}[g_{\xi}(x)]\coloneq\mathbb{E}_{\xi\sim\Pi}[g_{\xi}(x)]\coloneq\int_{\{1,\cdots,N\}}g_{\xi}(x)\Pi(d\xi)=\sum_{j=1}^N g_{j}(x)\Pi(\{j\})$. Hence, the variance is defined as $\mathbb{V}_{\xi\sim\Pi}(g_{\xi}(x))\coloneq\mathbb{E}_{\xi\sim\Pi}[\left\|g_{\xi}(x)-\mathbb{E}_{\xi\sim\Pi}[g_{\xi}(x)]\right\|^2]$. Let $(\xi_{i,t})_{i=1}^N$ be a sequence of i.i.d. random variables distributed as $\Pi$, let $\boldsymbol{\xi}_t\coloneq(\xi_{1,t},\cdots,\xi_{b_t,t})^{\top}\in\{1,\cdots,N\}^{b_t}$, let $g:\{1,\cdots,N\}^{b_t}\times\mathcal{M}\ni (\boldsymbol{\xi},x)\mapsto g_{\boldsymbol{\xi}}(x)\in\mathbb{R}$ be a function, and let $x_t$ be a point at the $t$-th iteration generated by the updating rule of RSGD. We can introduce a natural extension of the expectation notation to a multivariate random variable:
\begin{align*}
\mathbb{E}_{\boldsymbol{\xi}_t}[g_{\boldsymbol{\xi}_t}(x_t)]
&\coloneq\mathbb{E}_{\boldsymbol{\xi}_t\sim\Pi^{b_t}}[g_{\boldsymbol{\xi}_t}(x_t)]
\coloneq\int_{\{1,\cdots,N\}^{b_t}}g_{\boldsymbol{\xi}_t}(x_t)\Pi^{b_t}(d\boldsymbol{\xi}_t)\\
&=\sum_{j_{1}=1}^N\cdots\sum_{j_{b_t}=1}^N g_{(j_1,\cdots,j_{b_t})^{\top}}(x_t)\Pi(\{j_{1}\})\cdots\Pi(\{j_{b_t}\})\\
&=\mathbb{E}_{\xi_{1,t}}\mathbb{E}_{\xi_{2,t}}\cdots\mathbb{E}_{\xi_{b_t,t}}[g_{\boldsymbol{\xi}_t}(x_t)].
\end{align*}
For $\boldsymbol{\xi}_1, \cdots, \boldsymbol{\xi}_t$, a total expectation is defined as $\mathbb{E}\coloneq\mathbb{E}_{\boldsymbol{\xi}_1}\cdots\mathbb{E}_{\boldsymbol{\xi}_t}$. Note that, from Assumption \ref{asm:sto_gra}, $\mathbb{E}_{\xi}\left[\|\mathrm{grad} f_{\xi}(x) - \mathrm{grad} f(x)\|_{x}^2\right]=\mathbb{V}_{\xi}(\mathrm{grad} f_{\xi}(x))\leq\sigma^2$ holds, which implies $\mathbb{E}_{\boldsymbol{\xi}}\left[\|\mathrm{grad} f_{\boldsymbol{\xi}}(x) - \mathrm{grad} f(x)\|_{x}^2\right]\leq\frac{\sigma^2}{b_t}$. The following result serves as a preliminary for Lemma \ref{lem:undl_anal}.
\begin{lemma}
[Descent Lemma]\label{apdix:des_lem} Let $(x_t)_t$ be a sequence generated by RSGD and $(\eta_t)_t$ be a positive-valued sequence. Then, under Assumptions \ref{asm:rtr_smooth} and \ref{asm:sto_gra}, we obtain
\begin{align*}
\mathbb{E}[f(x_{t+1})]
\leq \mathbb{E}[f(x_t)] + \frac{L_r \sigma^2 \eta_t^2}{2b_t} - \eta_t \left(1 - \frac{L_r \eta_t}{2}\right) \mathbb{E}[\|\mathrm{grad}f(x_t)\|_{x_t}^2].
\end{align*}
\end{lemma}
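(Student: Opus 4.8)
The plan is to prove this exactly as one proves the Euclidean descent lemma, transplanted to the manifold setting: apply the retraction-smoothness inequality (Assumption~\ref{asm:rtr_smooth}) along the RSGD step, then take expectations, using the independence of the minibatch indices $\boldsymbol{\xi}_t$ from $x_t$ together with the unbiasedness and bounded-variance parts of Assumption~\ref{asm:sto_gra}.

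Concretely, first I would set $v = -\eta_t\,\mathrm{grad}f_{B_t}(x_t)\in T_{x_t}\mathcal{M}$, so that $x_{t+1}=R_{x_t}(v)$, and invoke Assumption~\ref{asm:rtr_smooth} at $x=x_t$ to get
\begin{align*}
f(x_{t+1}) \leq f(x_t) - \eta_t\langle\mathrm{grad}f(x_t),\mathrm{grad}f_{B_t}(x_t)\rangle_{x_t} + \frac{L_r\eta_t^2}{2}\|\mathrm{grad}f_{B_t}(x_t)\|_{x_t}^2 .
\end{align*}
Next I would take the conditional expectation $\mathbb{E}_{\boldsymbol{\xi}_t}[\,\cdot\,]$, i.e.\ average over the minibatch drawn at step $t$ with $x_t$ held fixed, which is legitimate because $\boldsymbol{\xi}_t$ is independent of $(x_s)_{s\le t}$ as noted in Section~\ref{sec:preli}. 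The cross term collapses to $-\eta_t\|\mathrm{grad}f(x_t)\|_{x_t}^2$ by the unbiasedness $\mathbb{E}_{\boldsymbol{\xi}_t}[\mathrm{grad}f_{B_t}(x_t)]=\mathrm{grad}f(x_t)$ (part (I) of Assumption~\ref{asm:sto_gra}, averaged over the $b_t$ i.i.d.\ samples). For the quadratic term I would use the Pythagorean bias–variance splitting in the inner-product space $(T_{x_t}\mathcal{M},\langle\cdot,\cdot\rangle_{x_t})$,
\begin{align*}
\mathbb{E}_{\boldsymbol{\xi}_t}\!\left[\|\mathrm{grad}f_{B_t}(x_t)\|_{x_t}^2\right] = \|\mathrm{grad}f(x_t)\|_{x_t}^2 + \mathbb{E}_{\boldsymbol{\xi}_t}\!\left[\|\mathrm{grad}f_{B_t}(x_t)-\mathrm{grad}f(x_t)\|_{x_t}^2\right] \leq \|\mathrm{grad}f(x_t)\|_{x_t}^2 + \frac{\sigma^2}{b_t},
\end{align*}
where the final inequality is exactly the minibatch variance bound $\mathbb{E}_{\boldsymbol{\xi}}[\|\mathrm{grad}f_{\boldsymbol{\xi}}(x)-\mathrm{grad}f(x)\|_x^2]\le\sigma^2/b_t$ recorded just before the lemma statement (a consequence of part (II) of Assumption~\ref{asm:sto_gra}). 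Substituting and collecting the $\|\mathrm{grad}f(x_t)\|_{x_t}^2$ coefficients gives $\mathbb{E}_{\boldsymbol{\xi}_t}[f(x_{t+1})] \leq f(x_t) + \tfrac{L_r\sigma^2\eta_t^2}{2b_t} - \eta_t(1-\tfrac{L_r\eta_t}{2})\|\mathrm{grad}f(x_t)\|_{x_t}^2$. Finally I would apply the remaining expectations $\mathbb{E}_{\boldsymbol{\xi}_1}\cdots\mathbb{E}_{\boldsymbol{\xi}_{t-1}}$ by the tower property — every term on the right depends only on $\boldsymbol{\xi}_1,\dots,\boldsymbol{\xi}_{t-1}$ through $x_t$ — turning $f(x_t)$ into $\mathbb{E}[f(x_t)]$ and $\|\mathrm{grad}f(x_t)\|_{x_t}^2$ into $\mathbb{E}[\|\mathrm{grad}f(x_t)\|_{x_t}^2]$, which is the claim.

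There is no serious obstacle: this is the Riemannian analogue of the classical descent lemma and the argument is essentially bookkeeping. The only points needing a moment's care are (a) confirming that the independence structure of Section~\ref{sec:preli} genuinely licenses the conditional-expectation computation at step $t$, and (b) observing that the bias–variance identity used for the quadratic term is valid because $\langle\cdot,\cdot\rangle_{x_t}$ is a bona fide inner product on $T_{x_t}\mathcal{M}$, so the cross term $\mathbb{E}_{\boldsymbol{\xi}_t}[\langle\mathrm{grad}f_{B_t}(x_t)-\mathrm{grad}f(x_t),\mathrm{grad}f(x_t)\rangle_{x_t}]$ vanishes by unbiasedness; both are immediate, and the rest is tracking the $\eta_t$ and $b_t$ factors.
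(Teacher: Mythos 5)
Your proposal is correct and follows essentially the same route as the paper's proof: apply Assumption \ref{asm:rtr_smooth} with $v=-\eta_t\,\mathrm{grad}f_{B_t}(x_t)$, condition on the history to collapse the cross term via unbiasedness and bound the quadratic term via the bias--variance split and the minibatch variance bound $\sigma^2/b_t$, then take total expectation. The only cosmetic difference is that you state the Pythagorean identity as an equality where the paper writes an intermediate inequality before noting the cross term vanishes; the content is identical.
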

\begin{proof}
Under Assumption \ref{asm:rtr_smooth}, we start with
\begin{align*}
f(x_{t+1})
&\leq f(x_t) - \eta_t \langle \mathrm{grad} f(x_t), \mathrm{grad} f_{B_t}(x_t) \rangle_{x_t} + \frac{\eta_t^2}{2}L_r \|\mathrm{grad} f_{B_t}(x_t)\|_{x_t}^2,
\end{align*}
\begin{align*}
\mathbb{E}[\|\mathrm{grad} f_{B_t}(x_t)\|_{x_t}^2 |\mathcal{F}_{t+1}]
&= \mathbb{E}[\|\mathrm{grad} f_{B_t}(x_t) - \mathrm{grad} f(x_t) + \mathrm{grad} f(x_t)\|_{x_t}^2 |\mathcal{F}_{t+1}]\\
&\leq \mathbb{E}[\|\mathrm{grad} f_{B_t}(x_t) - \mathrm{grad} f(x_t)\|_{x_t}^2 |\mathcal{F}_{t+1}] + \mathbb{E}[\|\mathrm{grad} f(x_t)\|_{x_t}^2 |\mathcal{F}_{t+1}]\\
&\quad + 2\mathbb{E}[\langle \mathrm{grad} f_{B_t}(x_t) - \mathrm{grad} f(x_t), \mathrm{grad} f(x_t) \rangle_{x_t} |\mathcal{F}_{t+1}]\\
&= \mathbb{E}[\|\mathrm{grad} f_{B_t}(x_t) - \mathrm{grad} f(x_t)\|_{x_t}^2 |\mathcal{F}_{t+1}] + \mathbb{E}[\|\mathrm{grad} f(x_t)\|_{x_t}^2 |\mathcal{F}_{t+1}]\\
&\quad + 2\langle \mathbb{E}[\mathrm{grad} f_{B_t}(x_t)|\mathcal{F}_{t+1}] - \mathrm{grad} f(x_t), \mathrm{grad} f(x_t) \rangle_{x_t}\\
&\leq \frac{\sigma^2}{b_t} + \mathbb{E}[\|\mathrm{grad} f(x_t)\|_{x_t}^2 |\mathcal{F}_{t+1}] \qquad\text{a.s.},
\end{align*}
and
\begin{align*}
\mathbb{E}[\langle \mathrm{grad} f(x_t), \mathrm{grad} f_{B_t}(x_t) \rangle_{x_t} |\mathcal{F}_{t+1}]
= \langle \mathrm{grad} f(x_t), \mathbb{E}[\mathrm{grad} f_{B_t}(x_t) |\mathcal{F}_{t+1}] \rangle_{x_t}
= \|\mathrm{grad} f(x_t)\|_{x_t}^2 \quad\text{a.s.}
\end{align*}
Taking the total expectation of the above equations, we have
\begin{align*}
\mathbb{E}[f(x_{t+1})]
&\leq \mathbb{E}[f(x_t)] - \eta_t \mathbb{E}[\langle \mathrm{grad} f(x_t), \mathrm{grad} f_{B_t}(x_t) \rangle_{x_t}] + \frac{\eta_t^2}{2}L_r \mathbb{E}[\|\mathrm{grad} f_{B_t}(x_t)\|_{x_t}^2]\\
&= \mathbb{E}[f(x_t)] - \eta_t \mathbb{E}[\mathbb{E}[\langle \mathrm{grad} f(x_t), \mathrm{grad} f_{B_t}(x_t) \rangle_{x_t} |\mathcal{F}_{t+1}]] + \frac{\eta_t^2}{2}L_r \mathbb{E} [\mathbb{E}[\|\mathrm{grad} f_{B_t}(x_t)\|_{x_t}^2 |\mathcal{F}_{t+1}]]\\
&\leq \mathbb{E}[f(x_t)] - \eta_t \mathbb{E}[\|\mathrm{grad} f(x_t)\|_{x_t}^2] + \frac{\eta_t^2 L_r}{2}\left(\frac{\sigma^2}{b_t} + \mathbb{E}[\|\mathrm{grad} f(x_t)\|_{x_t}^2]\right)\\
&= \mathbb{E}[f(x_t)] + \frac{L_r \sigma^2 \eta_t^2}{2b_t} - \eta_t \left(1 - \frac{L_r \eta_t}{2}\right) \mathbb{E}[\|\mathrm{grad}f(x_t)\|_{x_t}^2].
\end{align*}
\end{proof}

\subsection{Proof of Lemma \ref{lem:undl_anal}}
\label{prf:undl_anal}
On the basis of Lemma \ref{apdix:des_lem}, Lemma \ref{lem:undl_anal} is proven as follows.
\begin{proof}
Taking $\eta_{\max} < \frac{2}{L_r}$ into consideration, we start with
\begin{align*}
&\sum_{t=0}^{T-1} \eta_t \left(1 - \frac{L_r \eta_t}{2}\right) \mathbb{E}[\|\mathrm{grad}f(x_t)\|_{x_t}^2] 
\geq \left(1 - \frac{L_r \eta_{\max}}{2}\right) \min_{t\in\{0, \cdots, T-1\}}\mathbb{E}[\|\mathrm{grad}f(x_t)\|_{x_t}^2] \sum_{t=0}^{T-1} \eta_t.
\end{align*}
By taking the summation on both sides of the inequality for Lemma \ref{apdix:des_lem} and evaluating it using the above inequality, we obtain
\begin{align*}
&\left(1 - \frac{L_r \eta_{\max}}{2}\right) \min_{t\in\{0, \cdots, T-1\}}\mathbb{E}[\|\mathrm{grad}f(x_t)\|_{x_t}^2] \sum_{t=0}^{T-1} \eta_t\\
&\quad\leq \sum_{t=0}^{T-1} \eta_t \left(1 - \frac{L_r \eta_t}{2}\right) \mathbb{E}[\|\mathrm{grad}f(x_t)\|_{x_t}^2]\\
&\quad \leq \sum_{t=0}^{T-1} (\mathbb{E}[f(x_t)] - \mathbb{E}[f(x_{t+1})]) + \sum_{t=0}^{T-1}\frac{L_r \sigma^2 \eta_t^2}{2b_t}
= \mathbb{E}[f(x_0) - f(x_T)] + \sum_{t=0}^{T-1}\frac{L_r \sigma^2 \eta_t^2}{2b_t}.
\end{align*}
$(\mathbb{E}[f(x_t)])_{t\in\{0, \cdots, T\}}$ being a decreasing sequence implies
\begin{align*}
\mathbb{E}[f(x_0) - f(x_T)]
\leq \mathbb{E}[f(x_0) - f^{\star}]
= f(x_0) - f^{\star}.
\end{align*}
Therefore, 
\begin{align*}
\min_{t\in\{0, \cdots, T-1\}}\mathbb{E}[\|\mathrm{grad}f(x_t)\|_{x_t}^2]
&\leq \frac{2(f(x_0) - f^{\star})}{2 - L_r \eta_{\max}} \frac{1}{\sum_{t=0}^{T-1} \eta_t} + \frac{L_r \sigma^2}{2 - L_r \eta_{\max}} \frac{1}{\sum_{t=0}^{T-1} \eta_t}\sum_{t=0}^{T-1} \frac{\eta_t^2}{b_t}.
\end{align*}
\end{proof}

\subsection{Proof of Theorem \ref{thm:c_d}}
\label{prf:c_d}
\begin{proof}
We set $b_t = b$ in Lemma \ref{lem:undl_anal}.\newline
\textbf{[Constant LR \eqref{eq:const_lr}]}\newline
From Lemma \ref{lem:undl_anal}, we have
\begin{align*}
\sum_{t=0}^{T-1} \eta_t
= \sum_{t=0}^{T-1} \eta_{\max}
= \eta_{\max} T,
\quad \sum_{t=0}^{T-1} \frac{\eta_t^2}{b_t}
= \frac{1}{\sum_{t=0}^{T-1}\eta_{\max}} \sum_{t=0}^{T-1} \frac{\eta_{\max}^2}{b}
= \frac{\eta_{\max}^2}{b}T,
\end{align*}
which yields
\begin{align*}
\min_{t\in\{0, \cdots, T-1\}}\mathbb{E}[\|\mathrm{grad}f(x_t)\|_{x_t}^2]
\leq \frac{2(f(x_0) - f^{\star})}{(2 - L_r \eta_{\max})\eta_{\max} T} + \frac{L_r \eta_{\max}}{2 - L_r \eta_{\max}} \frac{\sigma^2}{b}.
\end{align*}
\textbf{[Diminishing LR \eqref{eq:dim_lr}]}\newline
From Lemma \ref{lem:undl_anal}, we have
\begin{align*}
&\sum_{t=0}^{T-1} \eta_t
= \eta_{\max} \sum_{t=0}^{T-1} \frac{1}{\sqrt{t+1}}
\geq \eta_{\max} \sum_{t=0}^{T-1} \frac{1}{\sqrt{T}}
= \eta_{\max} \frac{T}{\sqrt{T}}
= \eta_{\max} \sqrt{T}
\end{align*}
and
\begin{align*}
\sum_{t=0}^{T-1} \frac{\eta_t^2}{b_t}
= \frac{\eta_{\max}^2}{b}\sum_{t=0}^{T-1} \frac{1}{t+1}
\leq \frac{\eta_{\max}^2}{b}\left(1 + \int_{1}^T \frac{dt}{t}\right)
= \frac{\eta_{\max}^2}{b} + \frac{\eta_{\max}^2}{b} \log T,
\end{align*}
which yields
\begin{align*}
\min_{t\in\{0, \cdots, T-1\}}\mathbb{E}[\|\mathrm{grad}f(x_t)\|_{x_t}^2]
&\leq \frac{f(x_0) - f^{\star}}{(2 - L_r \eta_{\max})\eta_{\max}} \frac{1}{\sqrt{T}} + \frac{L_r \eta_{\max}}{2 - L_r \eta_{\max}} \frac{\sigma^2}{b} \frac{1+\log T}{\sqrt{T}}.
\end{align*}
\textbf{[Cosine Annealing LR \eqref{eq:cosan_lr}]}\newline
From Lemma \ref{lem:undl_anal}, we have
\begin{align*}
\sum_{t=0}^{T-1} \eta_t
&\geq \int_{0}^T \eta_t dt
=\frac{\eta_{\max} + \eta_{\min}}{2} T + \frac{\eta_{\max} - \eta_{\min}}{2} \int_{0}^{T} \cos \left(\frac{t}{T}\pi\right) dt\\
&= \frac{\eta_{\max} + \eta_{\min}}{2} T
\end{align*}
and
\begin{align*}
\sum_{t=0}^{T-1} \eta_t^2
&= \frac{(\eta_{\max} + \eta_{\min})^2}{4}T + \frac{\eta_{\max}^2 + \eta_{\min}^2}{2}\sum_{t=0}^{T-1} \cos \frac{t}{T}\pi + \frac{(\eta_{\max} - \eta_{\min})^2}{4}\sum_{t=0}^{T-1} \cos^2 \frac{t}{T}\pi\\
&\leq \frac{(\eta_{\max} + \eta_{\min})^2}{4}T + \frac{\eta_{\max}^2 - \eta_{\min}^2}{2}T + \frac{(\eta_{\max} - \eta_{\min})^2}{4}T\\
&= \eta_{\max}^2 T,
\end{align*}
which yields
\begin{align*}
\min_{t\in\{0, \cdots, T-1\}}\mathbb{E}[\|\mathrm{grad}f(x_t)\|_{x_t}^2]
&\leq \frac{4(f(x_0) - f^{\star})}{(2 - L_r \eta_{\max})(\eta_{\max} + \eta_{\min})} \frac{1}{T} + \frac{2L_r \eta_{\max}^2}{(2 - L_r \eta_{\max})(\eta_{\max} + \eta_{\min})} \frac{\sigma^2}{b}.
\end{align*}
\textbf{[Polynomial Decay LR \eqref{eq:poly_dec_lr}]}\newline
We start by recalling the Riemann integral of $g(t)\coloneq(1-t)^p \geq 0$ $(0\leq t\leq1)$. $U(P_T)\coloneq\sum_{t=0}^{T-1} \frac{1}{T}(1 - \frac{t}{T})^p$ is an upper Riemann sum for the function $g$ over the partition $P_T \coloneq\{0, \frac{1}{T}, \cdots, \frac{T-1}{T}, 1\}$ due to $g$ being monotonically decreasing. From the definition of a Riemann integral, if $\int_0^1 g(t)dt$ exists, then
\begin{align*}
\inf_{P:\text{Partitions of [0,1]}} U(P) = \int_0^1 g(t)dt.
\end{align*}
In fact, the integral value exists. Thus, we obtain
\begin{align*}
\sum_{t=0}^{T-1} \frac{1}{T}\left(1 - \frac{t}{T}\right)^p
= U(P_T)
\geq \inf_{P:\text{Partitions of [0,1]}} U(P)
=\int_0^1 (1-t)^p dt
= \frac{1}{p+1}.
\end{align*}
Similarly, if we define $L(P_T)$ as $\sum_{t=1}^{T} \frac{1}{T}\left(1 - \frac{t}{T}\right)^p$, then $L(P_T)$ is a lower Riemann sum for the function $g$ over the partition $P_T$, and the supremum of $L(P_T)$ equals $\int_0^1 g(t)dt$. Considering
\begin{align*}
\sum_{t=0}^{T-1} \frac{1}{T}\left(1 - \frac{t}{T}\right)^p
= L(P_T) + \frac{1}{T},
\end{align*}
we have
\begin{align*}
\sum_{t=0}^{T-1} \frac{1}{T}\left(1 - \frac{t}{T}\right)^p
= L(P_T) + \frac{1}{T}
\leq \sup_{P:\text{Partitions of [0,1]}} L(P) + \frac{1}{T}
= \int_0^1 (1-t)^p dt + \frac{1}{T}
= \frac{1}{p+1} + \frac{1}{T}.
\end{align*}
Hence,
\begin{align*}
\frac{T}{p+1}
\leq \sum_{t=0}^{T-1}\left(1 - \frac{t}{T}\right)^p
\leq \frac{T}{p+1} + 1
\end{align*}
holds. By replacing $g$ with $(1+t)^{2p}(0\leq t\leq 1)$ and using the same logic, we obtain
\begin{align*}
\sum_{t=0}^{T-1} \frac{1}{T}\left(1 - \frac{t}{T}\right)^{2p}
\leq \int_0^1 (1-t)^{2p} dt + \frac{1}{T}
= \frac{1}{2p+1} + \frac{1}{T}.
\end{align*}
Therefore, considering Lemma \ref{lem:undl_anal},
\begin{align*}
\sum_{t=0}^{T-1} \eta_t
&= \eta_{\min} T + (\eta_{\max} - \eta_{\min})\sum_{t=0}^{T-1} \left(1 - \frac{t}{T}\right)^p\\
&\geq \eta_{\min} T + (\eta_{\max} - \eta_{\min}) \frac{T}{p+1}
= \frac{\eta_{\max} + p\eta_{\min}}{p+1}T
\end{align*}
and
\begin{align*}
\sum_{t=0}^{T-1} \eta_t^2
&= \eta_{\min} T + 2\eta_{\min}(\eta_{\max}- \eta_{\min}) \sum_{t=0}^{T-1} \left(1 - \frac{t}{T}\right)^p + (\eta_{\max}- \eta_{\min})^2 \sum_{t=0}^{T-1} \left(1 - \frac{t}{T}\right)^{2p}\\
&\leq \eta_{\min} T + 2\eta_{\min}(\eta_{\max}- \eta_{\min})\left(1 + \frac{T}{p+1}\right) + (\eta_{\max}- \eta_{\min})^2 \left(1 + \frac{T}{2p+1}\right)\\
&= \eta_{\max}^2 - \eta_{\min}^2 + \left(\eta_{\min} + \frac{2\eta_{\min}(\eta_{\max}- \eta_{\min})}{p+1} + \frac{(\eta_{\max}- \eta_{\min})^2}{2p+1}\right)T
\end{align*}
hold, which yields
\begin{align*}
&\min_{t\in\{0, \cdots, T-1\}}\mathbb{E}[\|\mathrm{grad}f(x_t)\|_{x_t}^2]\\
&\leq \frac{p+1}{(2 - L_r \eta_{\max})(\eta_{\max} + p\eta_{\min})}\left\{2(f(x_0) - f^{\star}) + \frac{L_r(\eta_{\max}^2 - \eta_{\min}^2)\sigma^2}{b}\right\} \frac{1}{T}\\
&\quad + \frac{L_r (p+1)(\eta_{\max}^2 - \eta_{\min}^2)}{(2 - L_r \eta_{\max})(\eta_{\max} + p\eta_{\min})}\left(\eta_{\min} + \frac{2\eta_{\min}(\eta_{\max}- \eta_{\min})}{p+1} + \frac{(\eta_{\max}- \eta_{\min})^2}{2p+1}\right) \frac{\sigma^2}{b}.
\end{align*}
\end{proof}

\subsection{Proof of Theorem \ref{thm:i_d}}
\label{prf:i_d}
\begin{proof}
The evaluations in all the following cases are based on Lemma \ref{lem:undl_anal} and use estimations of $\sum_{t=0}^{T-1} \eta_t$ in the proof of Theorem \ref{thm:c_d}.\newline
\textbf{[Exponential Growth BS \eqref{eq:exp_g_bs} and Constant LR \eqref{eq:const_lr}]}\newline
From the fact that sums of positive term series are the supremum of their finite sums, we have 
\begin{align}\label{eq:esti_exp_bs}
\sum_{t=0}^{T-1} \frac{1}{b_t}
= \sum_{m=0}^{M-1} \frac{K}{b_0 \gamma^m} + \frac{T - KM}{b_0 \gamma^M}
\leq \sum_{m=0}^{M} \frac{K}{b_0 \gamma^m}
\leq \frac{K}{b_0}\sum_{m=0}^{\infty} \frac{1}{\gamma^m}
= \frac{K\gamma}{b_0(\gamma -1)},
\end{align}
which implies
\begin{align*}
\sum_{t=0}^{T-1} \frac{\eta_{\max}^2}{b_t}
= \eta_{\max}^2 \sum_{t=0}^{T-1} \frac{1}{b_t}
\leq \frac{\eta_{\max}^2 K\gamma}{b_0 (\gamma - 1)}.
\end{align*}
Therefore,
\begin{align*}
\min_{t\in\{0, \cdots, T-1\}}\mathbb{E}[\|\mathrm{grad}f(x_t)\|_{x_t}^2]
\leq \frac{1}{(2 - L_r \eta_{\max})\eta_{\max}}\left\{2(f(x_0) - f^{\star}) + \frac{L_r \eta_{\max}^2 K\gamma}{\gamma - 1}\frac{\sigma^2}{b_0}\right\}\frac{1}{T}.
\end{align*}
\textbf{[Exponential Growth BS \eqref{eq:exp_g_bs} and Diminishing LR \eqref{eq:dim_lr}]}\newline
Using \eqref{eq:esti_exp_bs}, we have
\begin{align*}
\sum_{t=0}^{T-1} \frac{\eta_t^2}{b_t}
= \eta_{\max}^2 \sum_{t=0}^{T-1}\frac{1}{(t+1)b_t}
\leq \eta_{\max}^2 \sum_{t=0}^{T-1}\frac{1}{b_t}
\leq \frac{\eta_{\max}^2 K\gamma}{b_0 (\gamma - 1)},
\end{align*}
which yields
\begin{align*}
\min_{t\in\{0, \cdots, T-1\}}\mathbb{E}[\|\mathrm{grad}f(x_t)\|_{x_t}^2]
\leq \frac{1}{(2 - L_r \eta_{\max})\eta_{\max}}\left\{2(f(x_0) - f^{\star}) + \frac{L_r \eta_{\max}^2 K \gamma}{\gamma - 1}\frac{\sigma^2}{b_0}\right\}\frac{1}{\sqrt{T}}.
\end{align*}
\textbf{[Exponential Growth BS \eqref{eq:exp_g_bs} and Cosine Annealing LR \eqref{eq:cosan_lr}]}\newline
From \eqref{eq:esti_exp_bs} and $|\cos x|\leq 1$, we have
\begin{align*}
\sum_{t=0}^{T-1} \frac{\eta_t^2}{b_t}
&= \sum_{t=0}^{T-1} \frac{1}{b_t}\left(\frac{(\eta_{\max}+\eta_{\min})^2}{4} + \frac{(\eta_{\max}-\eta_{\min})^2}{4}\cos^2 \frac{t}{T}\pi + \frac{\eta_{\max}^2 - \eta_{\min}^2}{2}\cos\frac{t}{T}\pi \right)\\
&\leq \eta_{\max}^2 \sum_{t=0}^{T-1} \frac{1}{b_t}
\leq \frac{\eta_{\max}^2 K\gamma}{b_0(\gamma - 1)},
\end{align*}
which yields
\begin{align*}
\min_{t\in\{0, \cdots, T-1\}}\mathbb{E}[\|\mathrm{grad}f(x_t)\|_{x_t}^2]
\leq \frac{2}{(2 - L_r \eta_{\max})(\eta_{\max} +\eta_{\min})} \left\{2(f(x_0) - f^{\star}) + \frac{L_r \eta_{\max}^2 K\gamma}{\gamma -1 } \frac{\sigma^2}{b_0}\right\}\frac{1}{T}.
\end{align*}
\textbf{[Exponential Growth BS \eqref{eq:exp_g_bs} and Polynomial Decay LR \eqref{eq:poly_dec_lr}]}\newline
From \eqref{eq:esti_exp_bs} and $t\leq T$, we have
\begin{align*}
\sum_{t=0}^{T-1} \frac{\eta_t^2}{b_t}
&= \sum_{t=0}^{T-1} \frac{1}{b_t} \left\{\eta_{\min}^2 + (\eta_{\max} - \eta_{\min})^2 \left(1 - \frac{t}{T}\right)^{2p} + 2\eta_{\min}(\eta_{\max} - \eta_{\min})\left(1 - \frac{t}{T}\right)^p\right\}\\
&\leq \eta_{\max}^2 \sum_{t=0}^{T-1} \frac{1}{b_t}
\leq \frac{\eta_{\max}^2 K \gamma}{b_0(\gamma - 1)},
\end{align*}
which yields
\begin{align*}
\min_{t\in\{0, \cdots, T-1\}}\mathbb{E}[\|\mathrm{grad}f(x_t)\|_{x_t}^2]
&\leq \frac{p + 1}{(2 - L_r \eta_{\max})(\eta_{\max} + p\eta_{\min})} \left\{2(f(x_0) - f^{\star}) + \frac{L_r \eta_{\max}^2 K \gamma}{\gamma - 1}\frac{\sigma^2}{b_0}\right\}\frac{1}{T}.
\end{align*}
\textbf{[Polynomial Growth BS \eqref{eq:pol_g_bs} and Constant LR \eqref{eq:const_lr}]}\newline
We set $\underline{a} \coloneq a \land b_0$. Considering $c>1$, we have
\begin{align}\label{eq:esti_pol_bs}
\sum_{t=0}^{T-1} \frac{1}{b_t}
&= \sum_{m=0}^{M-1} \frac{K}{(am+b_0)^c} + \frac{T - KM}{(aM + b_0)^c}
\leq \sum_{m=0}^{M} \frac{K}{(am+b_0)^c}\nonumber
\leq \frac{K}{\underline{a}^{\lfloor c\rfloor}}\sum_{m=0}^{M} \frac{1}{(m + 1)^c}\nonumber\\
&= \frac{K}{\underline{a}^{\lfloor c\rfloor}}\sum_{m=1}^{\infty} \frac{1}{m^c}
= \frac{K \zeta(c)}{\underline{a}^{\lfloor c\rfloor}},
\end{align}
which yields
\begin{align*}
\sum_{t=0}^{T-1} \frac{\eta_t^2}{b_t}
= \eta_{\max}^2 \sum_{t=0}^{T-1} \frac{1}{b_t}
\leq \frac{\eta_{\max}^2K \zeta(c)}{\underline{a}^{\lfloor c\rfloor}}.
\end{align*}
Therefore,
\begin{align*}
\min_{t\in\{0, \cdots, T-1\}}\mathbb{E}[\|\mathrm{grad}f(x_t)\|_{x_t}^2]
\leq \frac{1}{(2 - L_r \eta_{\max})\eta_{\max}} \left(2(f(x_0) - f^{\star}) + \frac{L_r\eta_{\max}^2K \zeta(c) \sigma^2}{\underline{a}^{\lfloor c\rfloor}} \right)\frac{1}{T},
\end{align*}
where, for $c>1$, Riemann zeta function $\zeta(c) < \infty$; $\zeta(c)$ is monotonically decreasing on $c\in (1,\infty)$ and $\underset{c\to\infty}{\lim}\zeta(c)=1$.\newline
\textbf{[Polynomial Growth BS \eqref{eq:pol_g_bs} and Diminishing LR \eqref{eq:dim_lr}]}\newline
From \eqref{eq:esti_pol_bs}, we have
\begin{align*}
\sum_{t=0}^{T-1} \frac{\eta_t^2}{b_t}
= \eta_{\max}^2 \sum_{t=0}^{T-1} \frac{1}{(t+1)b_t}
\leq \eta_{\max}^2 \sum_{t=0}^{T-1} \frac{1}{b_t}
\leq \frac{\eta_{\max}^2K \zeta(c)}{\underline{a}^{\lfloor c\rfloor}},
\end{align*}
which yields
\begin{align*}
\min_{t\in\{0, \cdots, T-1\}}\mathbb{E}[\|\mathrm{grad}f(x_t)\|_{x_t}^2]
\leq \frac{1}{(2 - L_r \eta_{\max})\eta_{\max}} \left(2(f(x_0) - f^{\star}) + \frac{L_r\eta_{\max}^2K \zeta(c) \sigma^2}{\underline{a}^{\lfloor c\rfloor}} \right)\frac{1}{\sqrt{T}}.
\end{align*}
\textbf{[Polynomial Growth BS \eqref{eq:pol_g_bs} and Cosine Annealing LR \eqref{eq:cosan_lr}]}\newline
From \eqref{eq:esti_pol_bs} and the previous analysis for Cosine Annealing LR \eqref{eq:cosan_lr}, we have
\begin{align*}
\sum_{t=0}^{T-1} \frac{\eta_t^2}{b_t}
\leq \eta_{\max}^2 \sum_{t=0}^{T-1} \frac{1}{b_t}
\leq \frac{\eta_{\max}^2K \zeta(c)}{\underline{a}^{\lfloor c\rfloor}},
\end{align*}
which yields
\begin{align*}
\min_{t\in\{0, \cdots, T-1\}}\mathbb{E}[\|\mathrm{grad}f(x_t)\|_{x_t}^2]
\quad \leq \frac{2}{(2 - L_r \eta_{\max})(\eta_{\max} + \eta_{\min})} \left(2(f(x_0) - f^{\star}) + \frac{L_r\eta_{\max}^2K \zeta(c) \sigma^2}{\underline{a}^{\lfloor c\rfloor}}\right) \frac{1}{T}.
\end{align*}
\textbf{[Polynomial Growth BS \eqref{eq:pol_g_bs} and Polynomial Decay LR \eqref{eq:poly_dec_lr}]}\newline
From \eqref{eq:esti_pol_bs} and the previous analysis for Polynomial Decay LR \eqref{eq:poly_dec_lr}, we have
\begin{align*}
\sum_{t=0}^{T-1} \frac{\eta_t^2}{b_t}
\leq \eta_{\max}^2 \sum_{t=0}^{T-1} \frac{1}{b_t}
\leq \frac{\eta_{\max}^2K \zeta(c)}{\underline{a}^{\lfloor c\rfloor}},
\end{align*}
which yields
\begin{align*}
\min_{t\in\{0, \cdots, T-1\}}\mathbb{E}[\|\mathrm{grad}f(x_t)\|_{x_t}^2]
\quad \leq \frac{p+1}{(2 - L_r \eta_{\max})(\eta_{\max}+p\eta_{\min})} \left(2(f(x_0) - f^{\star}) + \frac{L_r\eta_{\max}^2K \zeta(c) \sigma^2}{\underline{a}^{\lfloor c\rfloor}}\right)\frac{1}{T}.
\end{align*}
\end{proof}

\subsection{Proof of Theorem \ref{thm:i_w}}\label{prf:i_w}
\begin{theorem}
[Detailed Version of Theorem \ref{thm:i_w}] \label{thm:i_w_detailed} We consider BSs \eqref{eq:exp_g_bs} and \eqref{eq:pol_g_bs} and warm-up LRs \eqref{eq:exp_g_lr} and \eqref{eq:pol_g_lr} with decay parts given by \eqref{eq:const_lr}, \eqref{eq:dim_lr}, \eqref{eq:cosan_lr}, or \eqref{eq:poly_dec_lr} under the assumptions of Lemma \ref{lem:undl_anal}. Then, the following holds for both constant and increasing BSs.
\begin{itemize}
\item Decay part: Diminishing \eqref{eq:dim_lr}
 \begin{align*} 
 \min_{t\in\{T_w, \cdots, T-1\}}\mathbb{E}[\|\mathrm{grad}f(x_t)\|_{x_t}^2]
 &\leq \frac{Q_1 +Q_2\sigma^2 b_0^{-1}}{\sqrt{T+1}-\sqrt{T_w +1}},
 \end{align*}
\item Decay part: Otherwise \eqref{eq:const_lr}, \eqref{eq:cosan_lr}, \eqref{eq:poly_dec_lr}
 \begin{align*}
 \min_{t\in\{T_w, \cdots, T-1\}}\mathbb{E}[\|\mathrm{grad}f(x_t)\|_{x_t}^2]
 &\leq \frac{\tilde{Q}_1 + \tilde{Q}_2\sigma^2 b_0^{-1}}{T-T_w},
 \end{align*}
\end{itemize}
where $Q_1, Q_2, \tilde{Q}_1$, and $\tilde{Q}_2$ are constants that do not depend on $T$.
\end{theorem}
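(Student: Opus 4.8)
The plan is to mirror the proof of Lemma~\ref{lem:undl_anal}, but to start the telescoping sum of the Descent Lemma (Lemma~\ref{apdix:des_lem}) at $t=T_w$ rather than at $t=0$, so that the minimum is taken only over the decay phase $\{T_w,\dots,T-1\}$; the one extra ingredient is a bound on $\mathbb{E}[f(x_{T_w})]-f^\star$. First I would sum the inequality of Lemma~\ref{apdix:des_lem} over $t=0,\dots,T_w-1$: since $\eta_t\le\eta_{\max}<2/L_r$, the terms $\eta_t(1-L_r\eta_t/2)\mathbb{E}[\|\mathrm{grad}f(x_t)\|_{x_t}^2]$ are nonnegative and may be discarded (this is where the warm-up needs care, because $(\mathbb{E}[f(x_t)])_{t\le T_w}$ is not monotone), yielding
\[
\mathbb{E}[f(x_{T_w})]\le f(x_0)+\sum_{t=0}^{T_w-1}\frac{L_r\sigma^2\eta_t^2}{2b_t}.
\]

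Next I would sum Lemma~\ref{apdix:des_lem} over $t=T_w,\dots,T-1$, telescope, use $\mathbb{E}[f(x_T)]\ge f^\star$, and substitute the bound above to obtain
\[
\sum_{t=T_w}^{T-1}\eta_t\Bigl(1-\tfrac{L_r\eta_t}{2}\Bigr)\mathbb{E}[\|\mathrm{grad}f(x_t)\|_{x_t}^2]\le f(x_0)-f^\star+\sum_{t=0}^{T-1}\frac{L_r\sigma^2\eta_t^2}{2b_t}.
\]
On the decay phase $\eta_t\le\eta_{\max}$, so $\eta_t(1-L_r\eta_t/2)\ge\eta_t(1-L_r\eta_{\max}/2)$; bounding the left side below by $(1-L_r\eta_{\max}/2)\bigl(\min_{t\in\{T_w,\dots,T-1\}}\mathbb{E}[\|\mathrm{grad}f(x_t)\|_{x_t}^2]\bigr)\sum_{t=T_w}^{T-1}\eta_t$ and rearranging gives the warm-up analogue of Lemma~\ref{lem:undl_anal}:
\[
\min_{t\in\{T_w,\dots,T-1\}}\mathbb{E}[\|\mathrm{grad}f(x_t)\|_{x_t}^2]\le\frac{2(f(x_0)-f^\star)}{(2-L_r\eta_{\max})\sum_{t=T_w}^{T-1}\eta_t}+\frac{L_r\sigma^2}{2-L_r\eta_{\max}}\cdot\frac{\sum_{t=0}^{T-1}\eta_t^2 b_t^{-1}}{\sum_{t=T_w}^{T-1}\eta_t}.
\]

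It then remains to estimate the two ratios for each combination of warm-up LR \eqref{eq:exp_g_lr}/\eqref{eq:pol_g_lr}, decay LR \eqref{eq:const_lr}--\eqref{eq:poly_dec_lr}, and BS \eqref{eq:exp_g_bs}/\eqref{eq:pol_g_bs}. The denominator $\sum_{t=T_w}^{T-1}\eta_t$ is handled exactly as in the proof of Theorem~\ref{thm:c_d} after the index shift $t\mapsto t-T_w$, $T\mapsto T-T_w$: it is $\ge c(T-T_w)$ for the constant, cosine-annealing, and polynomial-decay parts, and $\ge c\,(\sqrt{T+1}-\sqrt{T_w+1})$ for the diminishing part. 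For the numerator I split $\sum_{t=0}^{T-1}\eta_t^2 b_t^{-1}$ into the warm-up part $\sum_{t=0}^{T_w-1}$, which is a finite sum bounded by $\eta_{\max}^2 T_w/b_0$ and independent of $T$ (the hypothesis $\delta^{2l}<\gamma$ gives a cleaner bound, controlled even uniformly in $T_w$), and the decay part, which satisfies $\sum_{t=T_w}^{T-1}\eta_t^2 b_t^{-1}\le\eta_{\max}^2\sum_{t=T_w}^{T-1}b_t^{-1}$ and is bounded by $\eta_{\max}^2 K\gamma/(b_0(\gamma-1))$ for \eqref{eq:exp_g_bs} via \eqref{eq:esti_exp_bs} and by $\eta_{\max}^2 K\zeta(c)/\underline{a}^{\lfloor c\rfloor}$ for \eqref{eq:pol_g_bs} via \eqref{eq:esti_pol_bs}. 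Hence the numerator is $O(b_0^{-1})$ with a $T$-independent constant, and dividing by the denominator estimates yields the two claimed bounds, with the $\sigma^2 b_0^{-1}$ factor coming entirely from the numerator and $Q_1,Q_2,\tilde Q_1,\tilde Q_2$ independent of $T$. (For a constant BS the decay part of the numerator instead contributes a term $\propto (T-T_w)/b_0$, respectively a $\log$ factor for the diminishing decay, which is how Theorem~\ref{thm:c_w} acquires its extra terms.)

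The step I expect to be the main obstacle is the control of $\mathbb{E}[f(x_{T_w})]$: because the learning rate \emph{increases} during warm-up, one cannot simply assert $\mathbb{E}[f(x_{T_w})]\le f(x_0)$, so the warm-up noise must be carried along and then merged with the decay-phase noise into the single sum $\sum_{t=0}^{T-1}\eta_t^2 b_t^{-1}$; everything else is adapting the estimates of Theorems~\ref{thm:c_d} and \ref{thm:i_d} under an index shift and verifying that $\eta_t\le\eta_{\max}=\eta_{T_w-1}$ on the whole trajectory so that Lemma~\ref{apdix:des_lem} and the coefficient bound $1-L_r\eta_t/2\ge 1-L_r\eta_{\max}/2$ apply.
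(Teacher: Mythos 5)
Your proposal is correct and follows essentially the same route as the paper: apply the underlying analysis of Lemma~\ref{lem:undl_anal} restricted to the decay window $\{T_w,\dots,T-1\}$, then reuse the lower bounds on $\sum_{t=T_w}^{T-1}\eta_t$ from Theorem~\ref{thm:c_d} (after the shift $t\mapsto t-T_w$) and the upper bounds \eqref{eq:esti_exp_bs} and \eqref{eq:esti_pol_bs} on the noise sum for each BS/LR combination. If anything, you are more careful than the paper on the one delicate point: the paper writes $f(x_0)-f^{\star}$ in its final bounds while summing the noise only over $[T_w,T-1]$, which tacitly assumes $\mathbb{E}[f(x_{T_w})]\leq f(x_0)$, whereas your explicit bound $\mathbb{E}[f(x_{T_w})]\leq f(x_0)+\sum_{t=0}^{T_w-1}L_r\sigma^2\eta_t^2/(2b_t)$ correctly accounts for the (non-monotone) warm-up phase and only changes the $T$-independent constants.
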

\begin{proof}
The evaluations in all these cases are based on Lemma \ref{lem:undl_anal}. We start with an exponential growth BS and a warm-up LR. From $l, l_w \in \mathbb{N}$, and $l_w \geq l$, there exist $\alpha, \beta \in \mathbb{N}\cup\{0\}$ such that $l_w = \alpha l + \beta$. Note that we define summations from $0$ to $-1$ as $0$. Furthermore, from $\forall u,v>0: 0\leq u^2+v^2+uv$ holds.\newline
\textbf{[LR Decay Part: Constant \eqref{eq:const_lr}]}\newline
Considering $T_w = l_w K' = \alpha (lK') + \beta K'$, we have
\begin{align*}
\sum_{t=T_w}^{T-1} \eta_t
&= \sum_{t=T_w}^{T-1} \eta_{\max}
= (T - T_w)\eta_{\max}
\end{align*}
and
\begin{align*}
\sum_{t=T_w}^{T-1} \frac{\eta_t^2}{b_t}
= \sum_{t=l_w K'}^{(\alpha + 1)l-1} \frac{\eta_{\max}^2}{b_t} + \sum_{t=(\alpha+1)l}^{T-1}\frac{\eta_{\max}^2}{b_t}
= \sum_{t=T_w}^{T-1} \frac{\eta_{\max}^2}{b_t}
\leq \frac{\eta_{\max}^2 K}{b_0}\sum_{m=0}^{M} \frac{1}{\gamma^m}
\leq \frac{\eta_{\max}^2 K}{b_0}\sum_{m=0}^{\infty} \frac{1}{\gamma^m}
= \frac{\eta_{\max}^2 \gamma K}{(\gamma -1)b_0},
\end{align*}
which yields
\begin{align*}
\min_{t\in\{T_w, \cdots, T-1\}}\mathbb{E}[\|\mathrm{grad}f(x_t)\|_{x_t}^2]
\leq \frac{1}{(2 - L_r \eta_{\max})\eta_{\max}}\left(2(f(x_0) - f^{\star}) + \frac{L_r \eta_{\max}^2 K\gamma}{\gamma -1}\frac{\sigma^2}{b_0}\right)\frac{1}{T-T_w}.
\end{align*}
\textbf{[LR Decay Part: Diminishing \eqref{eq:dim_lr}]}\newline
Similarly, we have
\begin{align*}
\sum_{t=T_w}^{T-1} \eta_t
&= \sum_{t=T_w}^{T-1} \frac{\eta_{\max}}{\sqrt{t+1}}
\geq \eta_{\max} \int_{T_w}^T \frac{dt}{\sqrt{t+1}}
= 2\eta_{\max} (\sqrt{T+1}-\sqrt{T_w +1})
\end{align*}
and
\begin{align*}
\sum_{t=0}^{T-1} \frac{\eta_t^2}{b_t}
\leq \sum_{t=T_w}^{T-1} \frac{\eta_{\max}^2}{b_t(t+1)}
\leq \sum_{t=T_w}^{T-1} \frac{\eta_{\max}^2}{b_t}
\leq \frac{\eta_{\max}^2 \gamma K}{(\gamma -1)b_0},
\end{align*}
which yields
\small
\begin{align*}
\min_{t\in\{T_w, \cdots, T-1\}}\mathbb{E}[\|\mathrm{grad}f(x_t)\|_{x_t}^2]
\leq \frac{1}{2(2 - L_r \eta_{\max})\eta_{\max}}\left(2(f(x_0) - f^{\star}) + \frac{L_r \eta_{\max}^2 K\gamma}{\gamma -1}\frac{\sigma^2}{b_0}\right)\frac{1}{\sqrt{T+1}-\sqrt{T_w+1}}.
\end{align*}
\normalsize
\textbf{[LR Decay Part: Cosine Annealing \eqref{eq:cosan_lr}]}\newline
We have
\begin{align*}
\sum_{t=T_w}^{T-1} \eta_t
&= \sum_{t=T_w}^{T-1}\left(\frac{\eta_{\max} +\eta_{\min}}{2} + \frac{\eta_{\max} -\eta_{\min}}{2}\cos\frac{t-T_w}{T-T_w}\pi\right)\\
&= \frac{\eta_{\max} +\eta_{\min}}{2}(T -T_w) + \frac{\eta_{\max} -\eta_{\min}}{2}\sum_{t=T_w}^{T-1}\cos\frac{t-T_w}{T-T_w}\pi\\
&\geq \frac{\eta_{\max} +\eta_{\min}}{2}(T -T_w) + \frac{\eta_{\max} -\eta_{\min}}{2}\int_{T_w}^{T}\cos\left(\frac{t-T_w}{T-T_w}\pi\right) dt
= \frac{\eta_{\max} +\eta_{\min}}{2}(T -T_w)
\end{align*}
and
\begin{align*}
\sum_{t=T_w}^{T-1} \frac{\eta_t^2}{b_t}
&\leq \sum_{t=T_w}^{T-1} \frac{1}{b_t}\left(\frac{(\eta_{\max} + \eta_{\min})^2}{4} + \frac{\eta_{\max}^2 - \eta_{\min}^2}{2}\cos\frac{t-T_w}{T-T_w}\pi + \frac{(\eta_{\max} - \eta_{\min})^2}{4}\cos^2 \frac{t-T_w}{T-T_w}\pi\right)\\
&\leq \sum_{t=0}^{T-1} \frac{1}{b_t}\left(\frac{(\eta_{\max} + \eta_{\min})^2}{4} + \frac{\eta_{\max}^2 - \eta_{\min}^2}{2} + \frac{(\eta_{\max} - \eta_{\min})^2}{4}\right)
= \sum_{t=0}^{T-1}\frac{\eta_{\max}^2}{b_t}
\leq \frac{\eta_{\max}^2 \gamma K}{(\gamma -1)b_0},
\end{align*}
which yields
\begin{align*}
\min_{t\in\{T_w, \cdots, T-1\}}\mathbb{E}[\|\mathrm{grad}f(x_t)\|_{x_t}^2]
\leq \frac{2}{(2 - L_r \eta_{\max})(\eta_{\max} + \eta_{\min})}\left(2(f(x_0) - f^{\star}) + \frac{L_r \eta_{\max}^2 K\gamma}{\gamma -1}\frac{\sigma^2}{b_0}\right)\frac{1}{T-T_w}.
\end{align*}
\textbf{[LR Decay Part: Polynomial Decay \eqref{eq:poly_dec_lr}]}\newline
As with the proof for a polynomial decay LR \eqref{eq:poly_dec_lr} in Theorem \ref{thm:c_d}, we have
\begin{align*}
\sum_{t=T_w}^{T-1} \eta_t
&= \sum_{t=T_w}^{T-1}\left\{\eta_{\min} + (\eta_{\max} -\eta_{\min})\left(1 - \frac{t -T_w}{T -T_w}\right)^p \right\}\\
&\geq \eta_{\min}(T-T_w) + (\eta_{\max} -\eta_{\min})\frac{T-T_w}{p+1}
= \frac{\eta_{\max} + p\eta_{\min}}{p+1}(T -T_w)
\end{align*}
and
\begin{align*}
\sum_{t=T_w}^{T-1} \frac{\eta_t^2}{b_t}
&\leq \sum_{t=T_w}^{T-1} \frac{1}{b_t}\left(\eta_{\min}^2 + (\eta_{\max} - \eta_{\min})^2\left(1 - \frac{t -T_w}{T-T_w}\right)^{2p} + 2\eta_{\min}(\eta_{\max} - \eta_{\min})\left(1 - \frac{t -T_w}{T-T_w}\right)^{p}\right)\\
&\leq \sum_{t=T_w}^{T-1}\frac{\eta_{\max}^2}{b_t}
\leq \frac{\eta_{\max}^2 \gamma K}{(\gamma -1)b_0},
\end{align*}
which yields
\begin{align*}
\min_{t\in\{T_w, \cdots, T-1\}}\mathbb{E}[\|\mathrm{grad}f(x_t)\|_{x_t}^2]
\leq \frac{p+1}{(2 - L_r \eta_{\max})(\eta_{\max} +p\eta_{\min})}\left(2(f(x_0) - f^{\star}) + \frac{L_r \eta_{\max}^2 K\gamma}{\gamma -1}\frac{\sigma^2}{b_0}\right)\frac{1}{T-T_w}.
\end{align*}
Next we consider a polynomial growth BS and a warm-up LR.\newline
\textbf{[LR Decay Part: Constant \eqref{eq:const_lr}]}\newline
We have
\begin{align*}
\sum_{t=T_w}^{T-1} \eta_t
= \sum_{t=T_w}^{T-1} \eta_{\max}
= \eta_{\max}(T - T_w)
\end{align*}
and
\small
\begin{align*}
\sum_{t=T_w}^{T-1}\frac{\eta_t^2}{b_t}
&= \sum_{t=l_w K'}^{(\alpha +1)l -1}\frac{\eta_{\max}^2}{b_t} + \sum_{(\alpha +1)l}^{T-1}\frac{\eta_{\max}^2}{b_t}
= \sum_{t=T_w}^{T-1}\frac{\eta_{\max}^2}{b_t}
\leq \frac{\eta_{\max}^2 K}{\underline{a}^{\lfloor c\rfloor}}\sum_{m=0}^{M-1}\frac{1}{(m+1)^c}
\leq \frac{\eta_{\max}^2 K}{\underline{a}^{\lfloor c\rfloor}}\sum_{m=1}^{\infty}\frac{1}{m^c}
= \frac{\eta_{\max}^2 K \zeta(c)}{\underline{a}^{\lfloor c\rfloor}},
\end{align*}
\normalsize
which yields
\begin{align*}
\min_{t\in\{T_w, \cdots, T-1\}}\mathbb{E}[\|\mathrm{grad}f(x_t)\|_{x_t}^2]
\leq \frac{1}{(2 - L_r \eta_{\max})\eta_{\max}}\left(2(f(x_0) - f^{\star}) + \frac{L_r \eta_{\max}^2 K\zeta(c)\sigma^2}{\underline{a}^{\lfloor c\rfloor}}\right)\frac{1}{T-T_w}.
\end{align*}
\textbf{[LR Decay Part: Diminishing \eqref{eq:dim_lr}]}\newline
Similarly, we have
\begin{align*}
\sum_{t=T_w}^{T-1}\eta_t
= \sum_{t=T_w}^{T-1} \frac{\eta_{\max}}{\sqrt{t+1}}
\geq \eta_{\max} \int_{T_w}^T \frac{dt}{\sqrt{t+1}}
= 2\eta_{\max} (\sqrt{T+1} - \sqrt{T_w +1})
\end{align*}
and
\begin{align*}
\sum_{t=T_w}^{T-1}\frac{\eta_t^2}{b_t}
= \sum_{t=T_w}^{T-1}\frac{\eta_{\max}^2}{b_t(t+1)}
\leq \sum_{t=T_w}^{T-1}\frac{\eta_{\max}^2}{b_t}
\leq \frac{\eta_{\max}^2 K \zeta(c)}{\underline{a}^{\lfloor c\rfloor}},
\end{align*}
which yields
\small
\begin{align*}
\min_{t\in\{T_w, \cdots, T-1\}}\mathbb{E}[\|\mathrm{grad}f(x_t)\|_{x_t}^2]
\leq \frac{1}{2(2 - L_r \eta_{\max})\eta_{\max}}\left(2(f(x_0) - f^{\star}) + \frac{L_r \eta_{\max}^2 K\zeta(c)\sigma^2}{\underline{a}^{\lfloor c\rfloor}}\right)\frac{1}{\sqrt{T+1}-\sqrt{T_w+1}}.
\end{align*}
\normalsize
\textbf{[LR Decay Part: Cosine Annealing \eqref{eq:cosan_lr}]}\newline
We have
\begin{align*}
\sum_{t=T_w}^{T-1}\eta_t
&= \sum_{t=T_w}^{T-1} \left(\frac{\eta_{\max} + \eta_{\min}}{2} + \frac{\eta_{\max} - \eta_{\min}}{2}\cos \frac{t-T_w}{T-T_w}\pi \right)\\
&\geq \frac{\eta_{\max} + \eta_{\min}}{2}(T -T_w) + \frac{\eta_{\max} - \eta_{\min}}{2} \int_{T_w}^T \cos \left(\frac{t-T_w}{T-T_w}\pi \right)dt
= \frac{\eta_{\max} + \eta_{\min}}{2}(T -T_w)
\end{align*}
and
\begin{align*}
\sum_{t=T_w}^{T-1} \frac{\eta_t^2}{b_t}
&= \sum_{t=T_w}^{T-1} \frac{1}{b_t}\left(\frac{(\eta_{\max} + \eta_{\min})^2}{4} + \frac{\eta_{\max}^2 - \eta_{\min}^2}{2}\cos\frac{t-T_w}{T-T_w}\pi + \frac{(\eta_{\max} - \eta_{\min})^2}{4}\cos^2 \frac{t-T_w}{T-T_w}\pi\right)\\
&\leq \sum_{t=T_w}^{T-1} \frac{1}{b_t}\left(\frac{(\eta_{\max} + \eta_{\min})^2}{4} + \frac{\eta_{\max}^2 - \eta_{\min}^2}{2} + \frac{(\eta_{\max} - \eta_{\min})^2}{4}\right)
= \sum_{t=T_w}^{T-1} \frac{\eta_{\max}^2}{b_t}
\leq \frac{\eta_{\max}^2 K \zeta(c)}{\underline{a}^{\lfloor c\rfloor}},
\end{align*}
which yields
\small
\begin{align*}
\min_{t\in\{T_w, \cdots, T-1\}}\mathbb{E}[\|\mathrm{grad}f(x_t)\|_{x_t}^2]
\leq \frac{2}{(2 - L_r \eta_{\max})(\eta_{\max}+\eta_{\min})}\left(2(f(x_0) - f^{\star}) + \frac{L_r \eta_{\max}^2 K\zeta(c)\sigma^2}{\underline{a}^{\lfloor c\rfloor}}\right)\frac{1}{T-T_w}.
\end{align*}
\normalsize
\textbf{[LR Decay Part: Polynomial Decay \eqref{eq:poly_dec_lr}]}\newline
Considering the proof of the polynomial decay LR \eqref{eq:poly_dec_lr} in Theorem \ref{thm:c_d}, we have
\begin{align*}
\sum_{t=T_w}^{T-1}\eta_t
&= \sum_{t=T_w}^{T-1}\left(\eta_{\min} + (\eta_{\max} - \eta_{\min})\left(1 - \frac{t-T_w}{T-T_w}\right)^p\right)
\geq \eta_{\min}(T -T_w) + (\eta_{\max} - \eta_{\min})\frac{T -T_w}{p+1}\\
&= \frac{\eta_{\max} + p\eta_{\min}}{p+1}(T-T_w)
\end{align*}
and
\begin{align*}
\sum_{t=T_w}^{T-1} \frac{\eta_t^2}{b_t}
&= \sum_{t=T_w}^{T-1} \frac{1}{b_t}\left(\eta_{\min}^2 + (\eta_{\max}^2 - \eta_{\min})^2\left(1 - \frac{t -T_w}{T-T_w}\right)^{2p} + 2\eta_{\min}(\eta_{\max} - \eta_{\min})\left(1 - \frac{t -T_w}{T-T_w}\right)^{p}\right)\\
&\leq \sum_{t=T_w}^{T-1}\frac{\eta_{\max}^2}{b_t}
\leq \frac{\eta_{\max}^2 K \zeta(c)}{\underline{a}^{\lfloor c\rfloor}},
\end{align*}
which yields
\small
\begin{align*}
\min_{t\in\{T_w, \cdots, T-1\}}\mathbb{E}[\|\mathrm{grad}f(x_t)\|_{x_t}^2]
\leq \frac{p+1}{(2 - L_r \eta_{\max})(\eta_{\max}+p\eta_{\min})}\left(2(f(x_0) - f^{\star}) + \frac{L_r \eta_{\max}^2 K\zeta(c)\sigma^2}{\underline{a}^{\lfloor c\rfloor}}\right)\frac{1}{T-T_w}.
\end{align*}
\normalsize
\end{proof}

\subsection{Proof of Theorem \ref{thm:c_w}}\label{prf:c_w}
\begin{theorem}[Detailed Version of Theorem \ref{thm:c_w}]\label{thm:c_w_detailed}
We consider a constant BS ($b_t=b>0$) and warm-up LRs \eqref{eq:exp_g_lr} and \eqref{eq:pol_g_lr} with decay parts given by \eqref{eq:const_lr}, \eqref{eq:dim_lr}, \eqref{eq:cosan_lr}, or \eqref{eq:poly_dec_lr} under the assumptions of Lemma \ref{lem:undl_anal}. Then, we obtain
\begin{itemize}
\item Decay part: Diminishing \eqref{eq:dim_lr}
 \begin{align*}
 \min_{t\in\{T_w, \cdots, T-1\}}\mathbb{E}[\|\mathrm{grad}f(x_t)\|_{x_t}^2]
 \leq (Q_1 + \frac{Q_2 \sigma^2}{b}\log\frac{T}{T_w})\frac{1}{\sqrt{T+1}-\sqrt{T_w +1}},
 \end{align*}
\item Decay part: Otherwise \eqref{eq:const_lr}, \eqref{eq:cosan_lr}, \eqref{eq:poly_dec_lr}
 \begin{align*}
 \min_{t\in\{T_w, \cdots, T-1\}}\mathbb{E}[\|\mathrm{grad}f(x_t)\|_{x_t}^2]
 \leq \frac{\tilde{Q}_1}{T-T_w}+\frac{\tilde{Q}_2\sigma^2}{b},
 \end{align*}
\end{itemize}
where $Q_1, Q_2, \tilde{Q}_1$, and $\tilde{Q}_2$ are constants that do not depend on $T$.
\end{theorem}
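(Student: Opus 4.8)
The plan is to specialize the proof of Theorem~\ref{thm:i_w_detailed} (the increasing-BS warm-up case) to a constant batch size, and to keep careful track of the one place where this substitution genuinely weakens the estimate: the variance term $\sum_t \eta_t^2/b_t$, which for an increasing BS was absorbed into a summable tail $\sum_t 1/b_t$ but for $b_t=b$ must now be bounded directly. First I would establish the ``windowed'' form of the underlying analysis. Summing the descent lemma (Lemma~\ref{apdix:des_lem}) over $t=T_w,\dots,T-1$ rather than over $t=0,\dots,T-1$, and using that on the decay window $\{T_w,\dots,T-1\}$ the schedule is non-increasing with maximum value $\eta_{\max}$ (the paper fixes $\eta_{\max}\coloneq\eta_{T_w-1}$), exactly as in the proof of Lemma~\ref{lem:undl_anal}, gives
\[
\min_{t\in\{T_w,\dots,T-1\}}\mathbb{E}[\|\mathrm{grad}f(x_t)\|_{x_t}^2]
\le \frac{2\bigl(\mathbb{E}[f(x_{T_w})]-f^\star\bigr)}{(2-L_r\eta_{\max})\sum_{t=T_w}^{T-1}\eta_t}
+\frac{L_r\sigma^2}{(2-L_r\eta_{\max})\,b}\cdot\frac{\sum_{t=T_w}^{T-1}\eta_t^2}{\sum_{t=T_w}^{T-1}\eta_t},
\]
where $1/b$ has been pulled out of $\sum_t\eta_t^2/b_t$ since $b_t=b$. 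The ``initial gap'' here is $\mathbb{E}[f(x_{T_w})]-f^\star$, which is a $T$-independent constant: applying the descent lemma once more over the warm-up phase yields $\mathbb{E}[f(x_{T_w})]\le f(x_0)+\tfrac{L_r\sigma^2}{2b}\sum_{t=0}^{T_w-1}\eta_t^2$, and $T_w=l_wK'$ is fixed. This is also the only point at which the warm-up LR choices \eqref{eq:exp_g_lr} and \eqref{eq:pol_g_lr} enter the argument (together with the value they assign to $\eta_{\max}$).

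Next I would reuse, verbatim, the lower bounds on $\sum_{t=T_w}^{T-1}\eta_t$ computed in the proof of Theorem~\ref{thm:i_w_detailed}, since they do not involve the batch size at all: $(T-T_w)\eta_{\max}$ for the constant LR \eqref{eq:const_lr}, $\tfrac{\eta_{\max}+\eta_{\min}}{2}(T-T_w)$ for cosine annealing \eqref{eq:cosan_lr}, $\tfrac{\eta_{\max}+p\eta_{\min}}{p+1}(T-T_w)$ for polynomial decay \eqref{eq:poly_dec_lr}, and $2\eta_{\max}(\sqrt{T+1}-\sqrt{T_w+1})$ for the diminishing LR \eqref{eq:dim_lr} via the integral comparison $\sum_{t=T_w}^{T-1}(t+1)^{-1/2}\ge\int_{T_w}^{T}(t+1)^{-1/2}\,dt$.

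The only new estimate needed is the numerator $\sum_{t=T_w}^{T-1}\eta_t^2$. For the constant, cosine-annealing, and polynomial-decay decay parts, each term obeys $\eta_t\le\eta_{\max}$, so $\sum_{t=T_w}^{T-1}\eta_t^2\le\eta_{\max}^2(T-T_w)$; dividing by the $\Theta(T-T_w)$ lower bound on $\sum_t\eta_t$ collapses the variance term into a $T$-independent multiple of $\sigma^2/b$, giving $\|G_T\|^2=O\bigl((T-T_w)^{-1}+b^{-1}\bigr)$. For the diminishing decay part, $\sum_{t=T_w}^{T-1}\eta_t^2=\eta_{\max}^2\sum_{t=T_w}^{T-1}(t+1)^{-1}\le\eta_{\max}^2\int_{T_w}^{T}t^{-1}\,dt=\eta_{\max}^2\log(T/T_w)$, and dividing by $2\eta_{\max}(\sqrt{T+1}-\sqrt{T_w+1})$ yields the stated $O\bigl(\log(T/T_w)\,(\sqrt{T+1}-\sqrt{T_w+1})^{-1}\bigr)$ bound. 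Collecting the four cases (for each of the warm-up LRs \eqref{eq:exp_g_lr} and \eqref{eq:pol_g_lr}) and reading $Q_1,Q_2,\tilde Q_1,\tilde Q_2$ off the coefficients above completes the proof.

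I expect the main obstacle to be bookkeeping rather than any single inequality: making sure the windowed descent inequality carries $\mathbb{E}[f(x_{T_w})]-f^\star$ (not $f(x_0)-f^\star$) as its gap term and that this quantity is bounded uniformly in $T$ via the warm-up-phase descent estimate, and verifying that on $\{T_w,\dots,T-1\}$ the learning rate is indeed non-increasing and bounded above by $\eta_{\max}$ so that the reindexed sums behave exactly as in Theorem~\ref{thm:i_w_detailed}. Once that setup is pinned down, everything else is a routine specialization of the increasing-BS computation with $b_t$ replaced by $b$.
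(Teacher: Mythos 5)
Your proposal is correct and follows essentially the same route as the paper: apply the windowed form of Lemma \ref{lem:undl_anal} over $\{T_w,\dots,T-1\}$, reuse the lower bounds on $\sum_{t=T_w}^{T-1}\eta_t$ from the proof of Theorem \ref{thm:i_w_detailed}, and bound $\sum_{t=T_w}^{T-1}\eta_t^2/b$ directly ($\eta_{\max}^2(T-T_w)/b$ for the non-diminishing decay parts and a $\log\frac{T}{T_w}$ term for the diminishing one). You are in fact slightly more careful than the paper, which writes the gap term as $f(x_0)-f^{\star}$ rather than $\mathbb{E}[f(x_{T_w})]-f^{\star}$ and does not spell out that the latter is bounded uniformly in $T$ via the warm-up-phase descent estimate.
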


\begin{proof}
\textbf{[LR Decay Part: Constant \eqref{eq:const_lr}]}\newline
Considering \ref{prf:i_w}, we have
\begin{align*}
\sum_{t= T_w}^{T-1}\frac{\eta_t^2}{b_t} = \frac{\eta_{\max}^2}{b}(T-T_w),
\end{align*}
which yields
\begin{align*}
\min_{t\in\{T_w, \cdots, T-1\}}\mathbb{E}[\|\mathrm{grad}f(x_t)\|_{x_t}^2]
\leq \frac{2(f(x_0) - f^{\star})}{(2 - L_r \eta_{\max})\eta_{\max}}\frac{1}{T-T_w} + \frac{L_r \eta_{\max}}{2 - L_r \eta_{\max}}\frac{\sigma^2}{b}.
\end{align*}
\textbf{[LR Decay Part: Diminishing \eqref{eq:dim_lr}]}\newline
Similarly, we have
\begin{align*}
\sum_{t=T_w}^{T-1}\frac{\eta_t^2}{b_t}
= \frac{\eta_{\max}^2}{b}\sum_{t=T_w}^{T-1} \frac{1}{t+1}
\leq \frac{\eta_{\max}^2}{b}\left(1+\int_{T_w}^T \frac{dt}{t}\right)
= \frac{\eta_{\max}^2}{b} + \frac{\eta_{\max}^2}{b}\log\frac{T}{T_w},
\end{align*}
which yields
\small
\begin{align*}
\min_{t\in\{T_w, \cdots, T-1\}}\mathbb{E}[\|\mathrm{grad}f(x_t)\|_{x_t}^2]
\leq \frac{f(x_0) - f^{\star}}{(2 - L_r \eta_{\max})\eta_{\max}}\frac{1}{\sqrt{T+1}-\sqrt{T_w+1}} + \frac{L_r \eta_{\max}}{2(2 - L_r \eta_{\max})}\frac{\sigma^2}{b}\frac{1+\log\frac{T}{T_w}}{\sqrt{T+1}-\sqrt{T_w+1}}.
\end{align*}
\normalsize
\textbf{[LR Decay Part: Cosine Annealing \eqref{eq:cosan_lr}]}\newline
Doing the same with \ref{prf:i_w}, we have
\begin{align*}
\sum_{t=T_w}^{T-1}\frac{\eta_t^2}{b_t}
&= \frac{1}{b}\sum_{t=T_w}^{T-1}\left(\frac{(\eta_{\max} + \eta_{\min})^2}{4} + \frac{\eta_{\max}^2 - \eta_{\min}^2}{2}\cos\frac{t-T_w}{T-T_w}\pi + \frac{(\eta_{\max} - \eta_{\min})^2}{4}\cos^2 \frac{t-T_w}{T-T_w}\pi\right)\\
&\leq \frac{1}{b} \sum_{t=T_w}^{T-1} \left(\frac{(\eta_{\max} + \eta_{\min})^2}{4} + \frac{\eta_{\max}^2 - \eta_{\min}^2}{2} + \frac{(\eta_{\max} - \eta_{\min})^2}{4}\right)
= \frac{\eta_{\max}^2}{b}(T-T_w),
\end{align*}
which yields
\begin{align*}
\min_{t\in\{T_w, \cdots, T-1\}}\mathbb{E}[\|\mathrm{grad}f(x_t)\|_{x_t}^2]
\leq \frac{4(f(x_0) - f^{\star})}{(2 - L_r \eta_{\max})(\eta_{\max}+\eta_{\min})}\frac{1}{T-T_w} + \frac{2L_r \eta_{\max}^2}{(2 - L_r \eta_{\max})(\eta_{\max}+\eta_{\min})}\frac{\sigma^2}{b}.
\end{align*}
\textbf{[LR Decay Part: Polynomial Decay \eqref{eq:poly_dec_lr}]}\newline
Similarly, we have
\begin{align*}
\sum_{t=T_w}^{T-1} \frac{\eta_t^2}{b_t}
&= \frac{1}{b} \sum_{t=T_w}^{T-1} \left(\eta_{\min}^2 + (\eta_{\max}^2 - \eta_{\min})^2\left(1 - \frac{t -T_w}{T-T_w}\right)^{2p} + 2\eta_{\min}(\eta_{\max} - \eta_{\min})\left(1 - \frac{t -T_w}{T-T_w}\right)^{p}\right)\\
&\leq \sum_{t=T_w}^{T-1}\frac{\eta_{\max}^2}{b_t}
= \frac{\eta_{\max}^2}{b}(T-T_w),
\end{align*}
which yields
\begin{align*}
\min_{t\in\{T_w, \cdots, T-1\}}\mathbb{E}[\|\mathrm{grad}f(x_t)\|_{x_t}^2]
\leq \frac{2(f(x_0) - f^{\star})(p+1)}{(2 - L_r \eta_{\max})(\eta_{\max}+p\eta_{\min})}\frac{1}{T-T_w} + \frac{2L_r (p+1) \eta_{\max}^2}{(2 - L_r \eta_{\max})(\eta_{\max}+p\eta_{\min})}\frac{\sigma^2}{b}.
\end{align*}
\end{proof}

\section{Objective Function Values in Sections \ref{nume:pca} and \ref{nume:lrmc}}\label{appdix:obj}
\begin{figure}[htbp]
\centering
\includegraphics[width=0.24\linewidth]{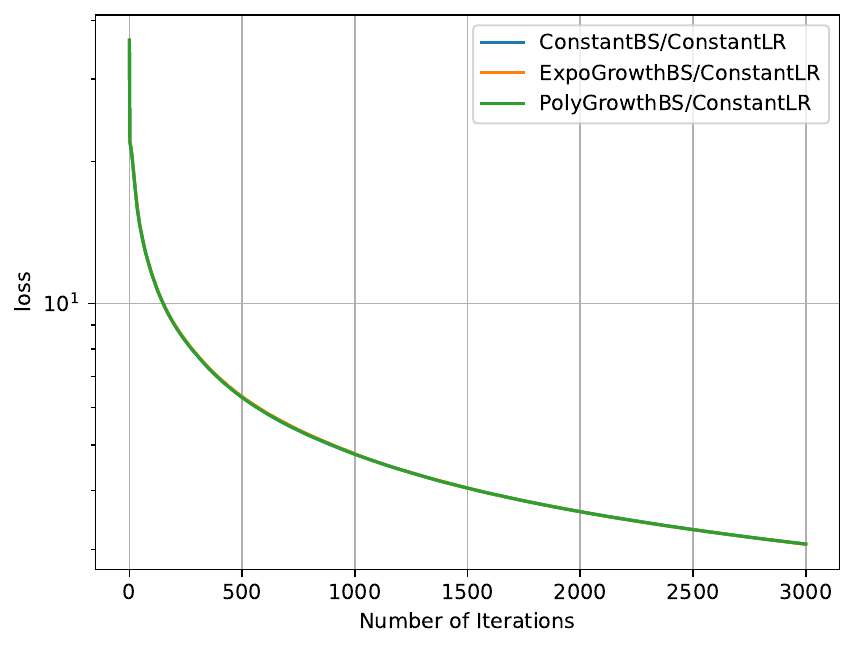}
\includegraphics[width=0.24\linewidth]{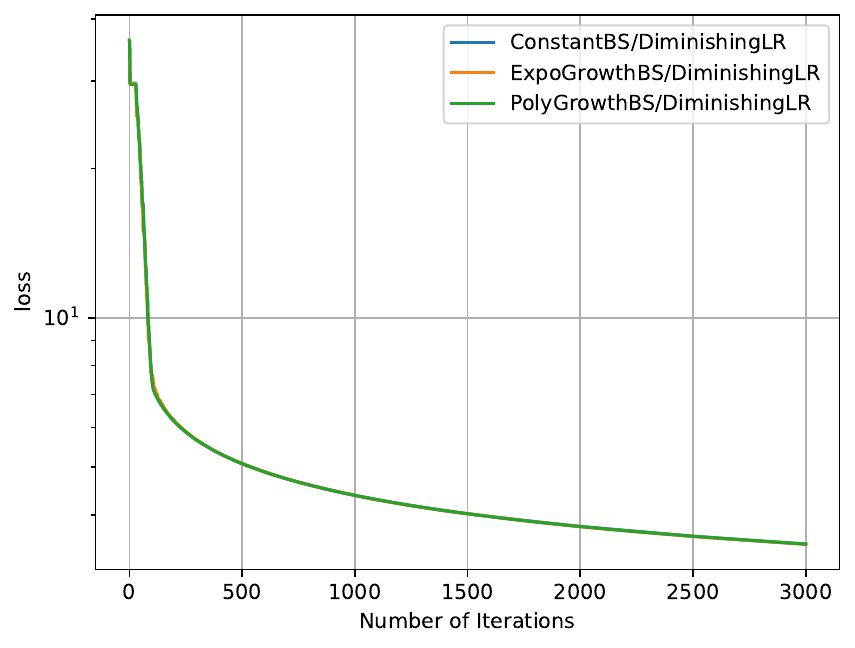}
\includegraphics[width=0.24\linewidth]{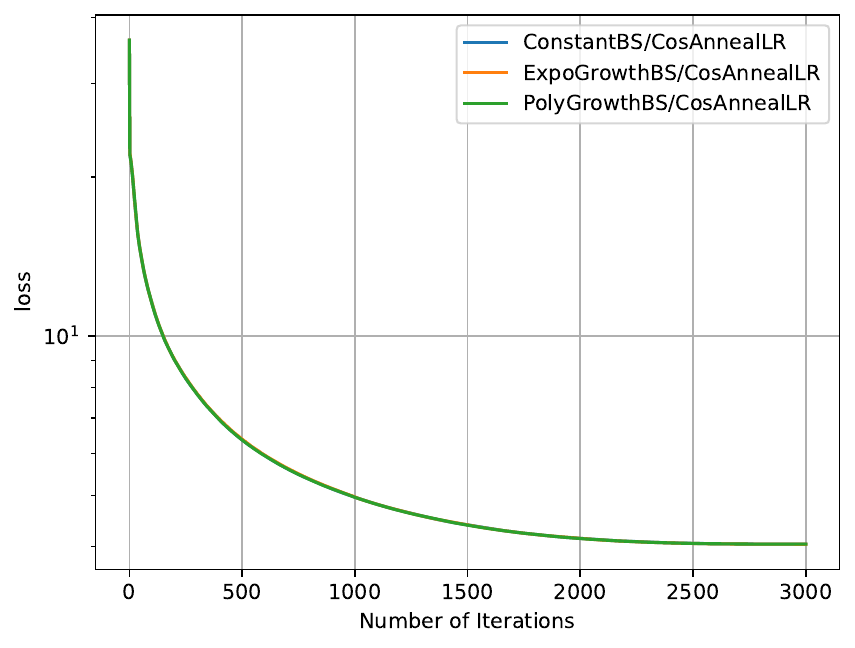}
\includegraphics[width=0.24\linewidth]{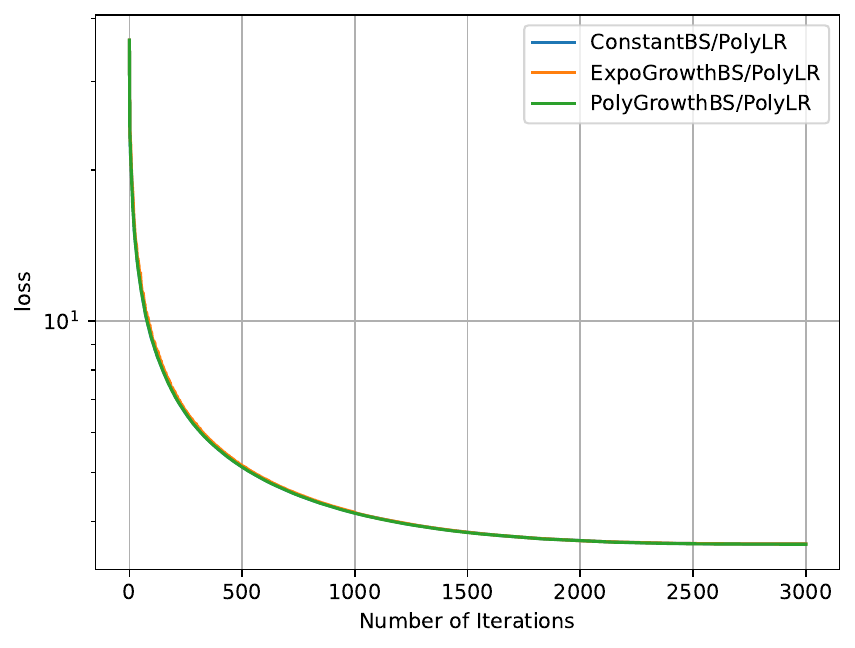}
\caption{Objective function value (loss) versus number of iterations for LRs \eqref{eq:const_lr}, \eqref{eq:dim_lr}, \eqref{eq:cosan_lr}, and \eqref{eq:poly_dec_lr} in order from left to right on COIL100 dataset (PCA).}
\label{fig:COIL100_loss}
\end{figure}

\begin{figure}[htbp]
\centering
\includegraphics[width=0.24\linewidth]{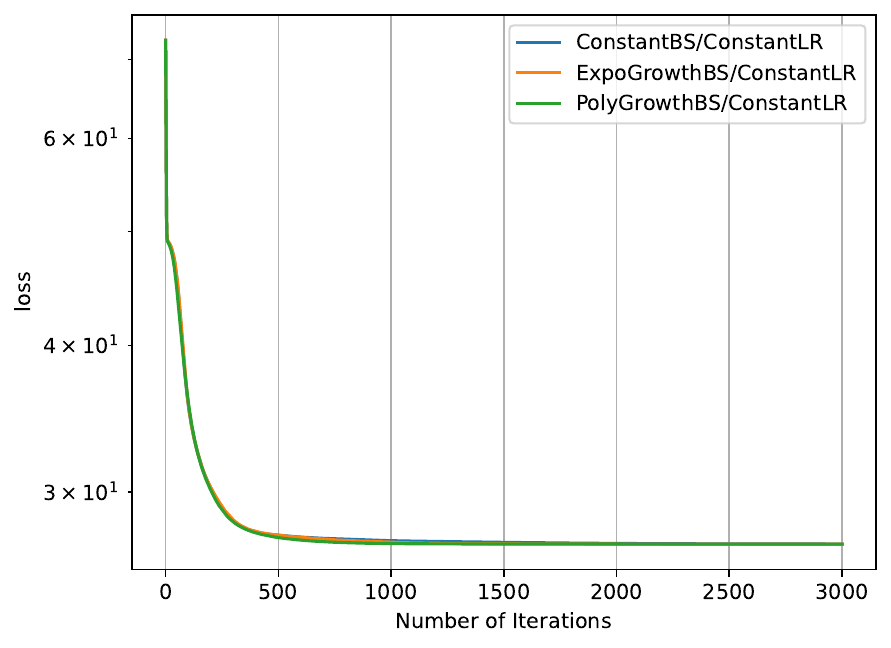}
\includegraphics[width=0.24\linewidth]{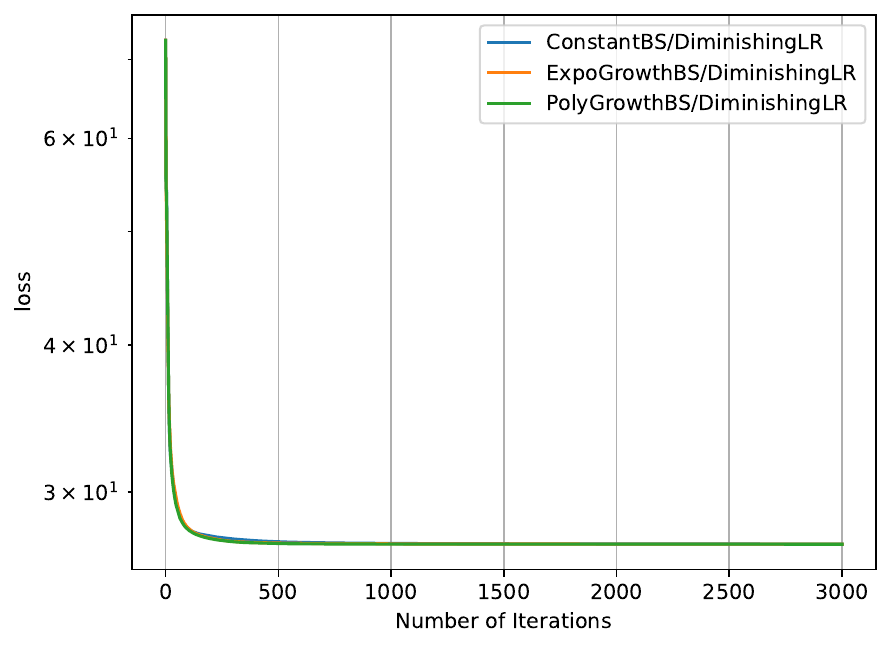}
\includegraphics[width=0.24\linewidth]{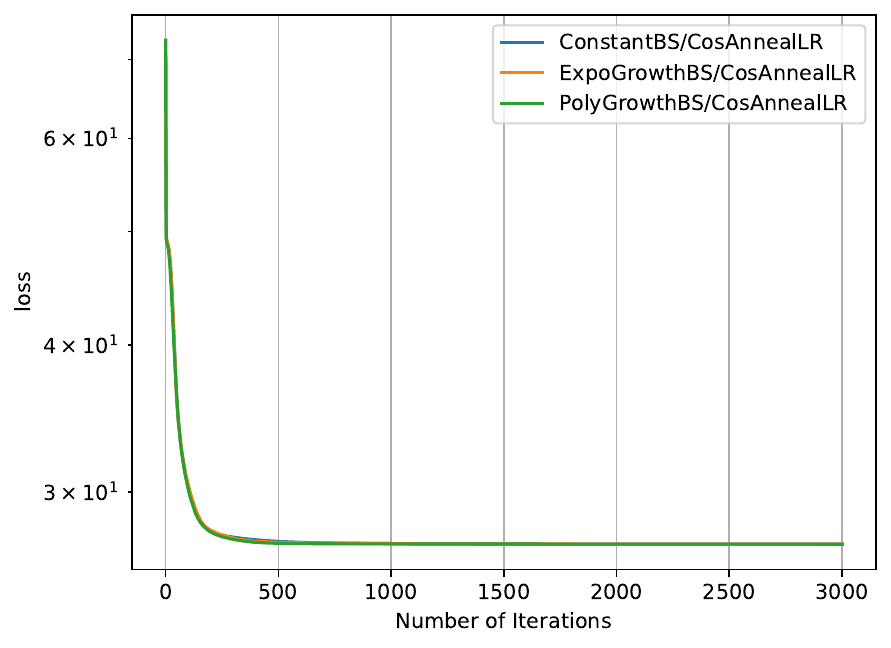}
\includegraphics[width=0.24\linewidth]{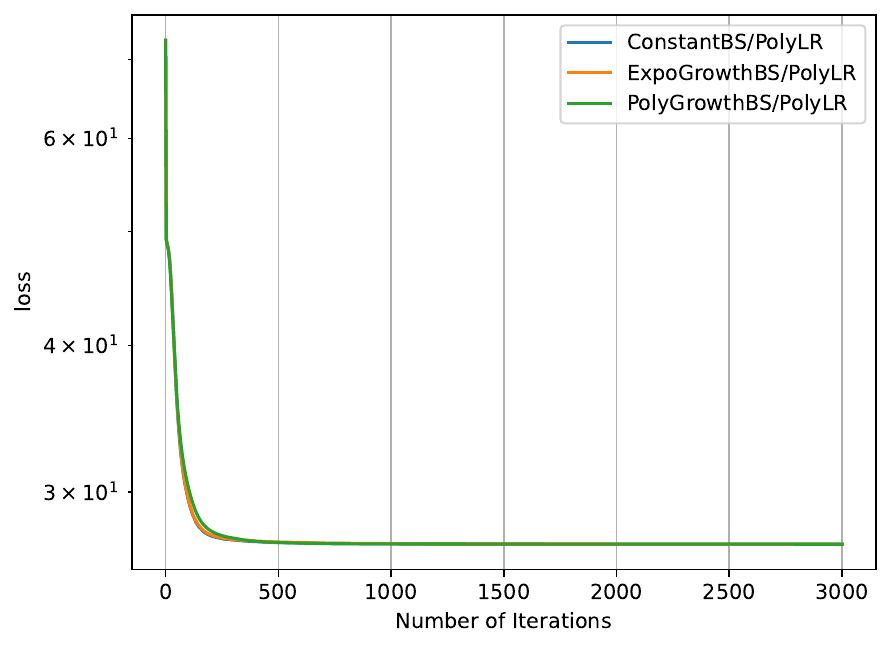}
\caption{Objective function value (loss) versus number of iterations for LRs \eqref{eq:const_lr}, \eqref{eq:dim_lr}, \eqref{eq:cosan_lr}, and \eqref{eq:poly_dec_lr} in order from left to right on MNIST dataset (PCA).}
\label{fig:MNIST_loss}
\end{figure}

\begin{figure}[htbp]
\centering
\includegraphics[width=0.24\linewidth]{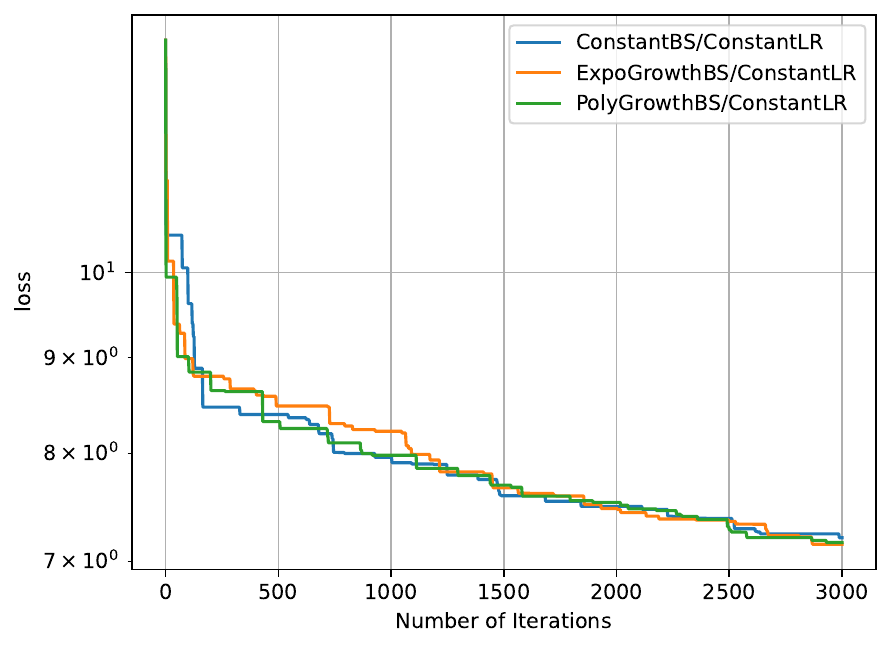}
\includegraphics[width=0.24\linewidth]{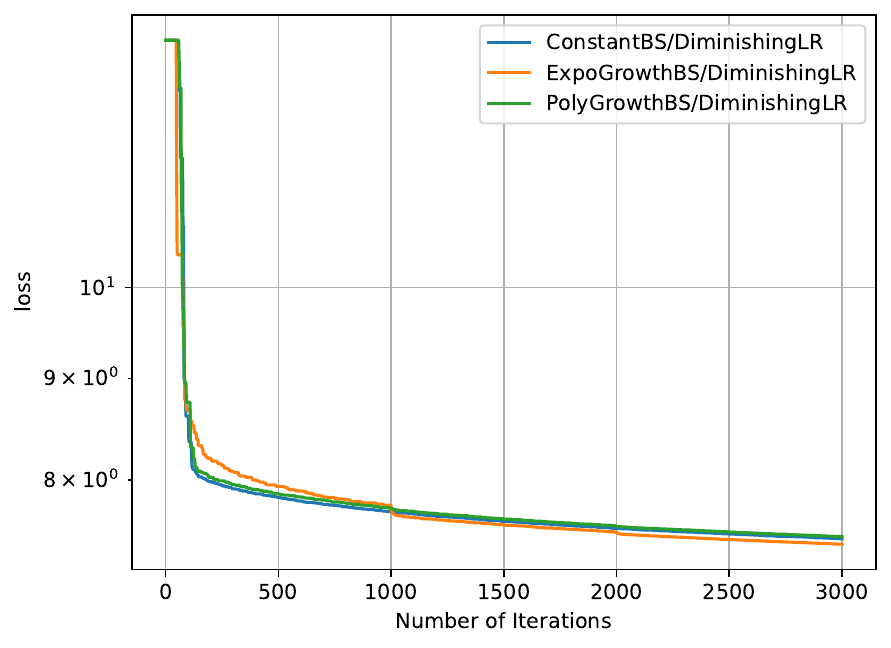}
\includegraphics[width=0.24\linewidth]{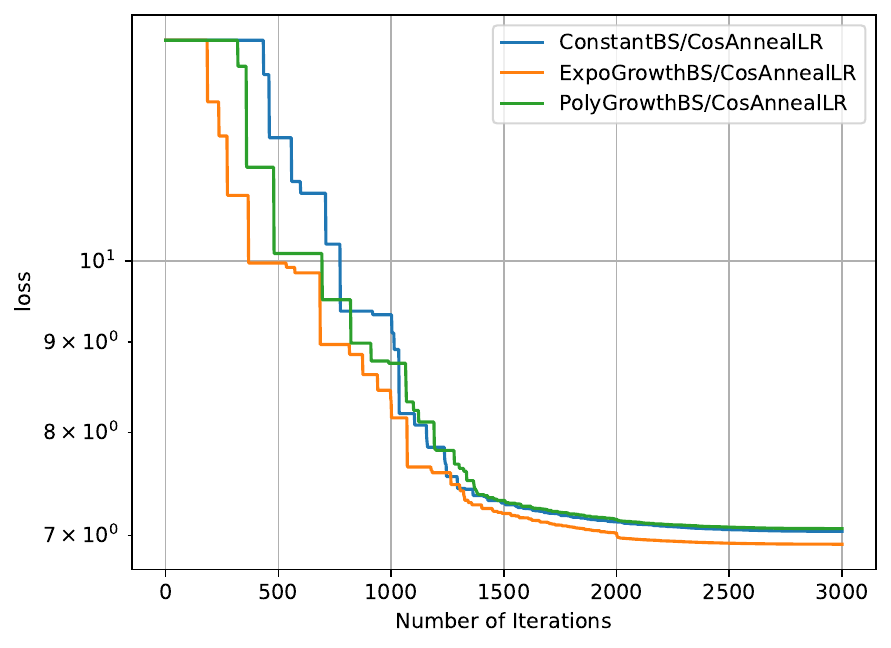}
\includegraphics[width=0.24\linewidth]{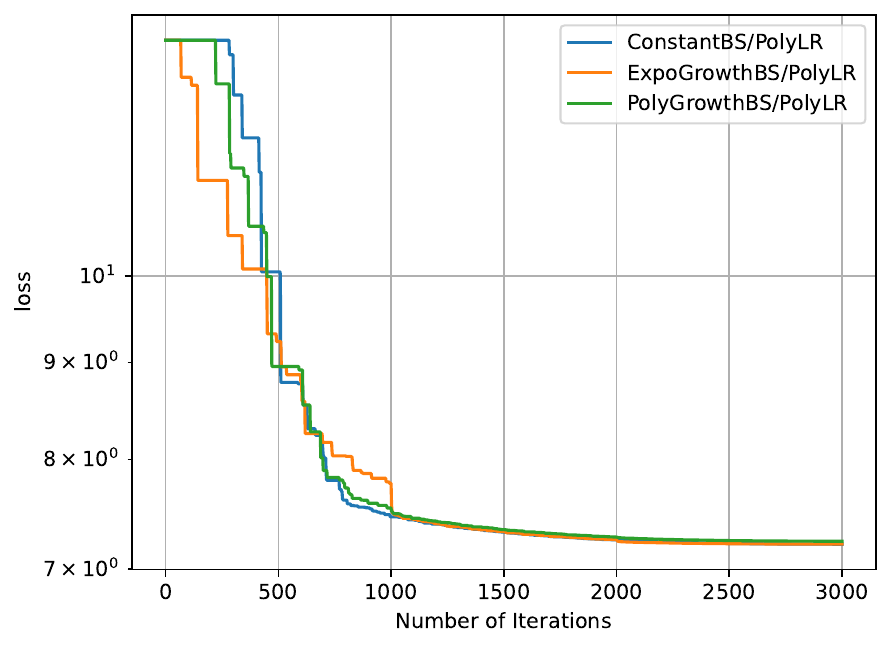}
\caption{Objective function value (loss) versus number of iterations for LRs \eqref{eq:const_lr}, \eqref{eq:dim_lr}, \eqref{eq:cosan_lr}, and \eqref{eq:poly_dec_lr} in order from left to right on MovieLens-1M dataset (LRMC).}
\label{fig:ml-1m_loss}
\end{figure}

\begin{figure}[htbp]
\centering
\includegraphics[width=0.24\linewidth]{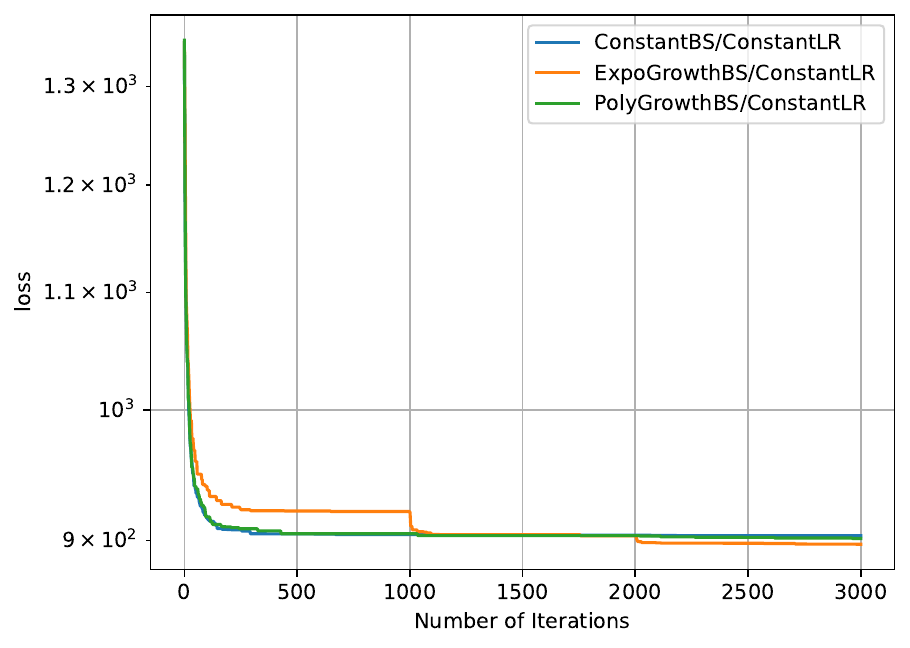}
\includegraphics[width=0.24\linewidth]{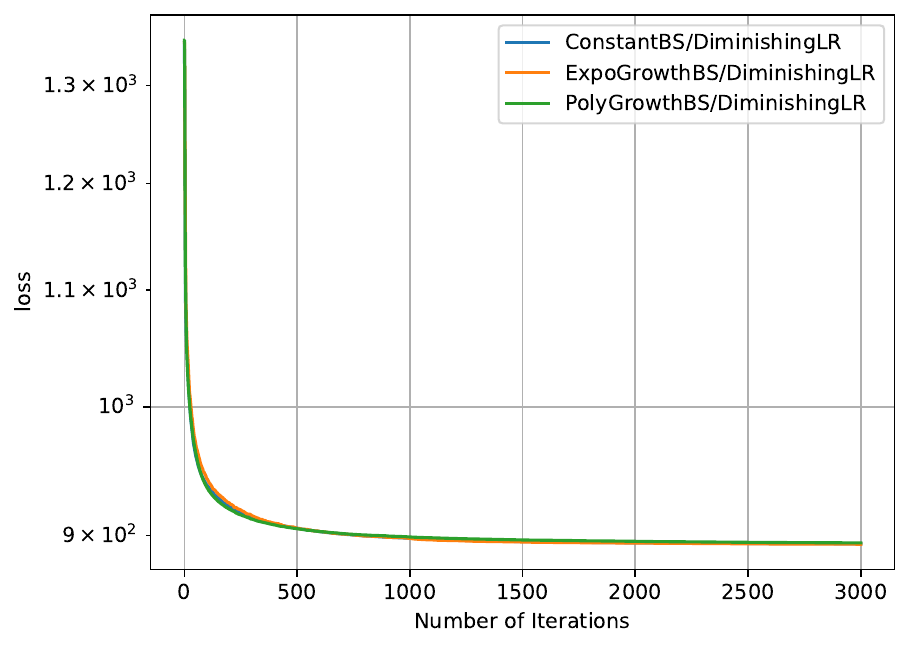}
\includegraphics[width=0.24\linewidth]{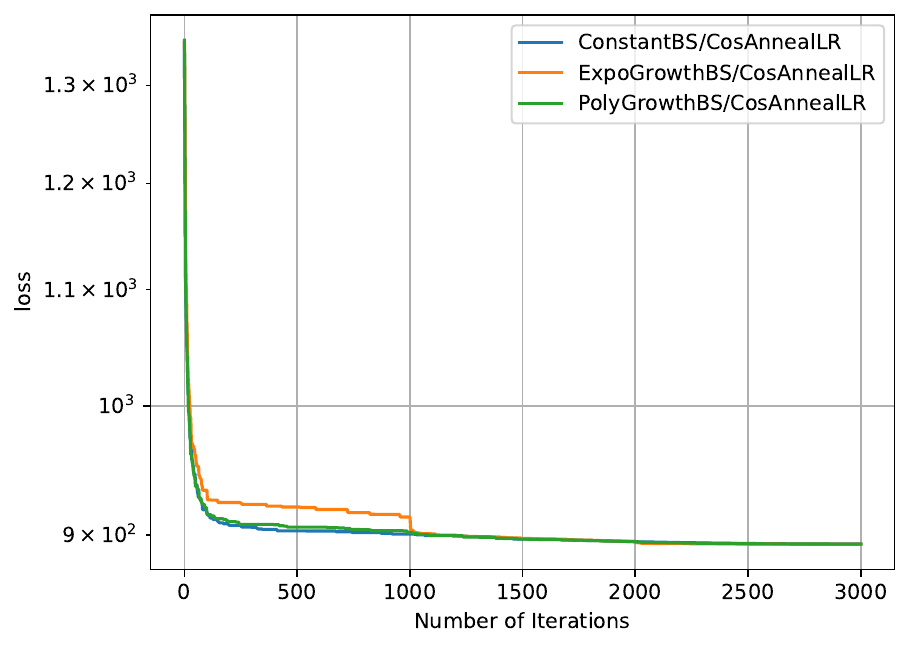}
\includegraphics[width=0.24\linewidth]{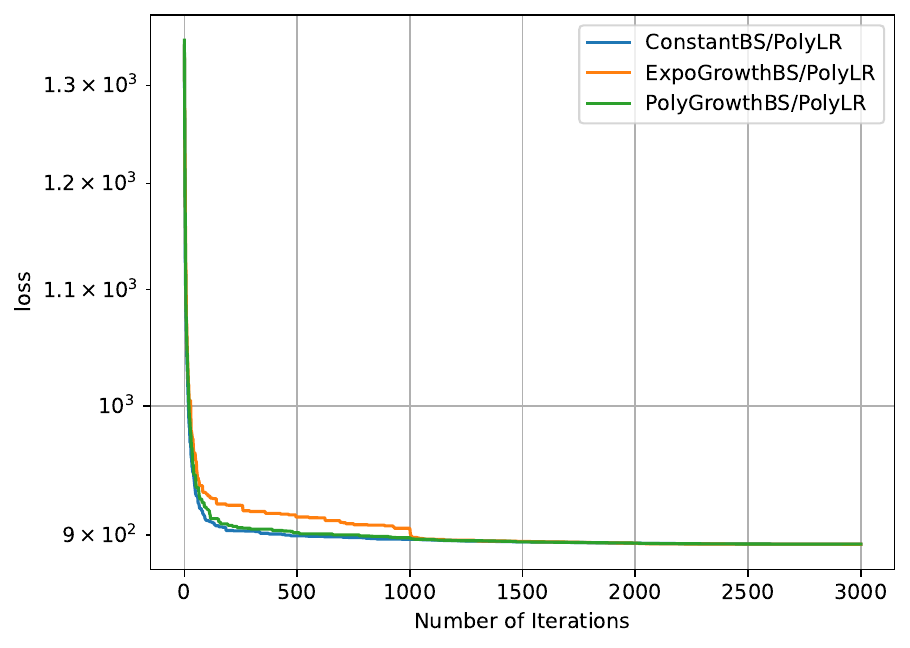}
\caption{Objective function value (loss) versus number of iterations for LRs \eqref{eq:const_lr}, \eqref{eq:dim_lr}, \eqref{eq:cosan_lr}, and \eqref{eq:poly_dec_lr} in order from left to right on Jester dataset (LRMC).}
\label{fig:jester_loss}
\end{figure}
\noindent
The performance in terms of the objective function value versus the number of iterations for LRs \eqref{eq:const_lr}, \eqref{eq:dim_lr}, \eqref{eq:cosan_lr}, and \eqref{eq:poly_dec_lr} on the COIL100, MNIST, MovieLens-1M, and Jester datasets are shown in Figures \ref{fig:COIL100_loss}, \ref{fig:MNIST_loss}, \ref{fig:ml-1m_loss}, and \ref{fig:jester_loss}, respectively. Although the differences in the objective function values are small (one possible reason for this is that the objective function may be flat around the optimal solution), the performance with an increasing BS was equal to or better than that with a constant BS. A more detailed discussion of this hypothesis is provided in Section \ref{sec:SFO}.

\section{Convergence Criteria in Table \ref{table:exi_rslt}}\label{apdix:explain_conv_criteria}
In previous studies and in our work, three types of convergence criteria were used:
\begin{enumerate}
 \item the objective function value $\mathbb{E}[f(x_T)-f^{\star}]$,
 \item the gradient value $\underset{t\to \infty}{\lim}\mathbb{E}[\mathrm{grad}f(x_t)]= 0$ or $\|G_T\|^2$,
 \item (extended) Riemannian distance between $x_T$ and an optimal solution $x^{\star}$: $\mathbb{E}[\|\Delta_T\|]\coloneq\mathbb{E}[\|R_{x^{\star}}^{-1}(x_T)\|]$.
\end{enumerate}
Criterion (3) applies because $\mathbb{E}[\|\Delta_T\|]=\mathbb{E}[d(x_T,x^{\star})]$ holds when $R=\text{Exp}$.

\section{Additional Numerical Results}
\label{sec:addi_exper}
This section presents the numerical results for a warm-up LR. We used a constant BS, an exponential growth BS \eqref{eq:exp_g_bs}, a polynomial growth BS \eqref{eq:pol_g_bs}, and a warm-up LR with an increasing part (exponential growth LR) and a decaying part (either a constant LR \eqref{eq:const_lr}, a diminishing LR \eqref{eq:dim_lr}, a cosine annealing LR \eqref{eq:cosan_lr}, or a polynomial decay LR \eqref{eq:poly_dec_lr}). We set $K'=200$. When we used an exponential growth LR, a polynomial growth BS was not required to satisfy $\delta^{2l}<\gamma$, whereas an exponential growth BS was required to satisfy it (see Section \ref{i_w}). For comparison, even when using a polynomial growth BS, we adopted a setting that satisfies this condition. Furthermore, we chose $\eta_{\max}$ (the initial value of decaying part) from $\{0.5, 0.05, 0.005\}$. When using an exponential BS, we set the initial BS $b_0\coloneq 3^4$ in \textbf{Cases A, B} and $b_0\coloneq 3^3$ in \textbf{Cases C, D}. When using the other BSs, we set the same initial batch size as in Section \ref{sec:numl_expe}.

From the definition of an exponential growth LR \eqref{eq:exp_g_lr}, we can represent the part after warm-up as $\eta_{T_w -1} = \eta_0 \delta^{l_w}$. Consequently, we chose hyperparameters $(l,l_w,\gamma,\eta_{\max},\eta_0)$ satisfying $\eta_0 \delta^{l_w} = \eta_{\max}$ and $\delta^{2l} < \gamma$; i.e., $\delta=(\frac{\eta_{\max}}{\eta_0})^{\frac{1}{l_w}} < \gamma^{\frac{1}{2l}}$. Hence, when $l=l_w=3$ and $\gamma=3.0$, we set $\eta_{\max}=0.5, 0.05, 0.005$ and $\eta_0=\frac{5}{17},\frac{5}{170},\frac{5}{1700}$, respectively. In this setting, $\delta= \sqrt[3]{\frac{\eta_{\max}}{\eta_0}}=\sqrt[3]{1.7}< \sqrt[6]{3}=\sqrt[6]{\gamma}=\gamma^{\frac{1}{2l}}$ holds.  Similarly, when $l=3, l_w=8, \gamma=3$, we set $\eta_{\max}=0.5, 0.05, 0.005$, $\eta_0=\frac{1}{8},\frac{1}{80},\frac{1}{800}$. respectively. In this setting, $\delta=\sqrt[8]{\frac{\eta_{\max}}{\eta_0}}=\sqrt[8]{4}<\sqrt[6]{3}=\sqrt[6]{\gamma}=\gamma^{\frac{1}{2l}}$ holds. Because we terminated RSGD after the $3000$th iteration and set $K'=200$, we used $l=l_w=3$ (resp. $l=3, l_w=8$). In the first setting, the batch size increases five times and the learning rate three times; in the second, the batch size increases five times and the learning rate eight times.

\subsection{Principle Components Analysis}
\noindent
$\textbf{[Case A]} \quad l=l_w=3$\\
\noindent
Figures \ref{fig:COIL100_grad_wup3} and \ref{fig:MNIST_grad_wup3} plot performance in terms of the gradient norm of the objective function versus the number of iterations for a warm-up LR with decay parts given by \eqref{eq:const_lr}, \eqref{eq:dim_lr}, \eqref{eq:cosan_lr}, and \eqref{eq:poly_dec_lr} on the COIL100 and MNIST datasets, respectively. Figures \ref{fig:COIL100_loss_wup3} and \ref{fig:MNIST_loss_wup3} plot performance in terms of the objective function value versus the number of iterations for a warm-up LR with decay parts given by \eqref{eq:const_lr}, \eqref{eq:dim_lr}, \eqref{eq:cosan_lr}, and \eqref{eq:poly_dec_lr} for the COIL100 and MNIST datasets, respectively.
\begin{figure}[htbp]
\centering
\includegraphics[width=0.24\linewidth]{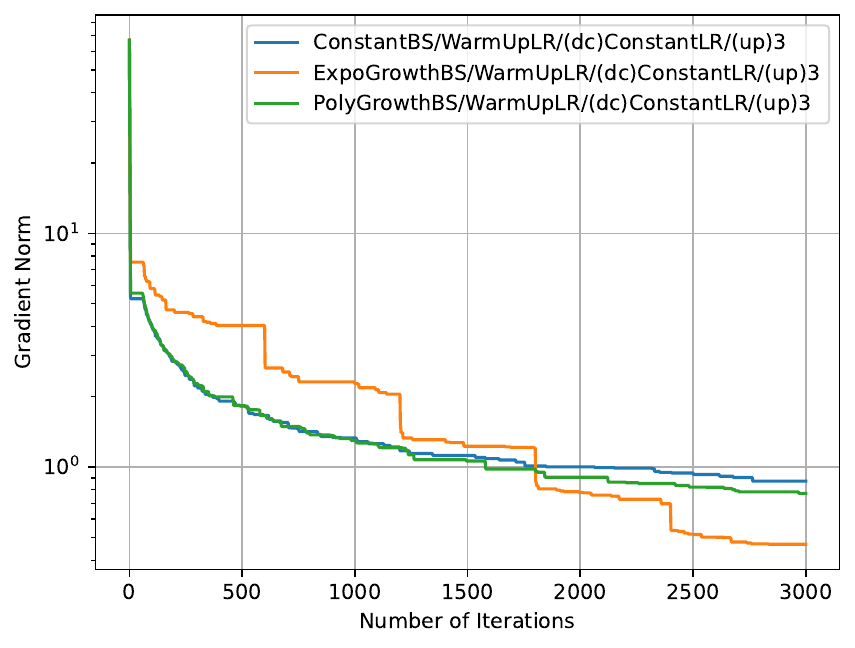}
\includegraphics[width=0.24\linewidth]{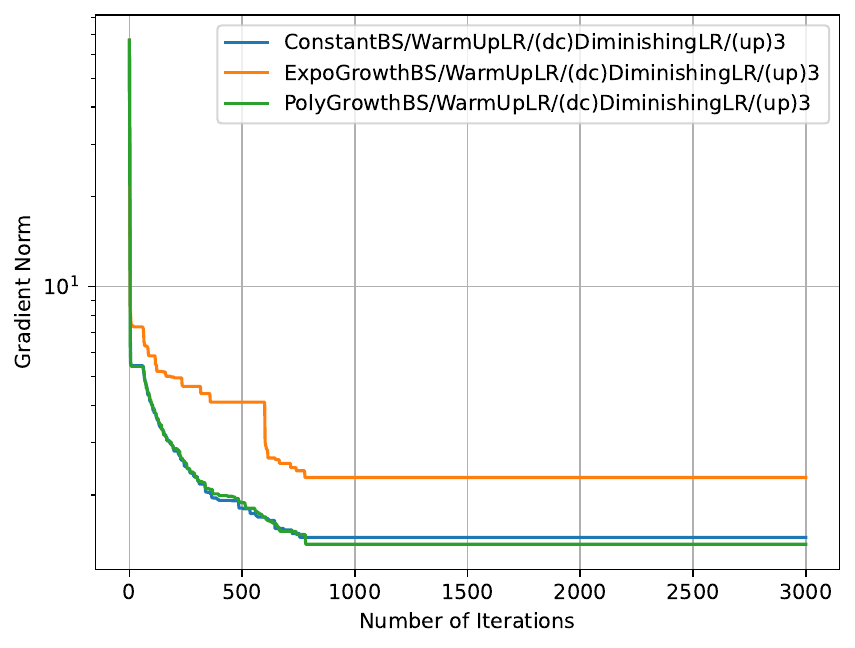}
\includegraphics[width=0.24\linewidth]{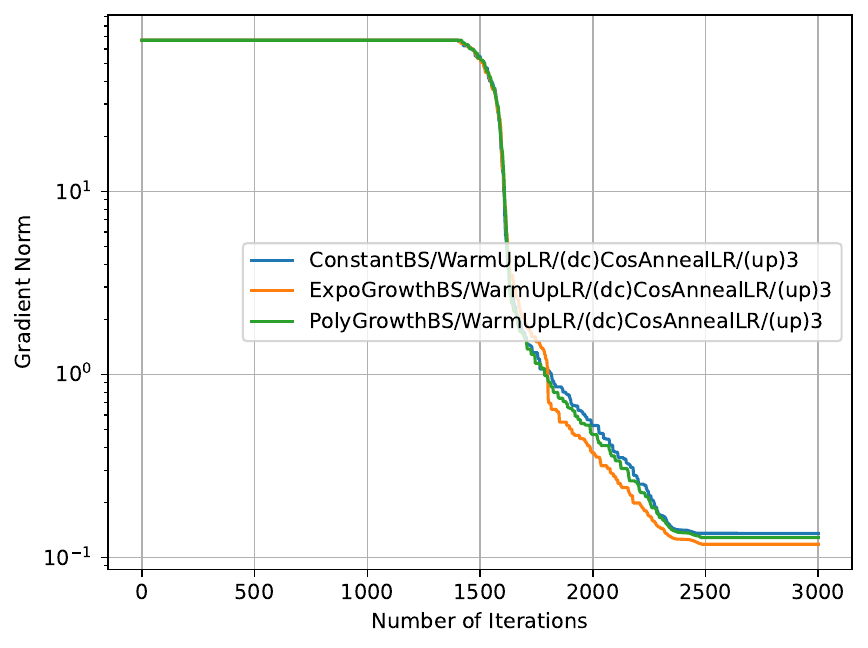}
\includegraphics[width=0.24\linewidth]{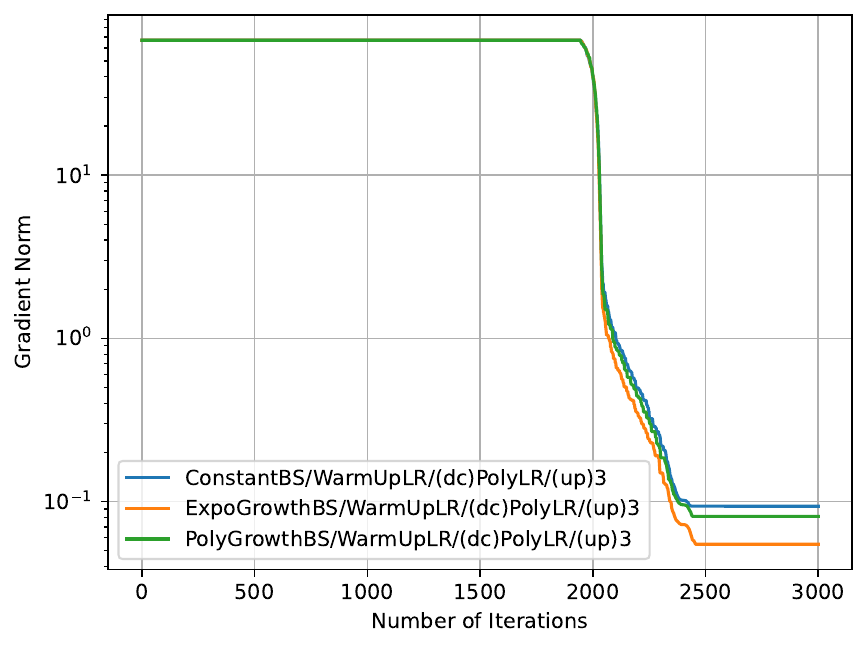}
\caption{Norm of the gradient of the objective function versus number of iterations for warm-up LRs that have an increasing part with three increments on COIL100 dataset (PCA).}
\label{fig:COIL100_grad_wup3}
\end{figure}

\begin{figure}[htbp]
\centering
\includegraphics[width=0.24\linewidth]{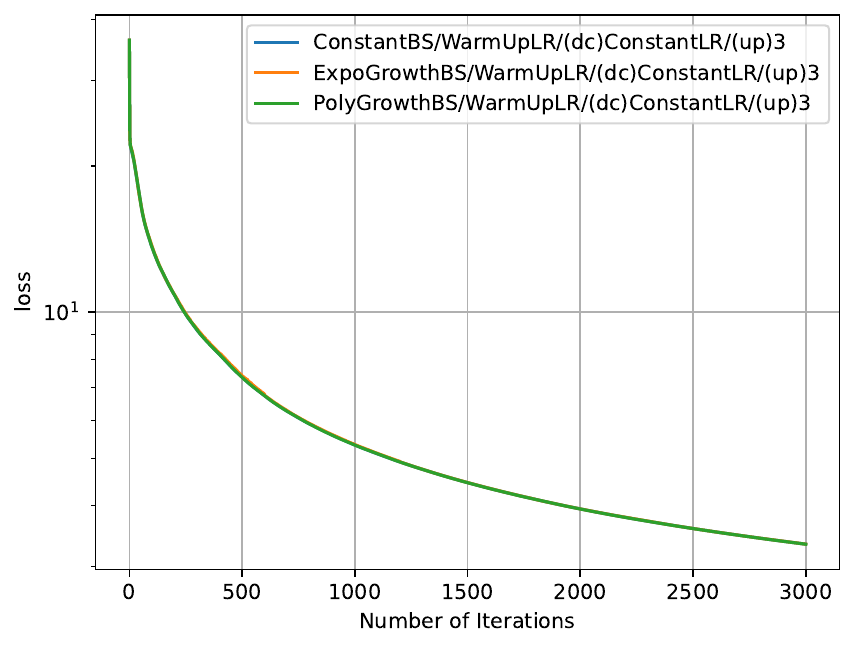}
\includegraphics[width=0.24\linewidth]{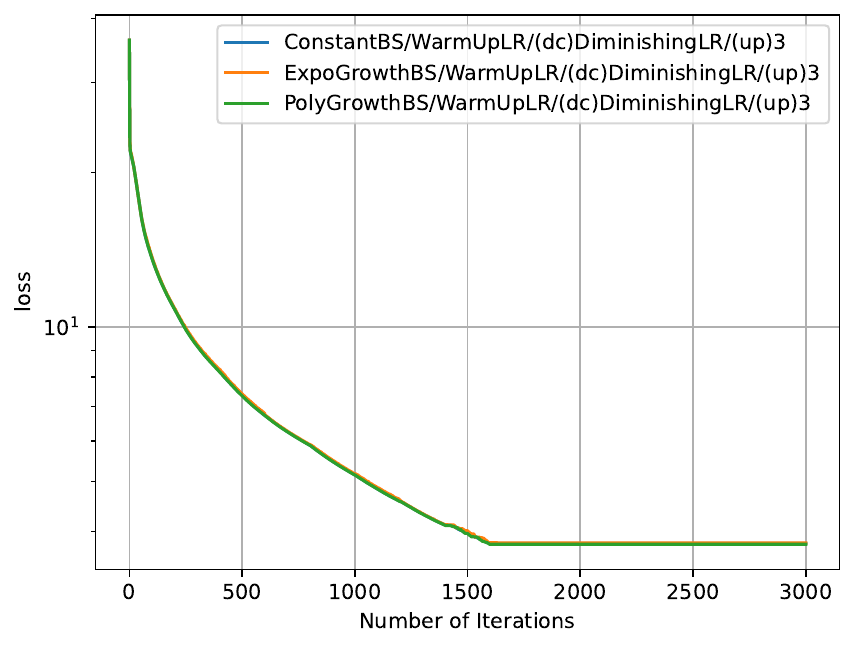}
\includegraphics[width=0.24\linewidth]{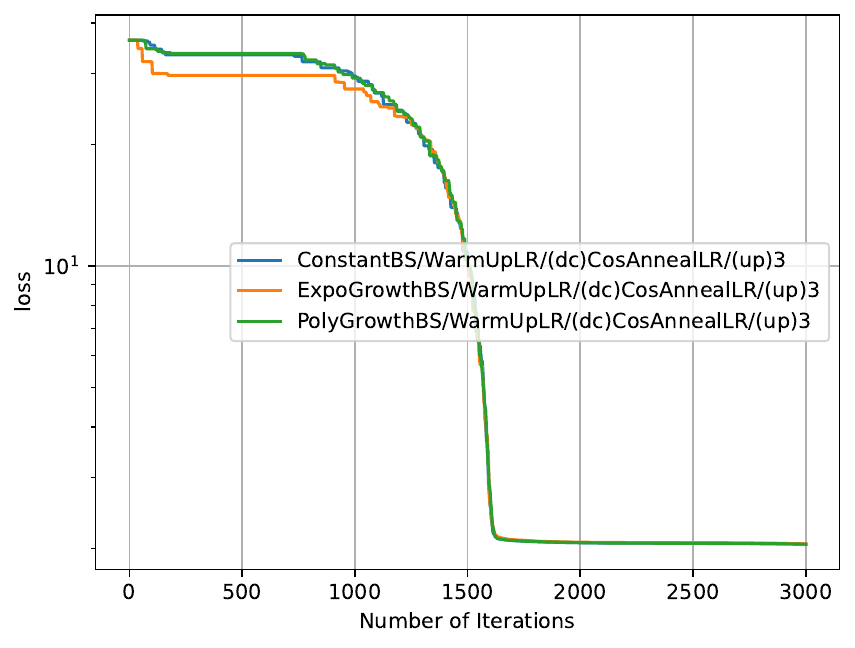}
\includegraphics[width=0.24\linewidth]{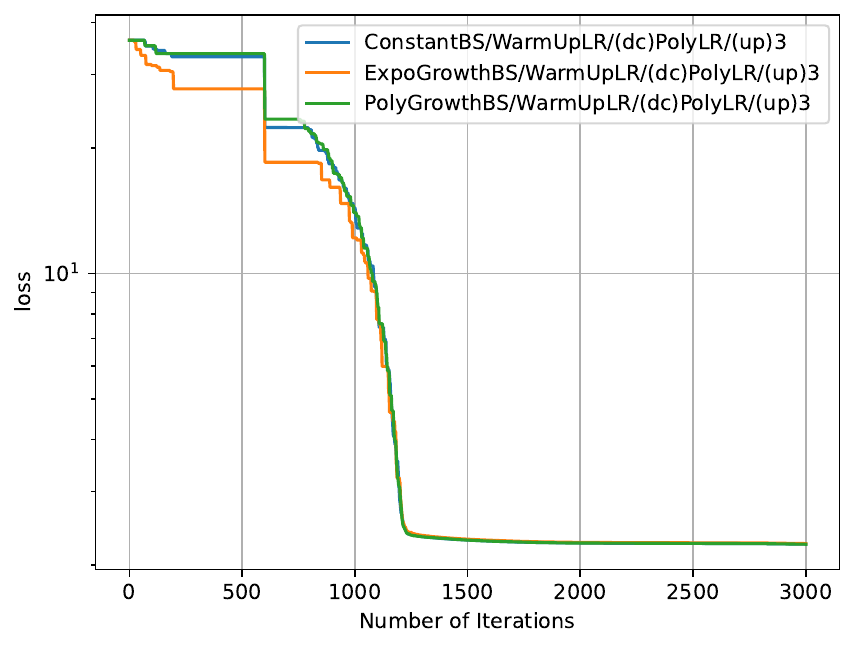}
\caption{Objective function value (loss) versus number of iterations for warm-up LRs that have an increasing part with three increments on COIL100 dataset (PCA).}
\label{fig:COIL100_loss_wup3}
\end{figure}

\begin{figure}[htbp]
\centering
\includegraphics[width=0.24\linewidth]{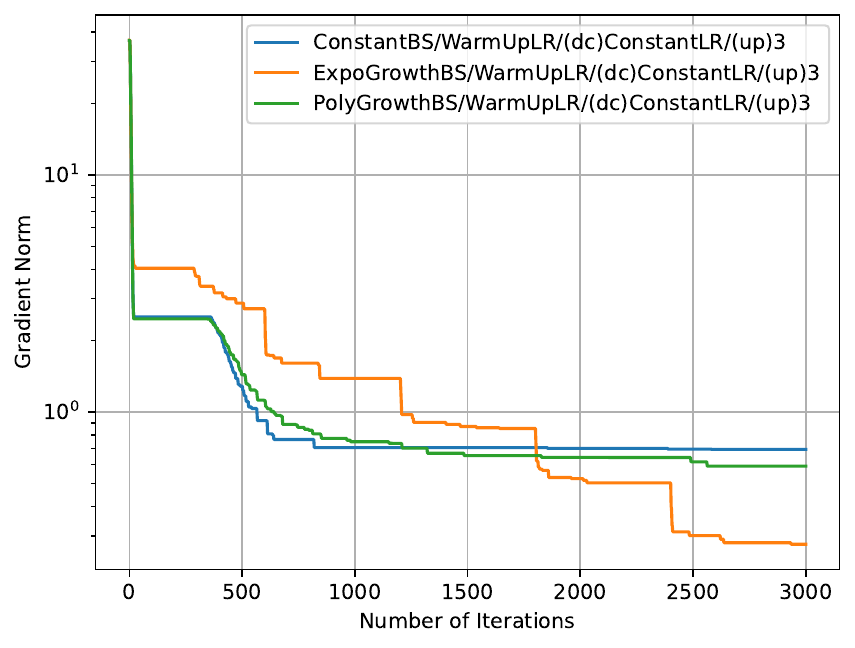}
\includegraphics[width=0.24\linewidth]{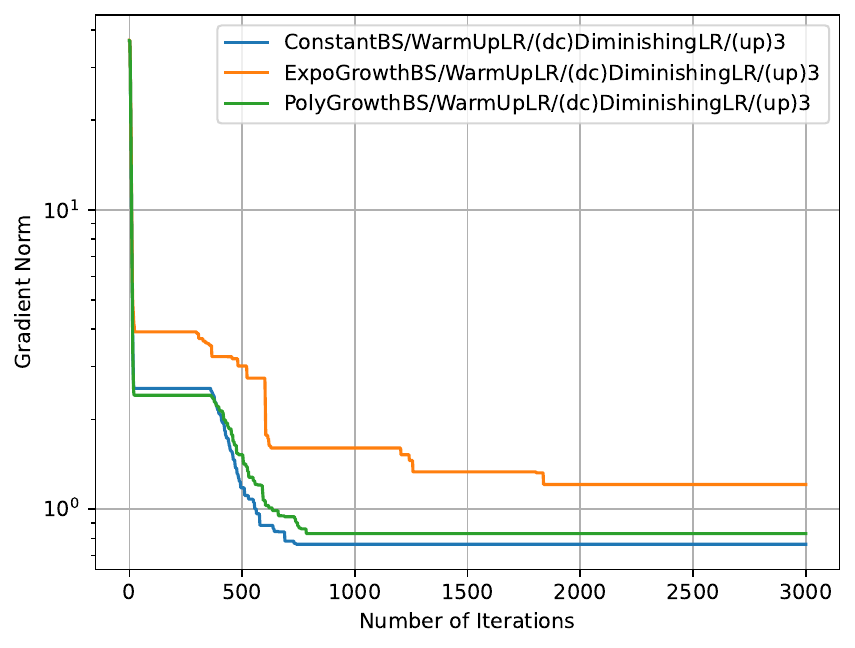}
\includegraphics[width=0.24\linewidth]{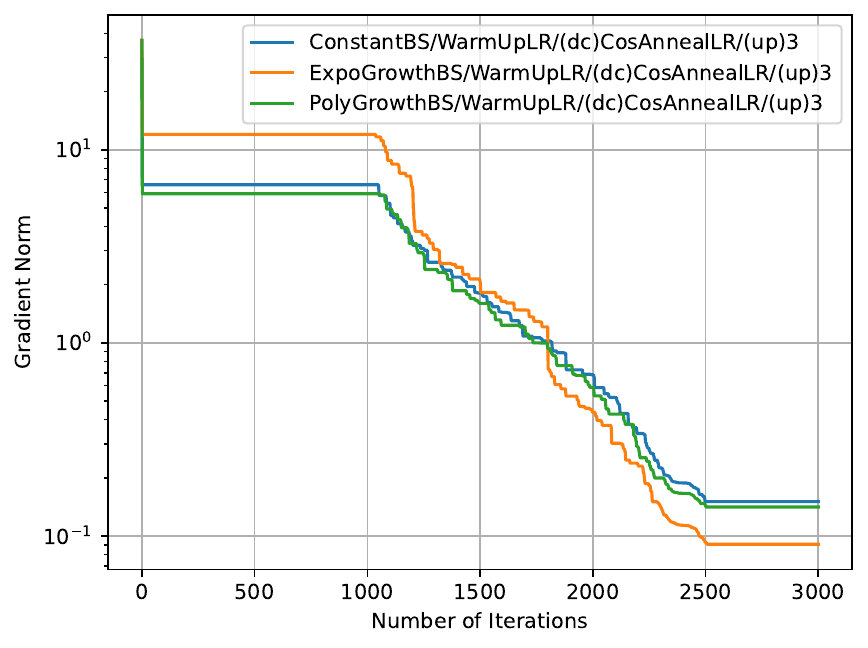}
\includegraphics[width=0.24\linewidth]{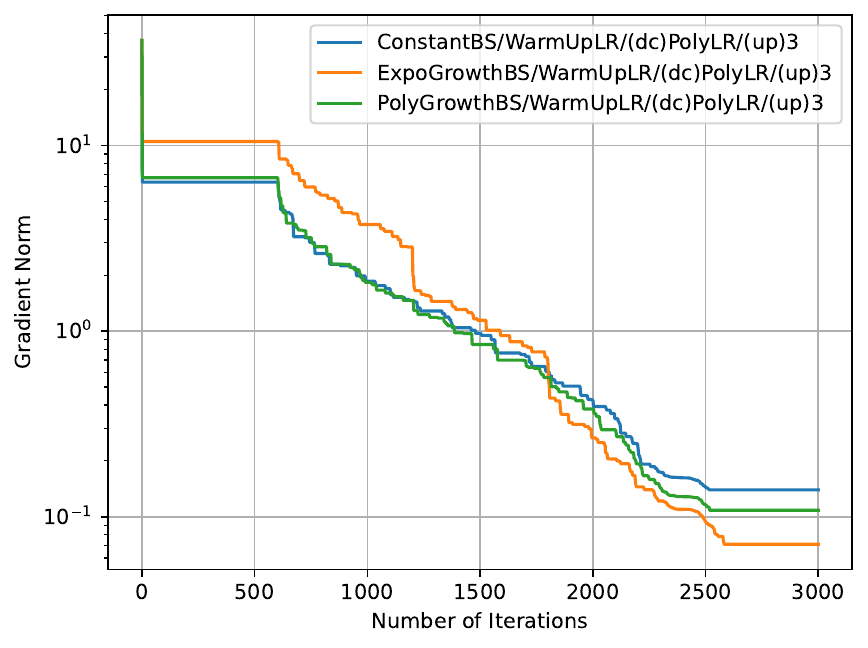}
\caption{Norm of the gradient of the objective function versus number of iterations for warm-up LRs that have an increasing part with three increments on MNIST dataset (PCA).}
\label{fig:MNIST_grad_wup3}
\end{figure}

\begin{figure}[htbp]
\centering
\includegraphics[width=0.24\linewidth]{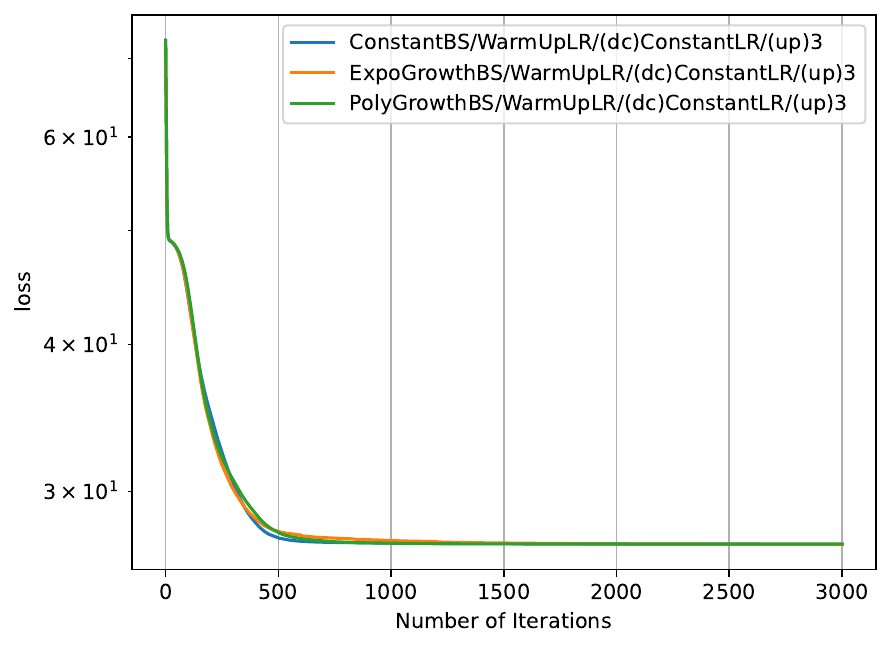}
\includegraphics[width=0.24\linewidth]{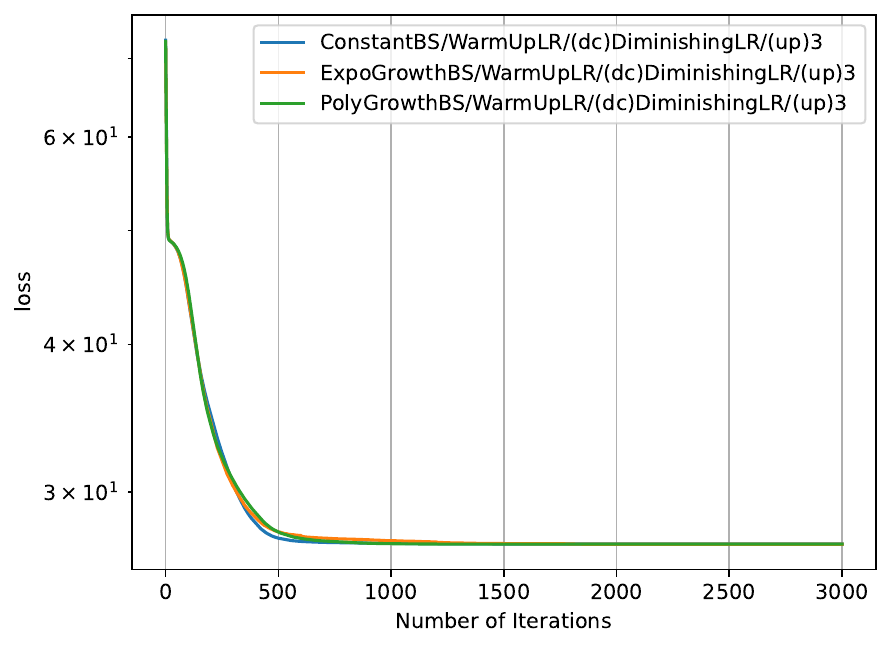}
\includegraphics[width=0.24\linewidth]{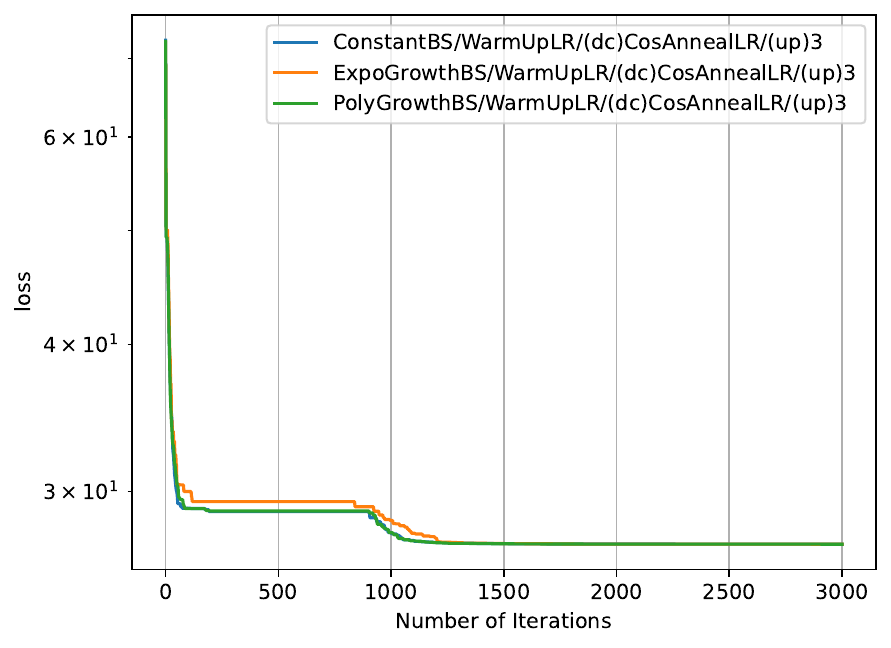}
\includegraphics[width=0.24\linewidth]{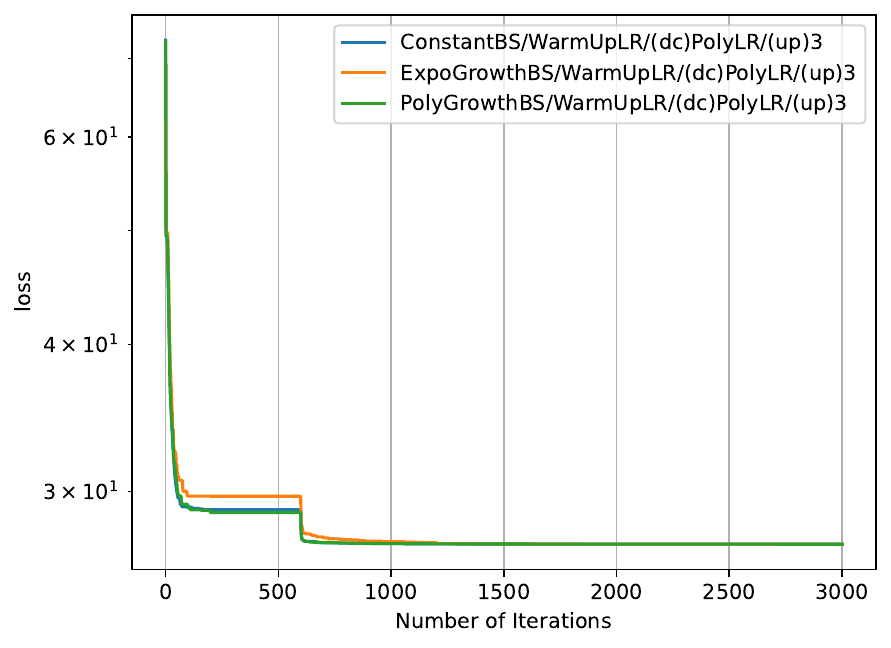}
\caption{Objective function value (loss) versus number of iterations for warm-up LRs that have an increasing part with three increments on MNIST dataset (PCA).}
\label{fig:MNIST_loss_wup3}
\end{figure}

\noindent$\textbf{[Case B]} \quad l=3, l_w=8$\\
\noindent
Figures \ref{fig:COIL100_grad_wup8} and \ref{fig:MNIST_grad_wup8} plot performance in terms of the gradient norm of the objective function versus the number of iterations for a warm-up LR with decay parts given by \eqref{eq:const_lr}, \eqref{eq:dim_lr}, \eqref{eq:cosan_lr}, and \eqref{eq:poly_dec_lr} for the COIL100 and MNIST datasets, respectively. Figures \ref{fig:COIL100_loss_wup8} and \ref{fig:MNIST_loss_wup8} plot performance in terms of the objective function value versus the number of iterations for a warm-up LR with decay parts given by \eqref{eq:const_lr}, \eqref{eq:dim_lr}, \eqref{eq:cosan_lr}, and \eqref{eq:poly_dec_lr} for the COIL100 and MNIST datasets, respectively.
\begin{figure}[htbp]
\centering
\includegraphics[width=0.24\linewidth]{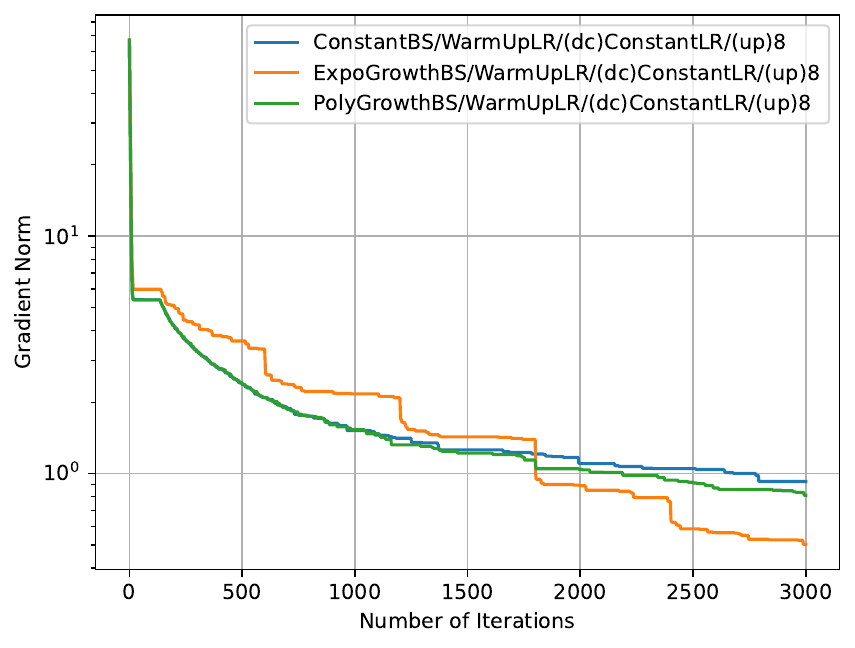}
\includegraphics[width=0.24\linewidth]{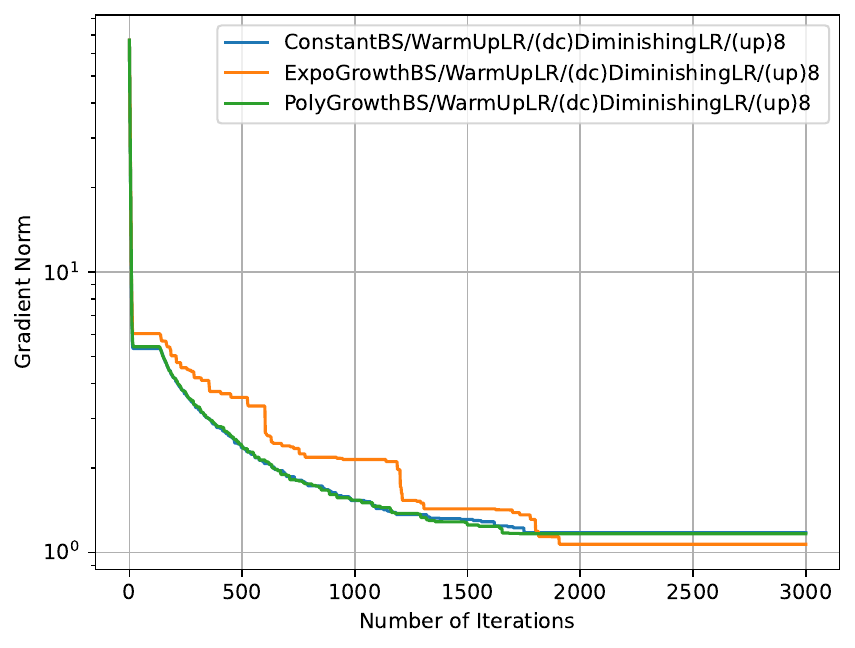}
\includegraphics[width=0.24\linewidth]{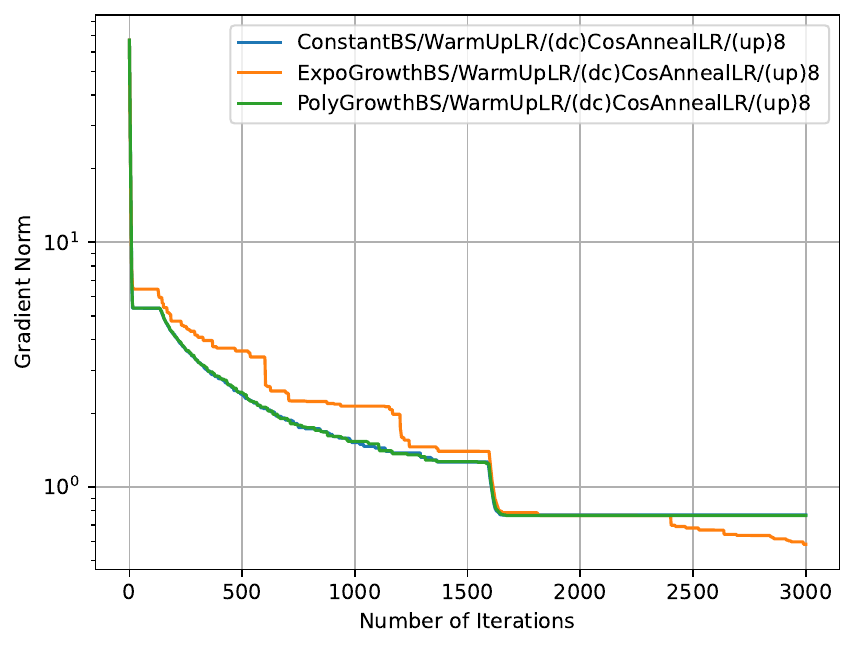}
\includegraphics[width=0.24\linewidth]{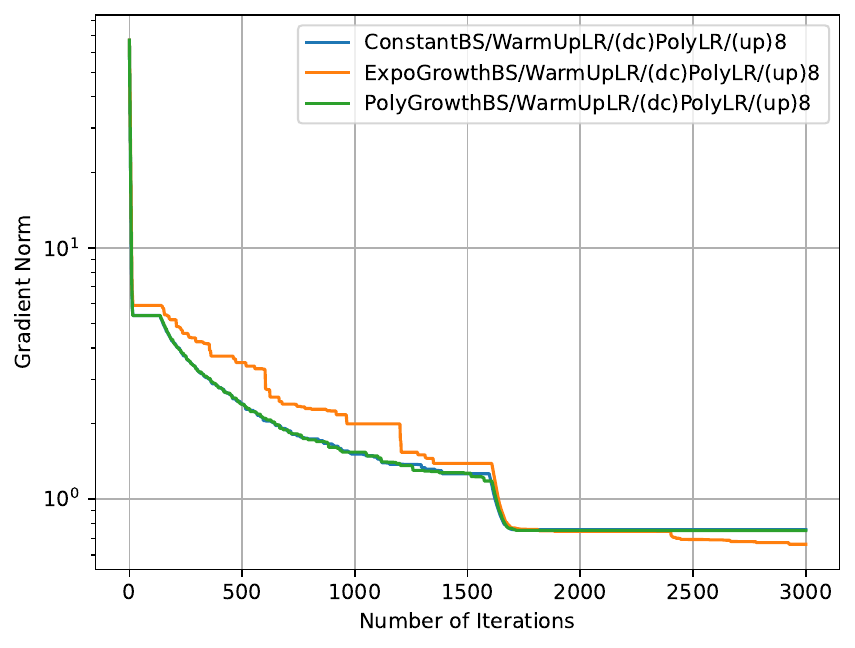}
\caption{Norm of the gradient of the objective function versus number of iterations for warm-up LRs that have an increasing part with eight increments on COIL100 dataset (PCA).}
\label{fig:COIL100_grad_wup8}
\end{figure}

\begin{figure}[htbp]
\centering
\includegraphics[width=0.24\linewidth]{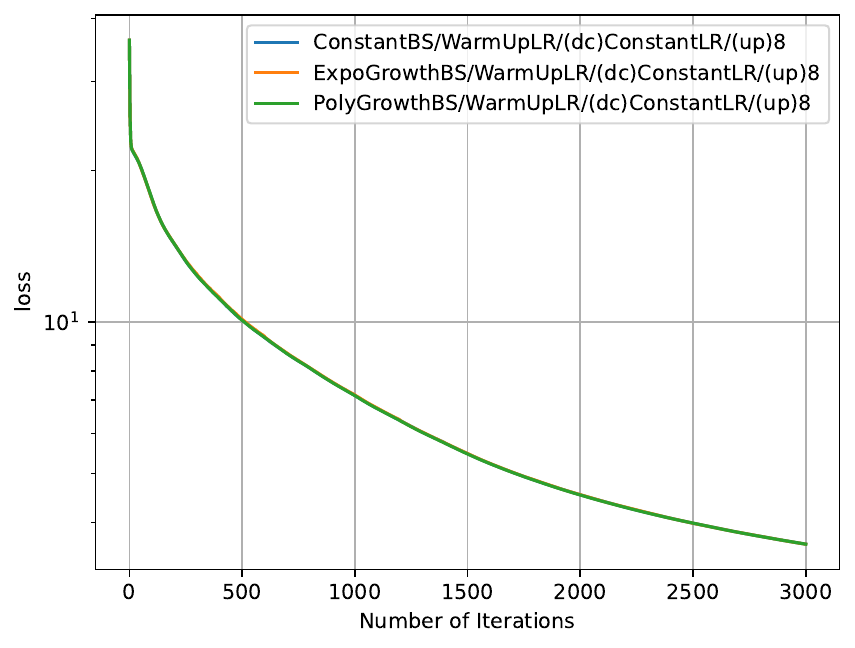}
\includegraphics[width=0.24\linewidth]{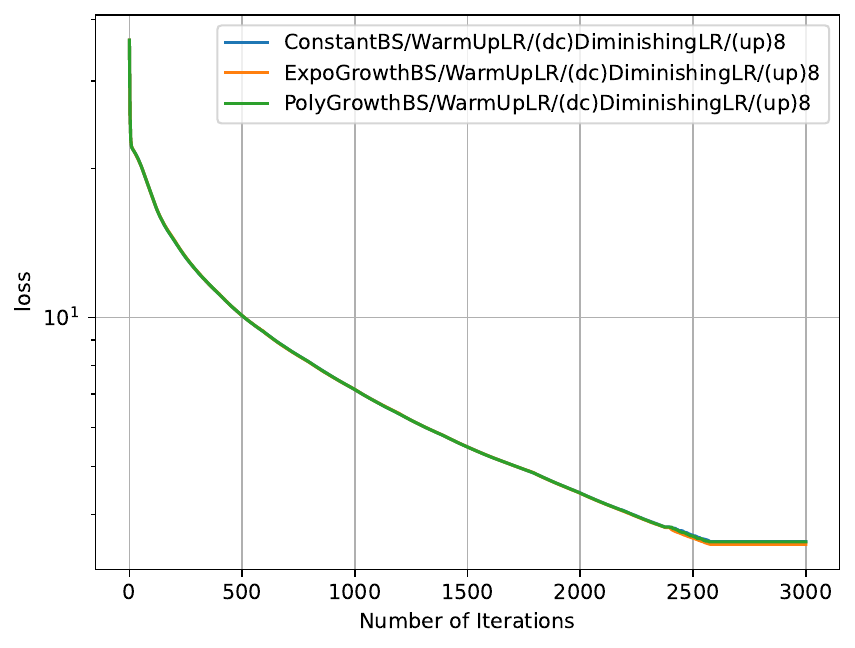}
\includegraphics[width=0.24\linewidth]{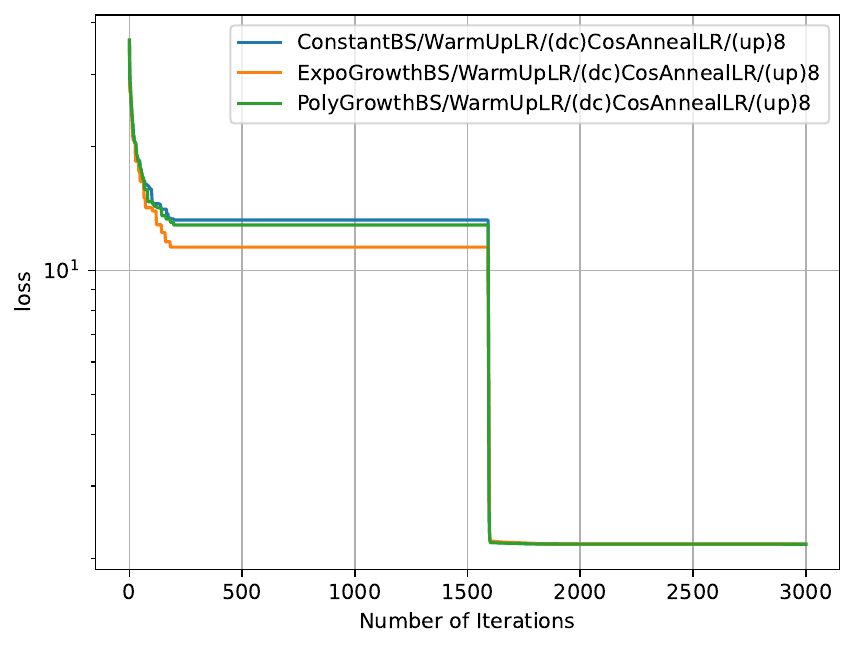}
\includegraphics[width=0.24\linewidth]{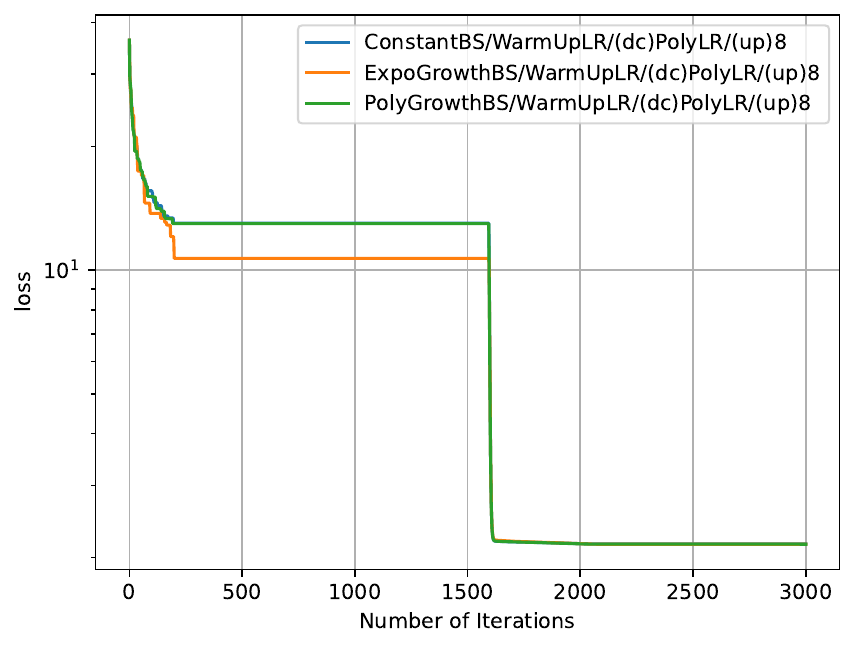}
\caption{Objective function value (loss) versus number of iterations for warm-up LRs that have an increasing part with eight increments on COIL100 dataset (PCA).}
\label{fig:COIL100_loss_wup8}
\end{figure}

\begin{figure}[htbp]
\centering
\includegraphics[width=0.24\linewidth]{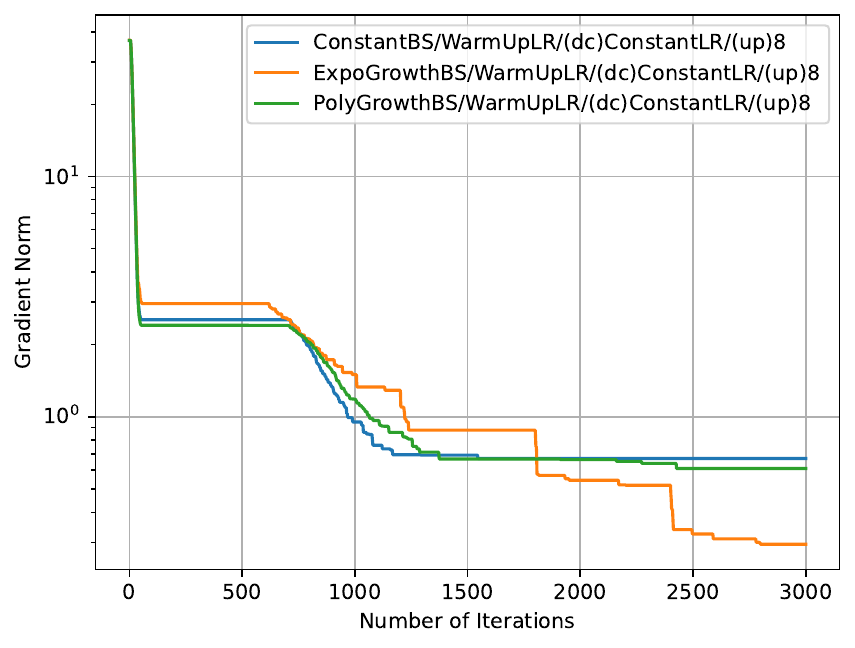}
\includegraphics[width=0.24\linewidth]{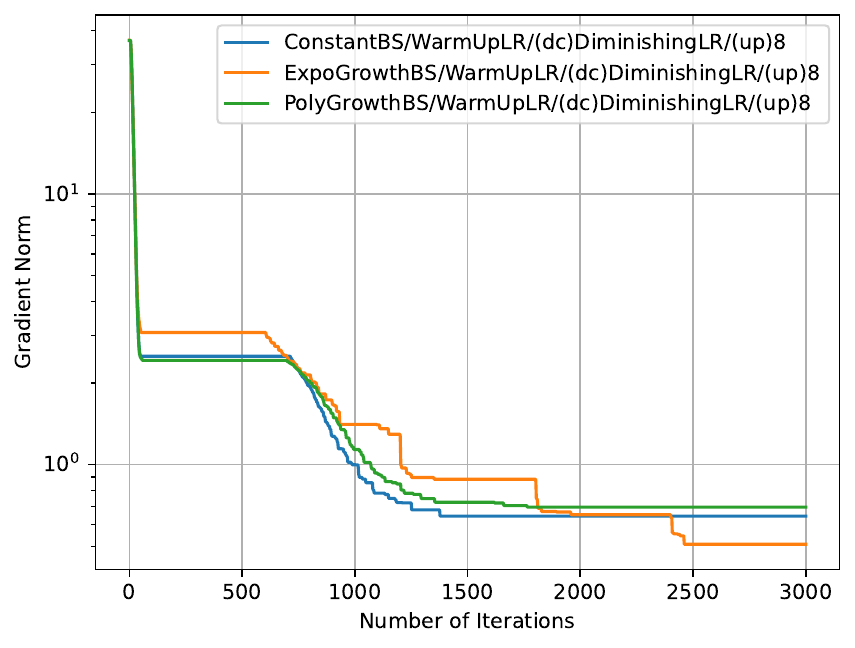}
\includegraphics[width=0.24\linewidth]{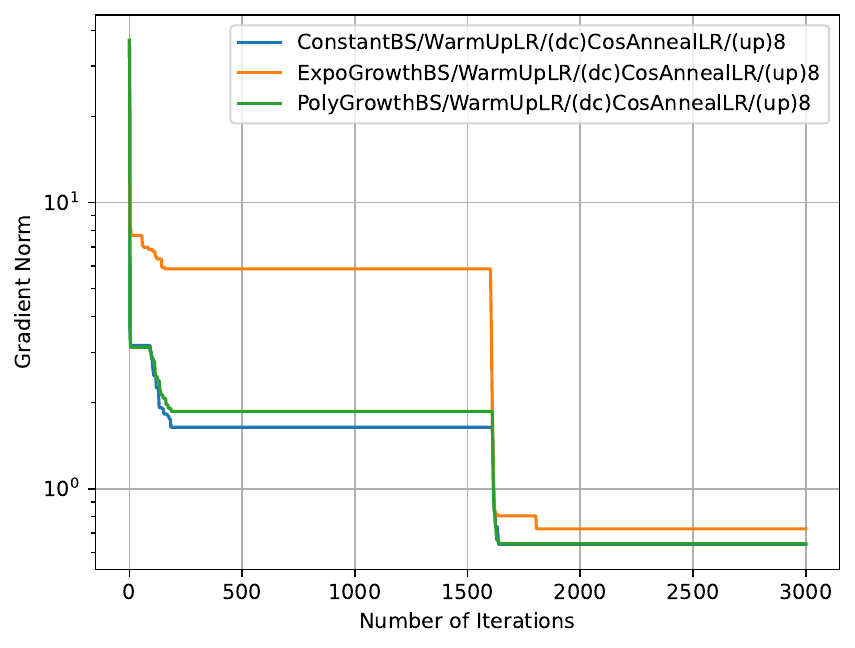}
\includegraphics[width=0.24\linewidth]{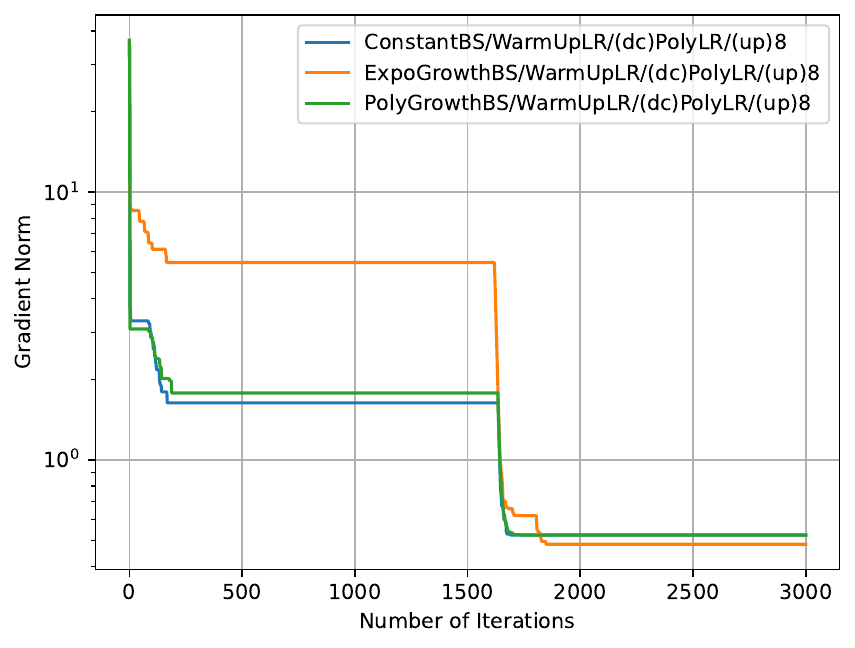}
\caption{Norm of the gradient of the objective function versus number of iterations for warm-up LRs that have an increasing part with eight increments on MNIST dataset (PCA).}
\label{fig:MNIST_grad_wup8}
\end{figure}

\begin{figure}[htbp]
\centering
\includegraphics[width=0.24\linewidth]{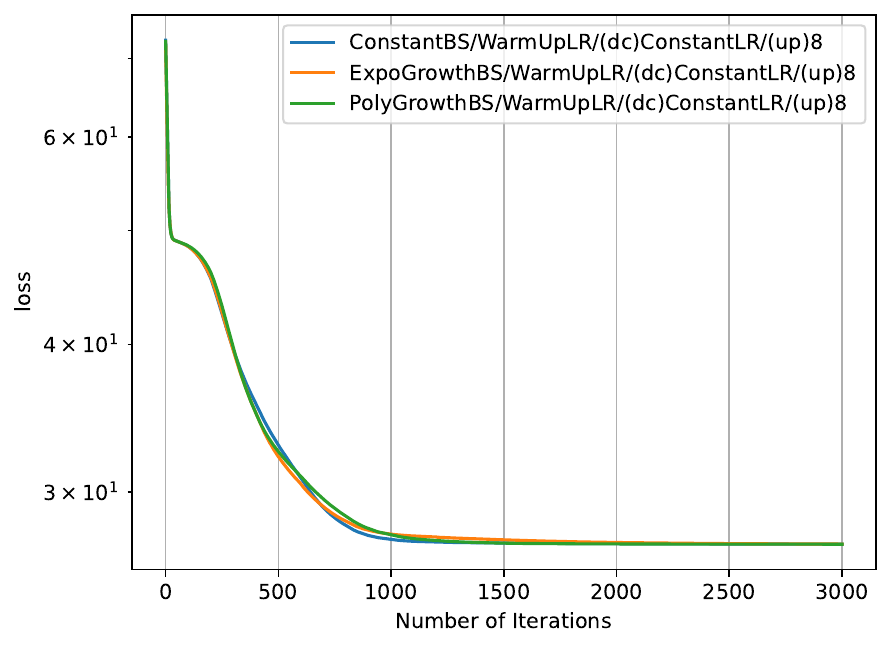}
\includegraphics[width=0.24\linewidth]{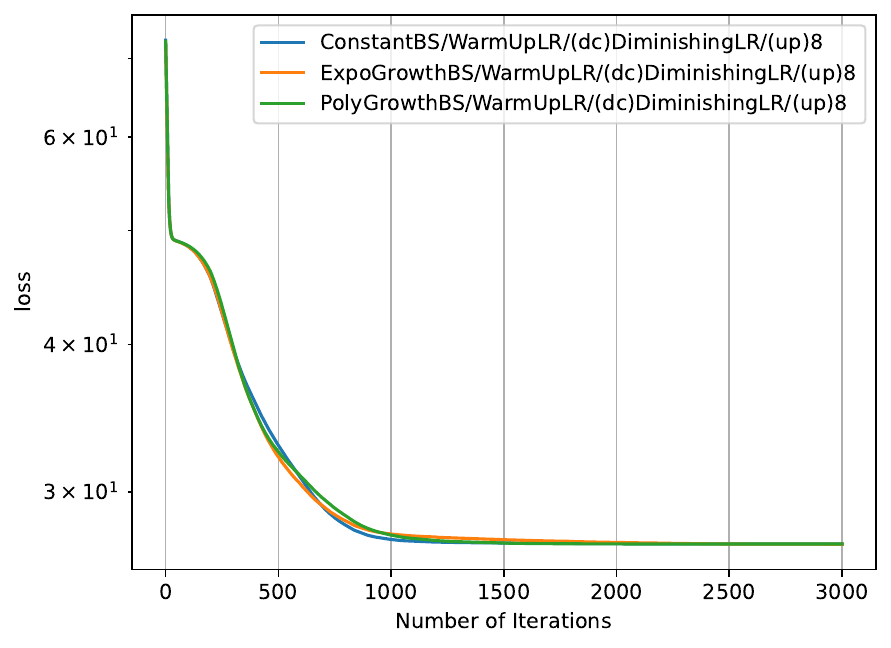}
\includegraphics[width=0.24\linewidth]{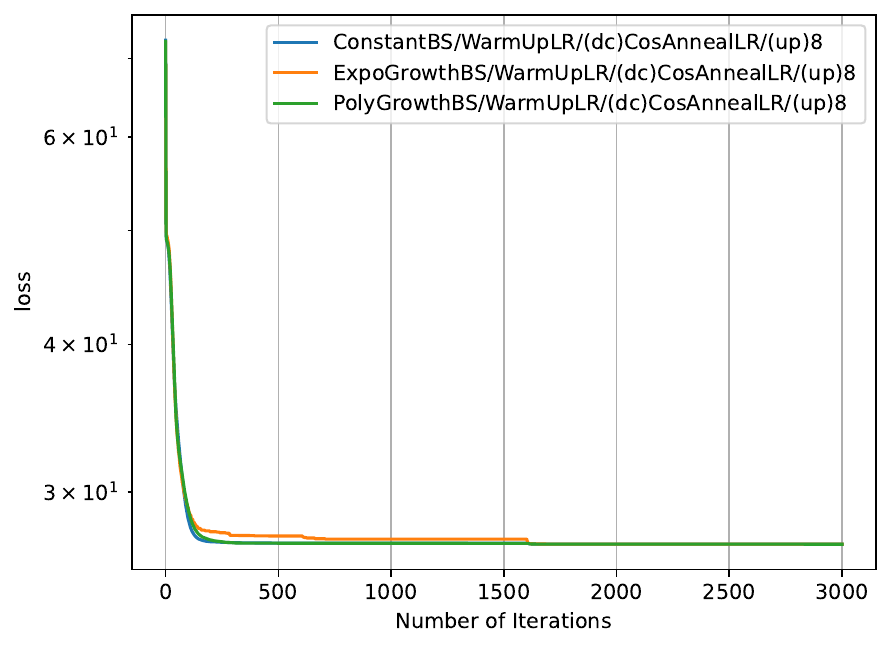}
\includegraphics[width=0.24\linewidth]{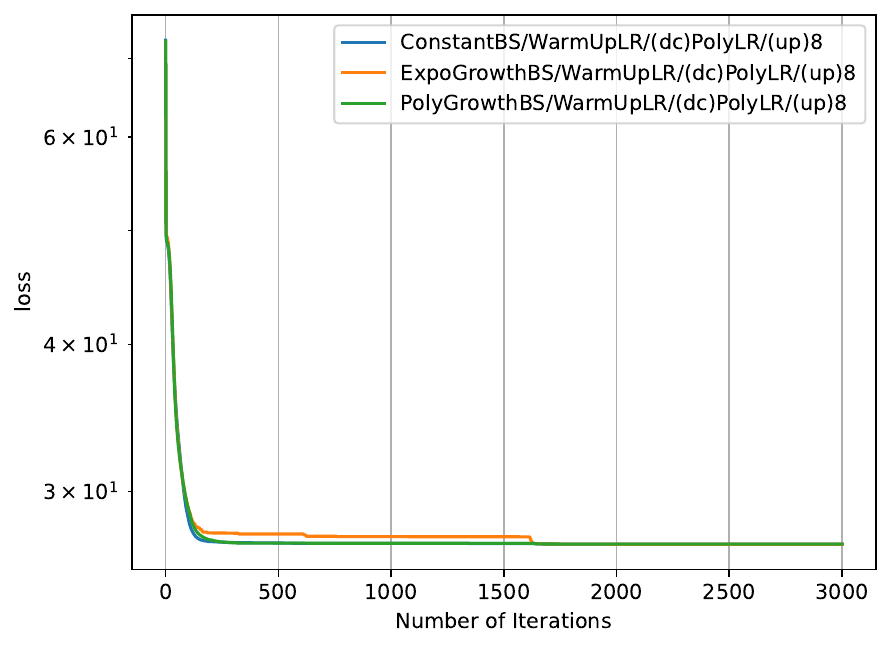}
\caption{Objective function value (loss) versus number of iterations for warm-up LRs that have an increasing part with eight increments on MNIST dataset (PCA).}
\label{fig:MNIST_loss_wup8}
\end{figure}

\subsection{Low-rank Matrix Completion}
\noindent$\textbf{[Case C]} \quad l=l_w=3$\\
\noindent
Figures \ref{fig:ml-1m_grad_wup3} and \ref{fig:jester_grad_wup3} plot performance in terms of the gradient norm of the objective function versus the number of iterations for a warm-up LR with decay parts given by \eqref{eq:const_lr}, \eqref{eq:dim_lr}, \eqref{eq:cosan_lr}, and \eqref{eq:poly_dec_lr} on the MovieLens-1M and Jester datasets, respectively. Figures \ref{fig:ml-1m_grad_wup3} and \ref{fig:jester_loss_wup3} plot performance in terms of the objective function value versus the number of iterations for a warm-up LR with decay parts given by \eqref{eq:const_lr}, \eqref{eq:dim_lr}, \eqref{eq:cosan_lr}, and \eqref{eq:poly_dec_lr} on the MovieLens-1M and Jester datasets, respectively.
\begin{figure}[htbp]
\centering
\includegraphics[width=0.24\linewidth]{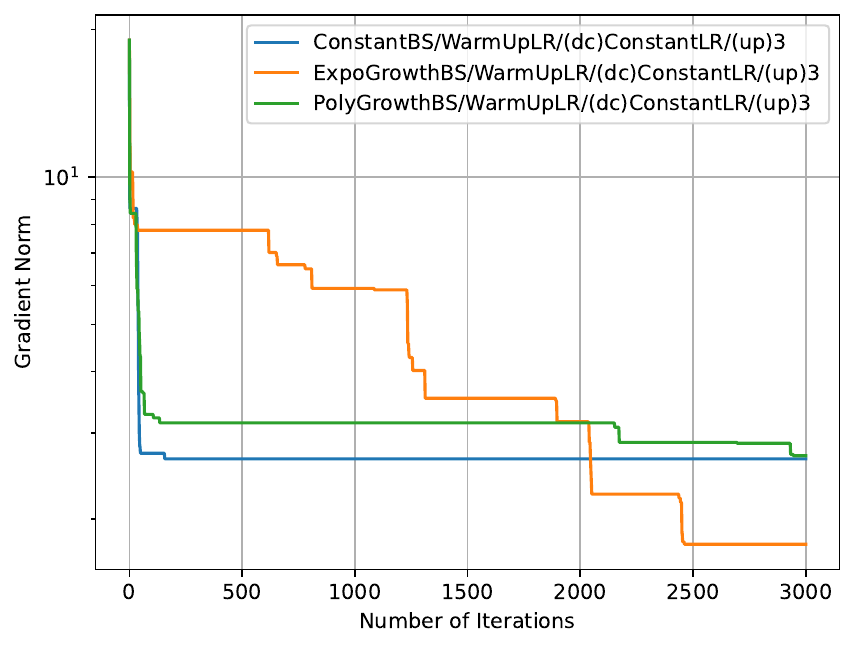}
\includegraphics[width=0.24\linewidth]{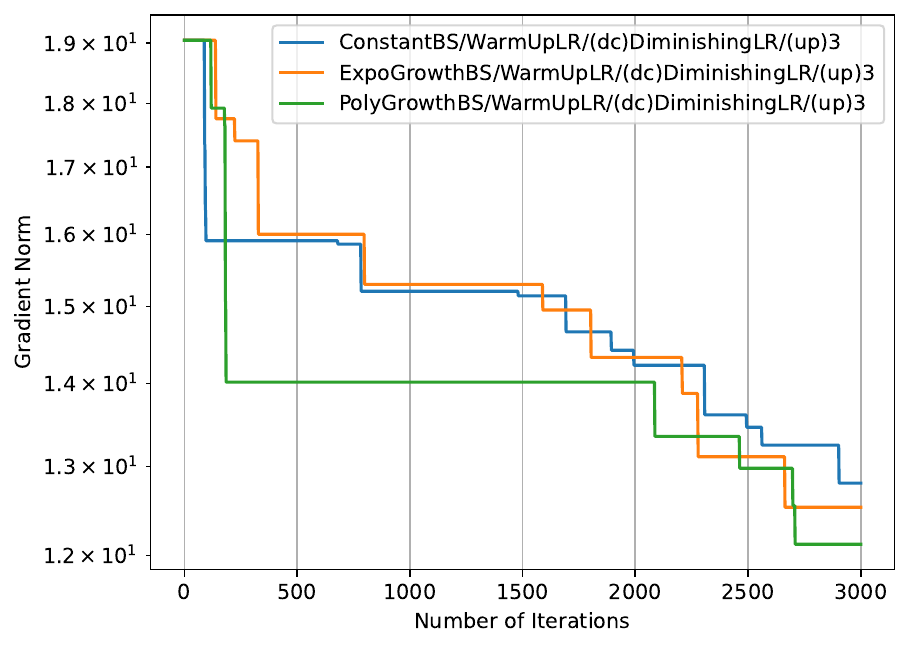}
\includegraphics[width=0.24\linewidth]{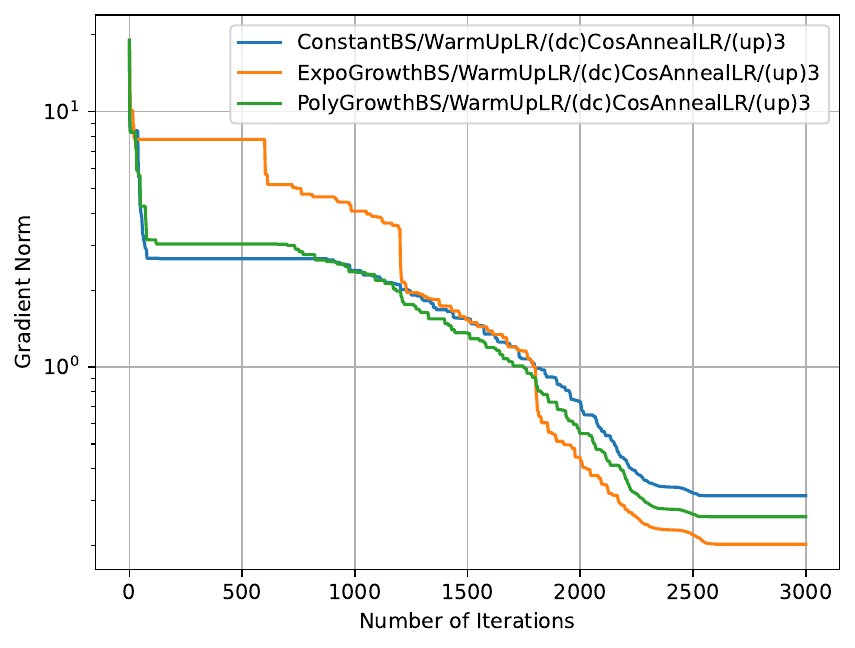}
\includegraphics[width=0.24\linewidth]{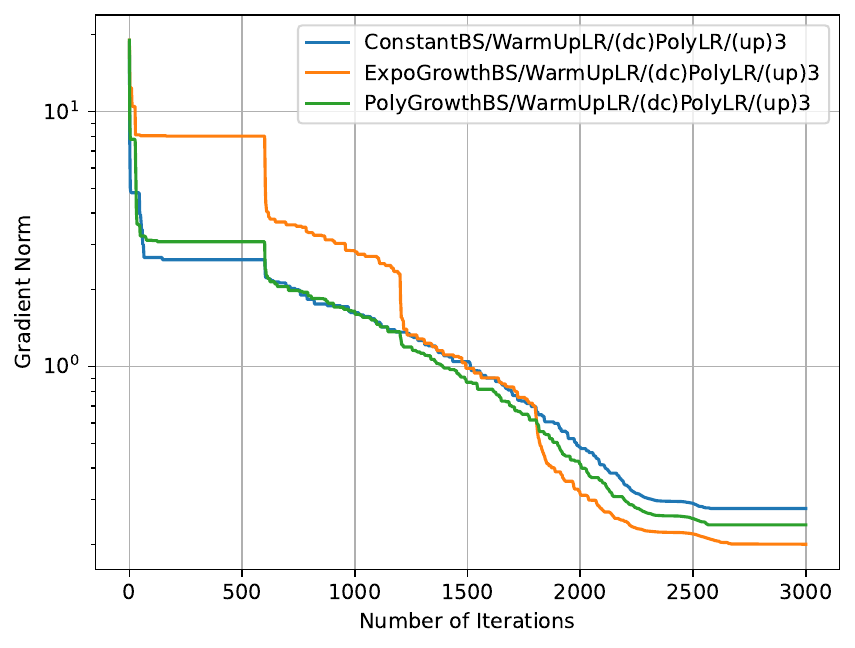}
\caption{Norm of the gradient of the objective function versus number of iterations for warm-up LRs that have an increasing part with three increments on MovieLens-1M dataset (LRMC).}
\label{fig:ml-1m_grad_wup3}
\end{figure}

\begin{figure}[htbp]
\centering
\includegraphics[width=0.24\linewidth]{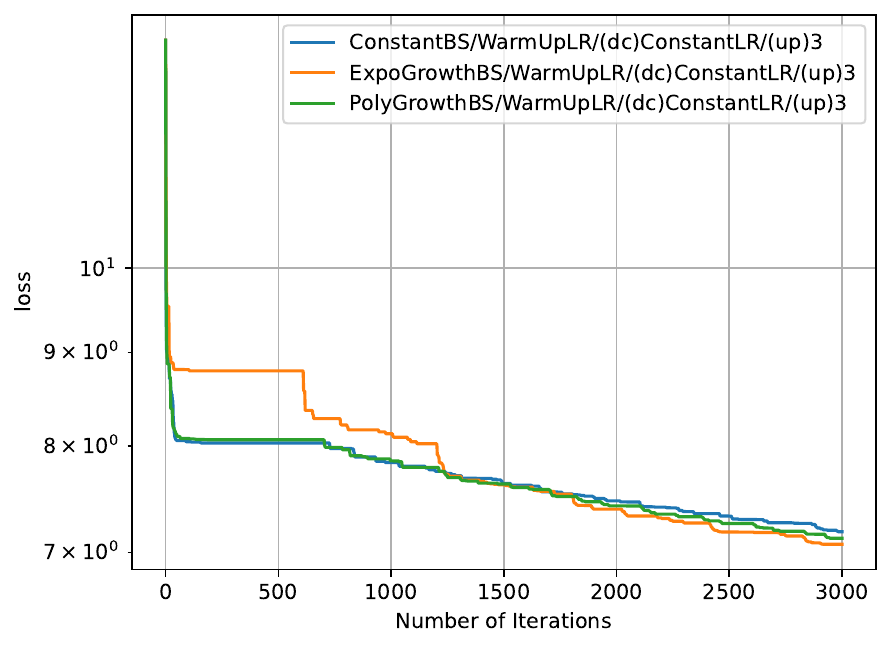}
\includegraphics[width=0.24\linewidth]{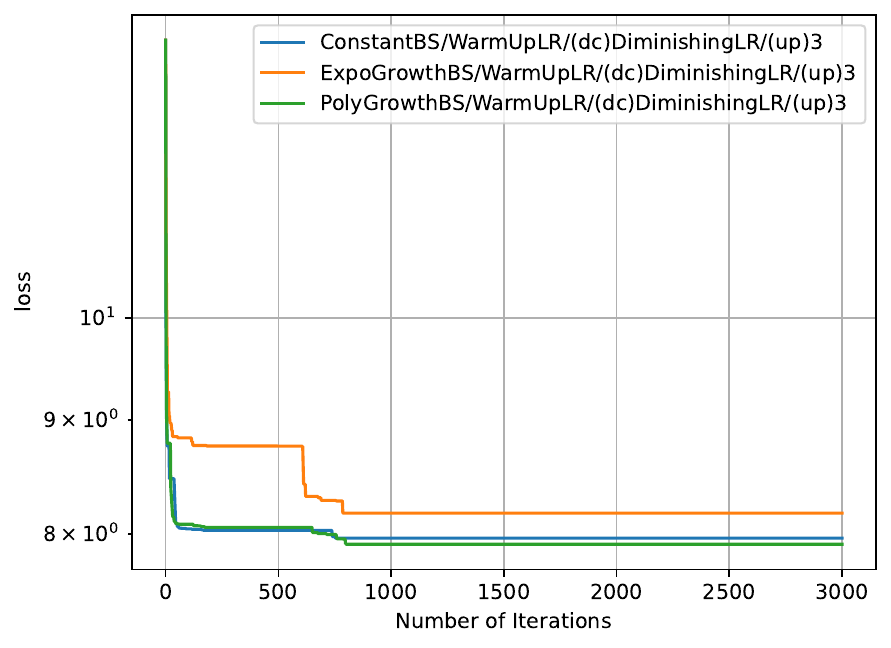}
\includegraphics[width=0.24\linewidth]{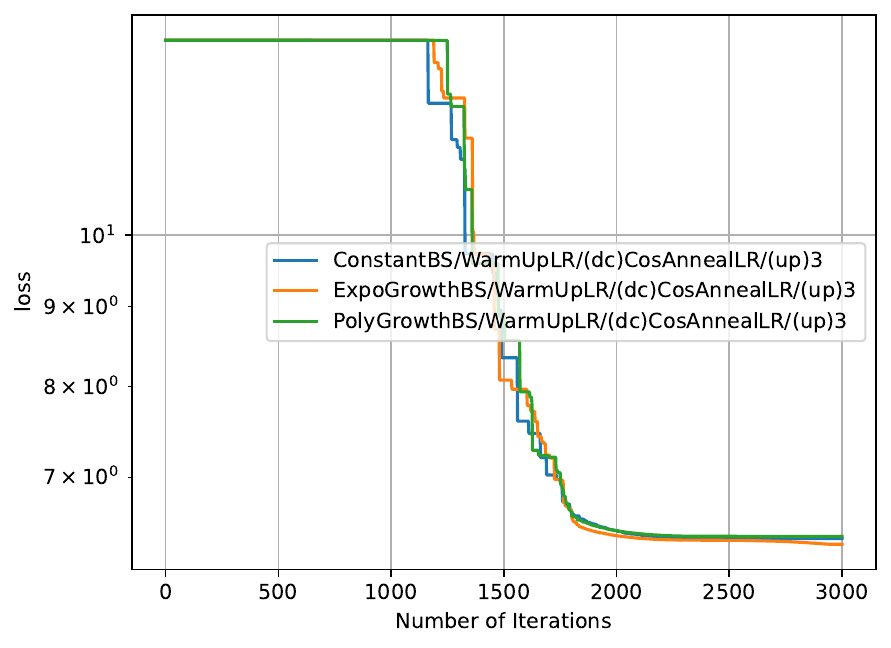}
\includegraphics[width=0.24\linewidth]{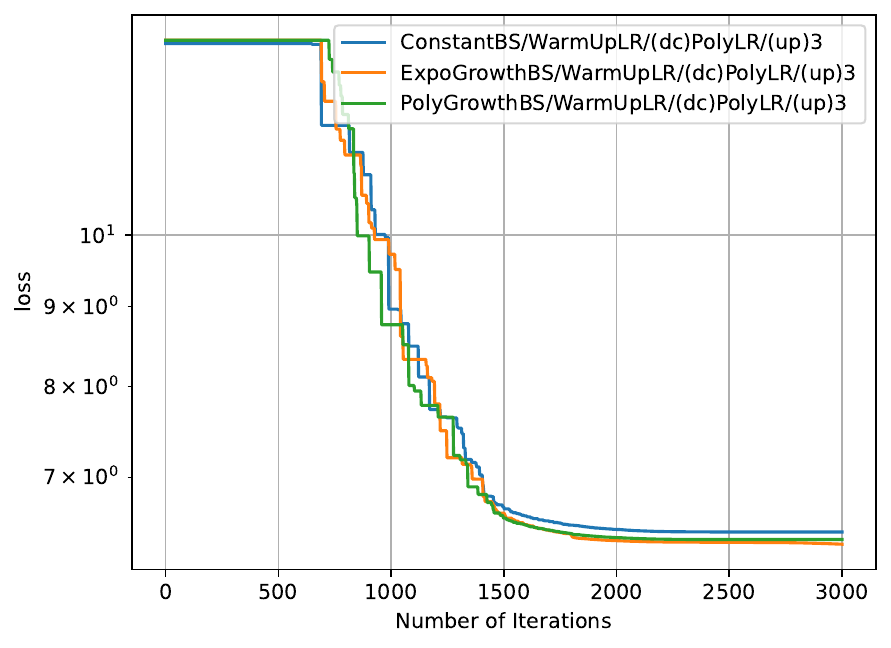}
\caption{Objective function value (loss) versus number of iterations for warm-up LRs that have an increasing part with three increments on MovieLens-1M dataset (LRMC).}
\label{fig:ml-1m_loss_wup3}
\end{figure}

\begin{figure}[htbp]
\centering
\includegraphics[width=0.24\linewidth]{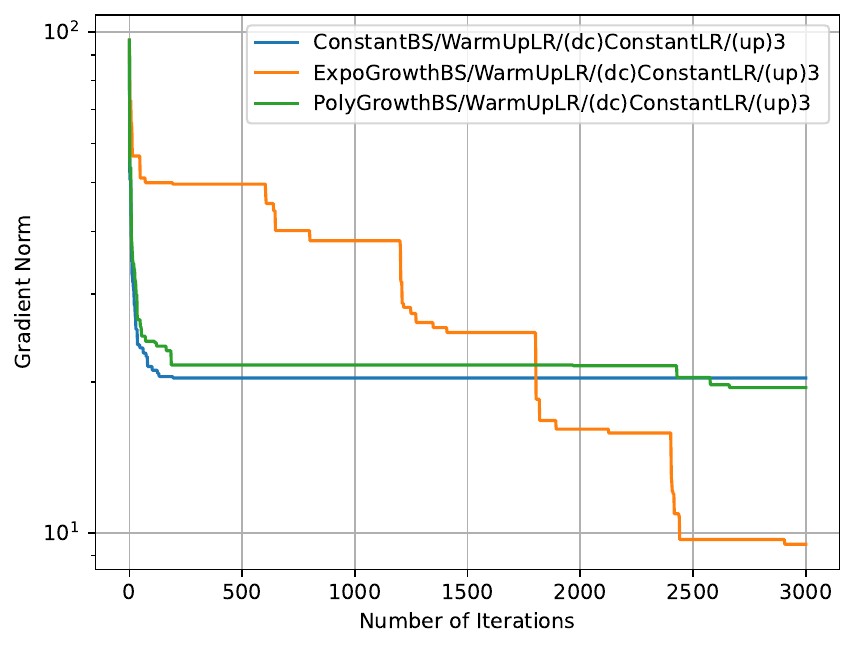}
\includegraphics[width=0.24\linewidth]{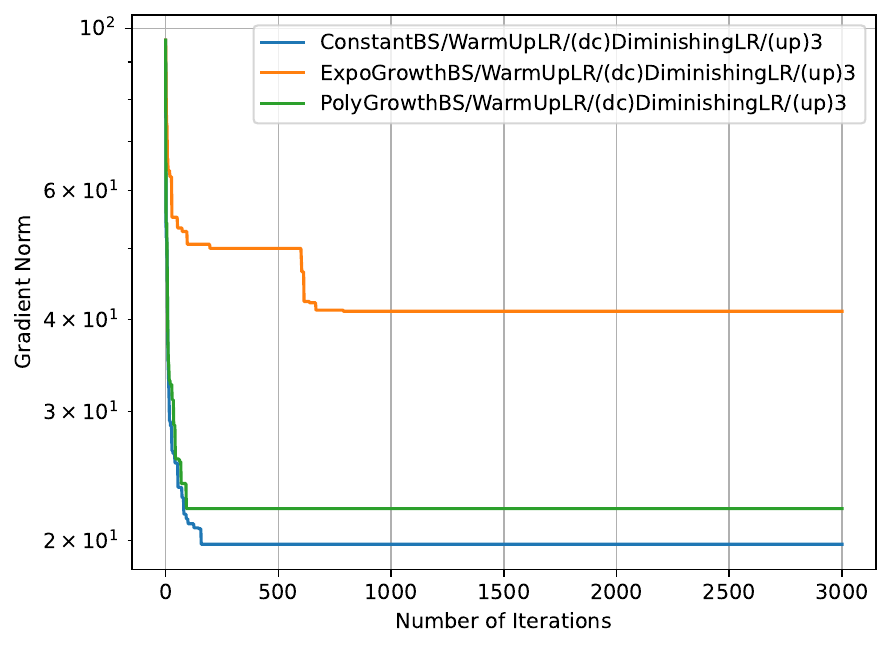}
\includegraphics[width=0.24\linewidth]{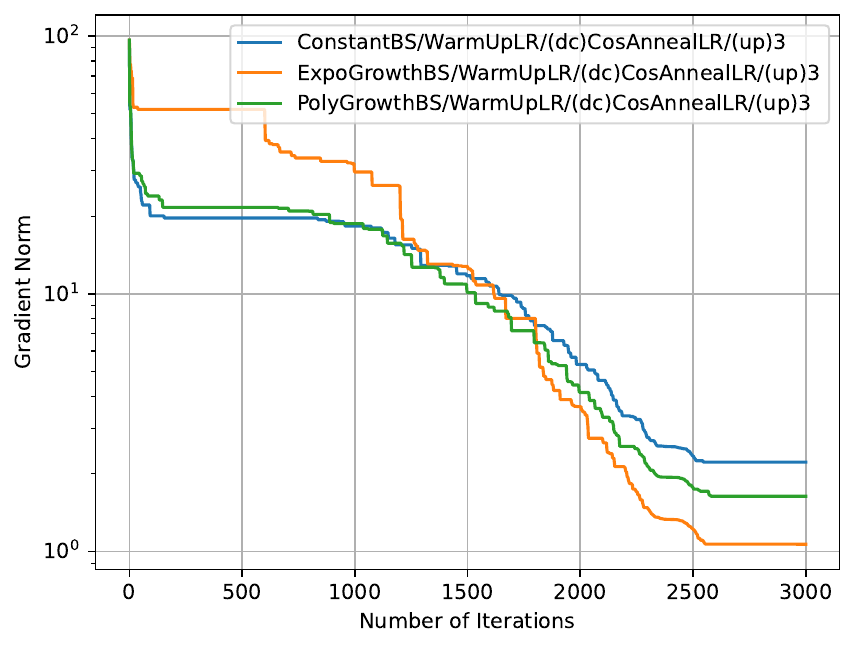}
\includegraphics[width=0.24\linewidth]{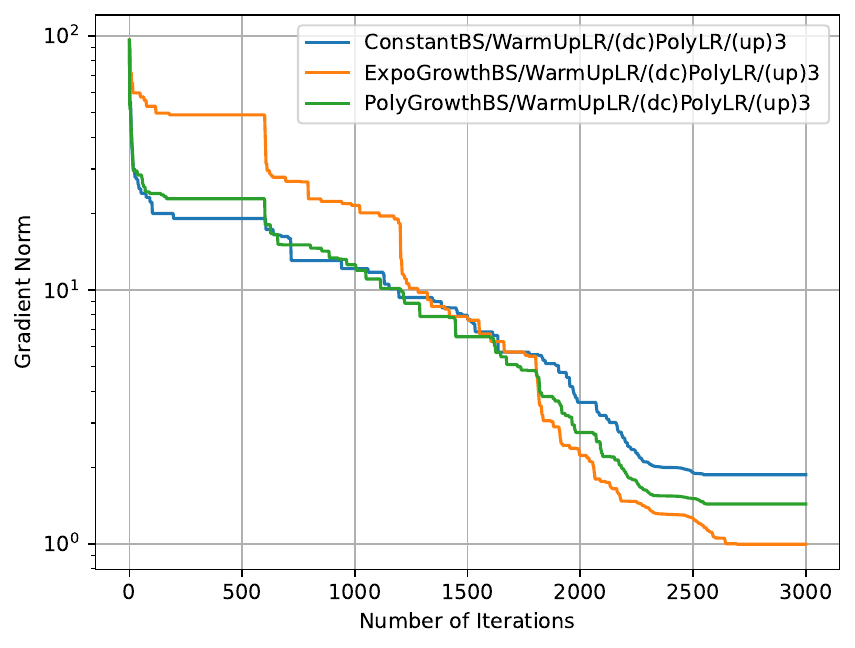}
\caption{Norm of the gradient of the objective function versus number of iterations for warm-up LRs that have an increasing part with three increments on Jester dataset (LRMC).}
\label{fig:jester_grad_wup3}
\end{figure}

\begin{figure}[htbp]
\centering
\includegraphics[width=0.24\linewidth]{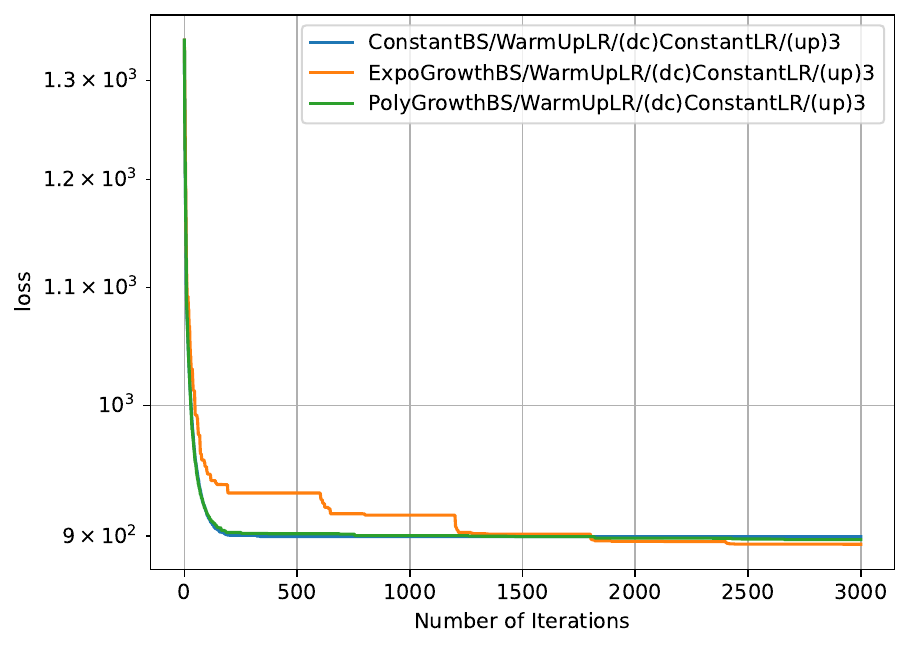}
\includegraphics[width=0.24\linewidth]{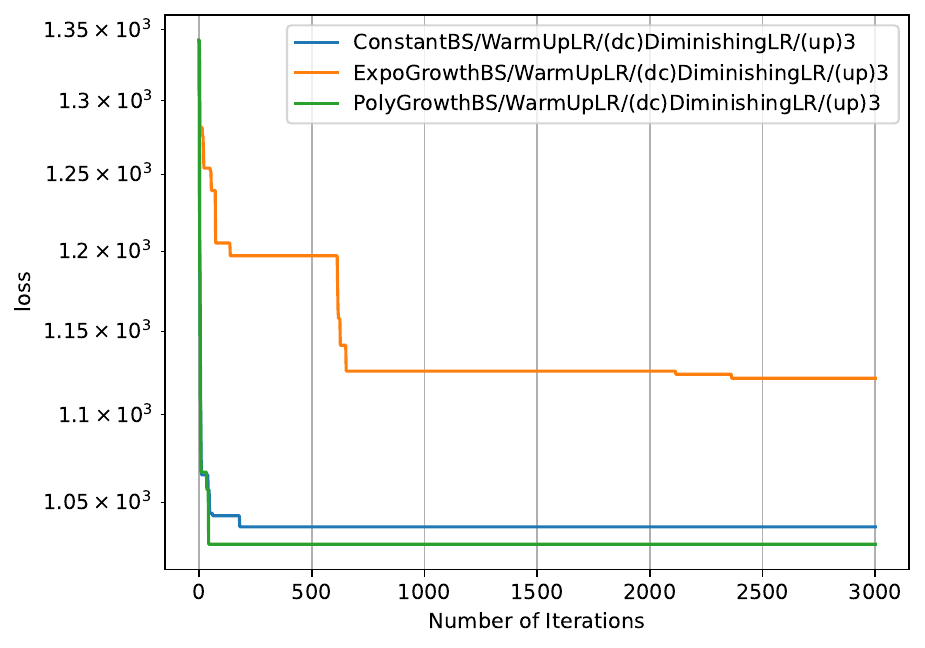}
\includegraphics[width=0.24\linewidth]{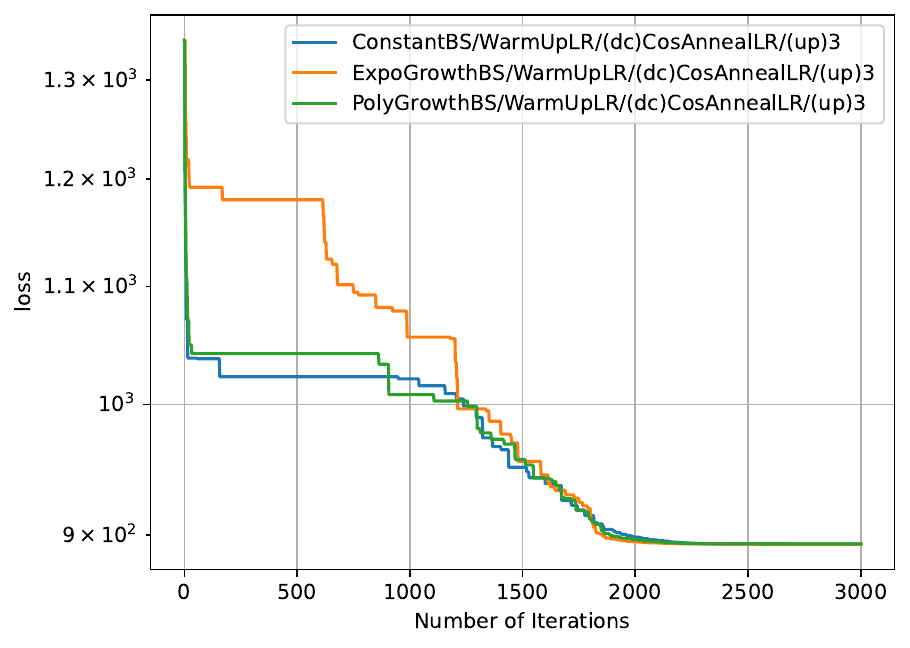}
\includegraphics[width=0.24\linewidth]{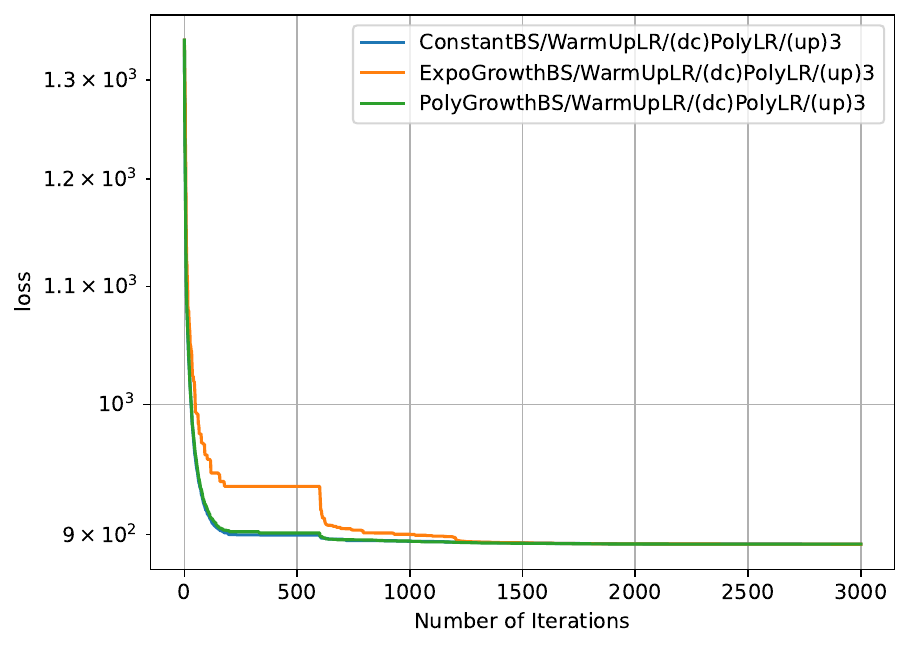}
\caption{Objective function value (loss) versus number of iterations for warm-up LRs that have an increasing part with three increments on Jester dataset (LRMC).}
\label{fig:jester_loss_wup3}
\end{figure}

\noindent$\textbf{[Case D]} \quad l=3, l_w=8$\\
\noindent
Figures \ref{fig:ml-1m_grad_wup8} and \ref{fig:jester_grad_wup8} plot performance in terms of the gradient norm of the objective function versus the number of iterations for a warm-up LR with decay parts given by \eqref{eq:const_lr}, \eqref{eq:dim_lr}, \eqref{eq:cosan_lr}, and \eqref{eq:poly_dec_lr} on the MovieLens-1M and Jester datasets, respectively. Figures \ref{fig:ml-1m_grad_wup8} and \ref{fig:jester_loss_wup8} plot performance in terms of the objective function value versus the number of iterations for a warm-up LR with decay parts given by \eqref{eq:const_lr}, \eqref{eq:dim_lr}, \eqref{eq:cosan_lr}, and \eqref{eq:poly_dec_lr} on the MovieLens-1M and Jester datasets, respectively.
\begin{figure}[htbp]
\centering
\includegraphics[width=0.24\linewidth]{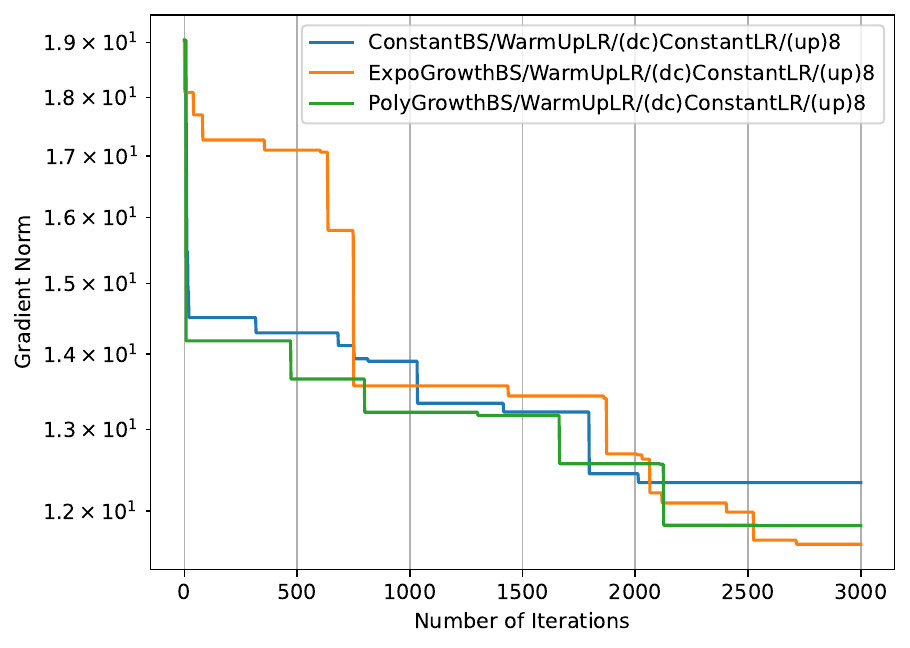}
\includegraphics[width=0.24\linewidth]{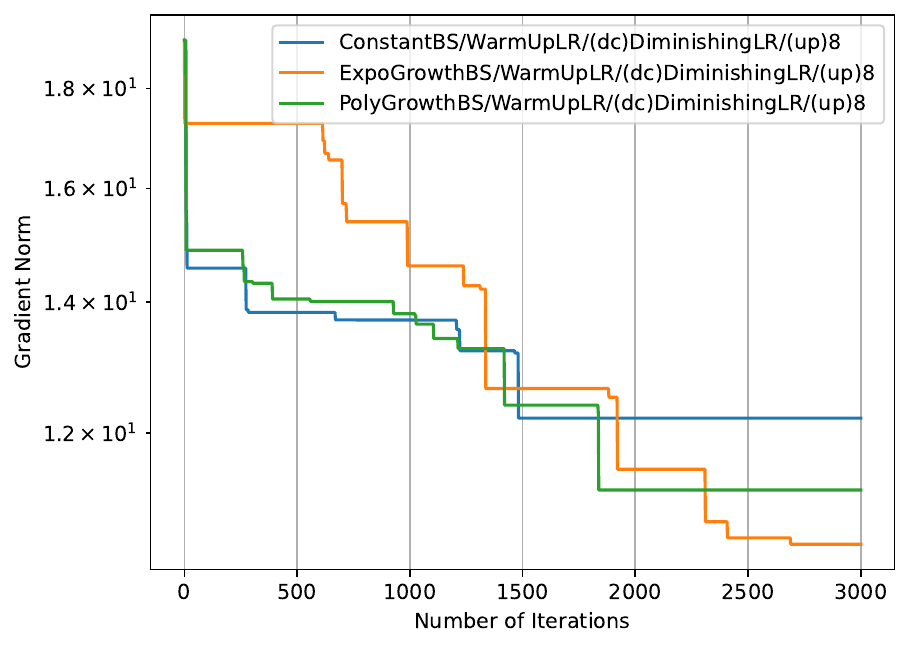}
\includegraphics[width=0.24\linewidth]{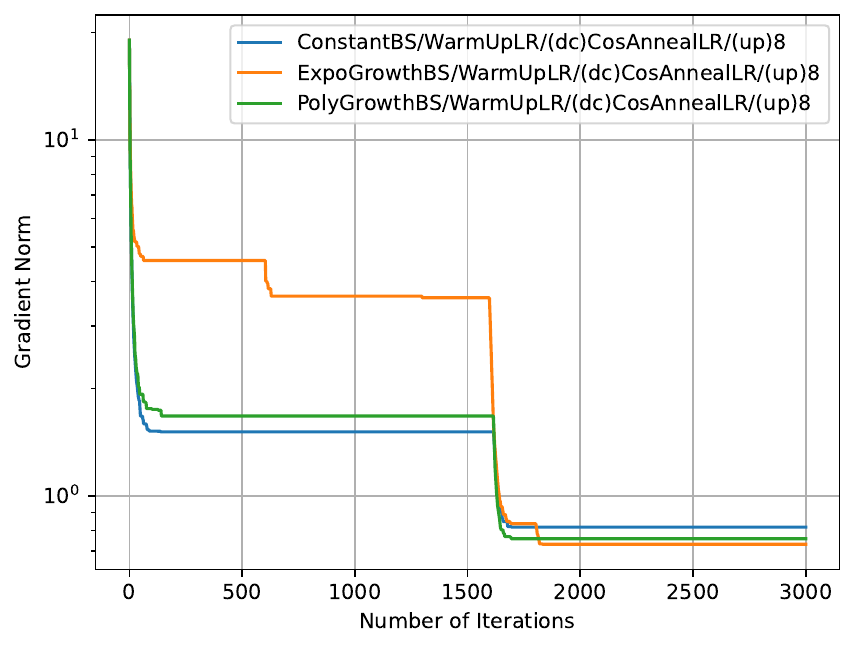}
\includegraphics[width=0.24\linewidth]{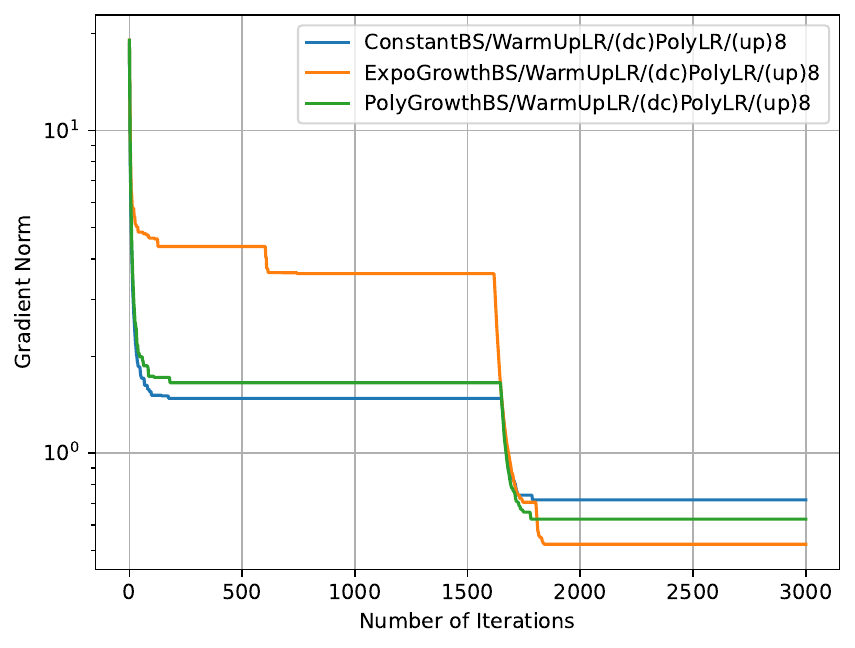}
\caption{Norm of the gradient of the objective function versus number of iterations for warm-up LRs that have an increasing part with eight increments on MovieLens-1M dataset (LRMC).}
\label{fig:ml-1m_grad_wup8}
\end{figure}

\begin{figure}[htbp]
\centering
\includegraphics[width=0.24\linewidth]{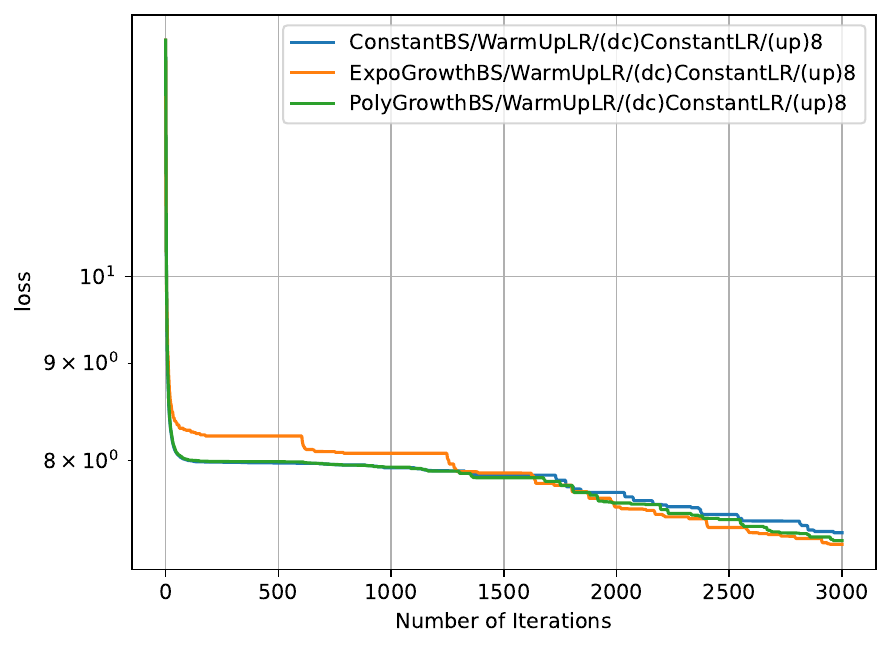}
\includegraphics[width=0.24\linewidth]{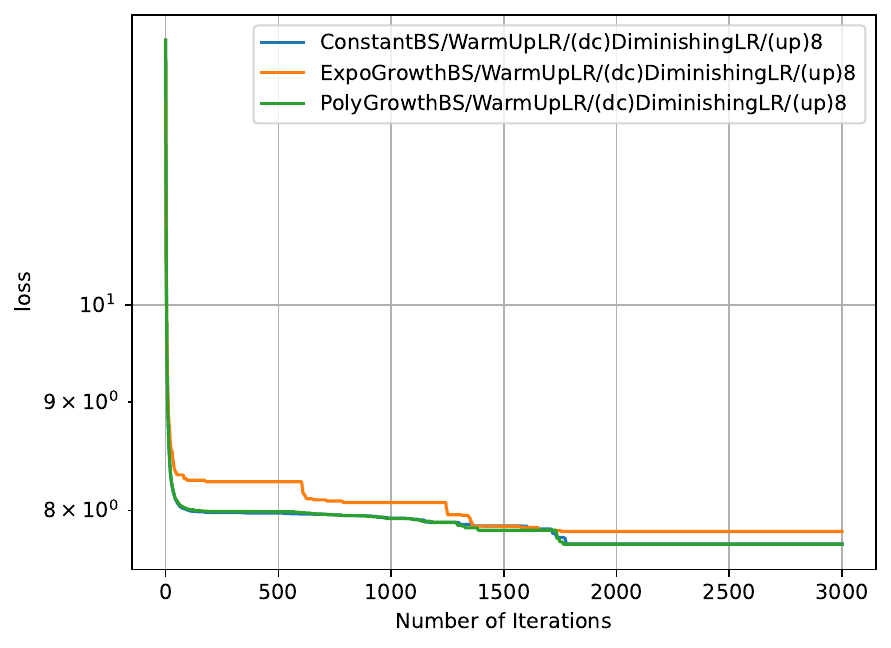}
\includegraphics[width=0.24\linewidth]{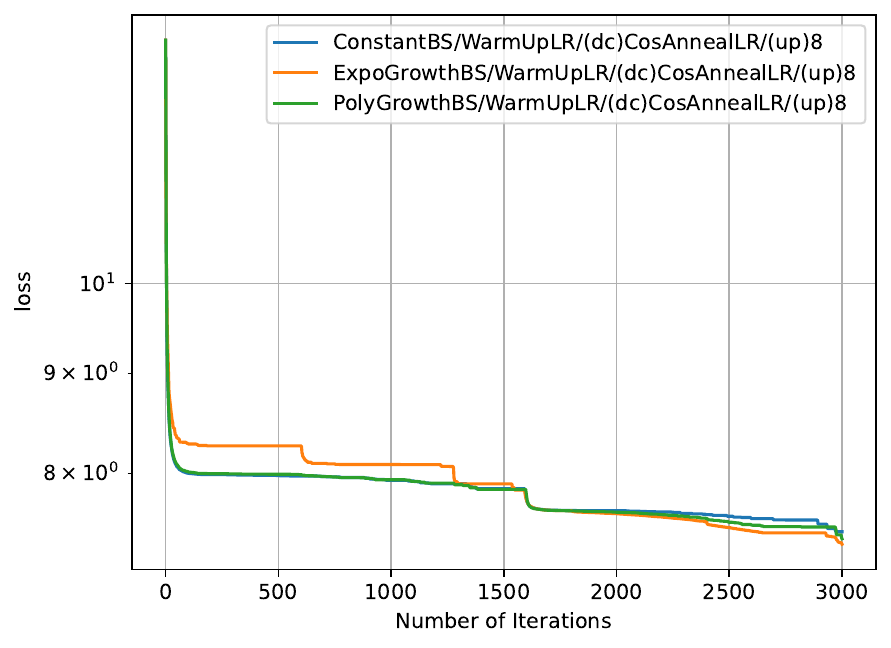}
\includegraphics[width=0.24\linewidth]{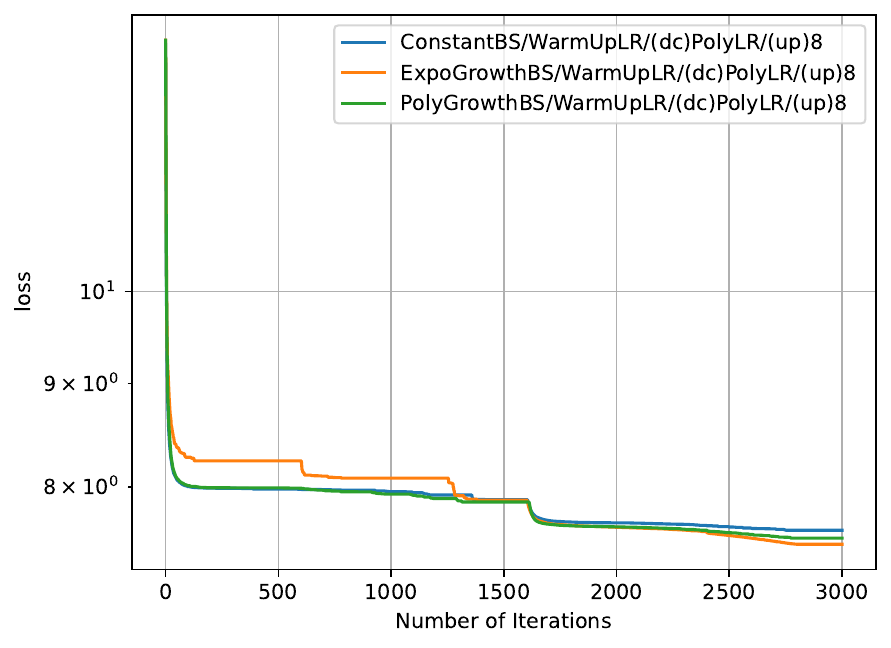}
\caption{Objective function value (loss) versus number of iterations for warm-up LRs that have an increasing part with eight increments on MovieLens-1M dataset (LRMC).}
\label{fig:ml-1m_loss_wup8}
\end{figure}

\begin{figure}[htbp]
\centering
\includegraphics[width=0.24\linewidth]{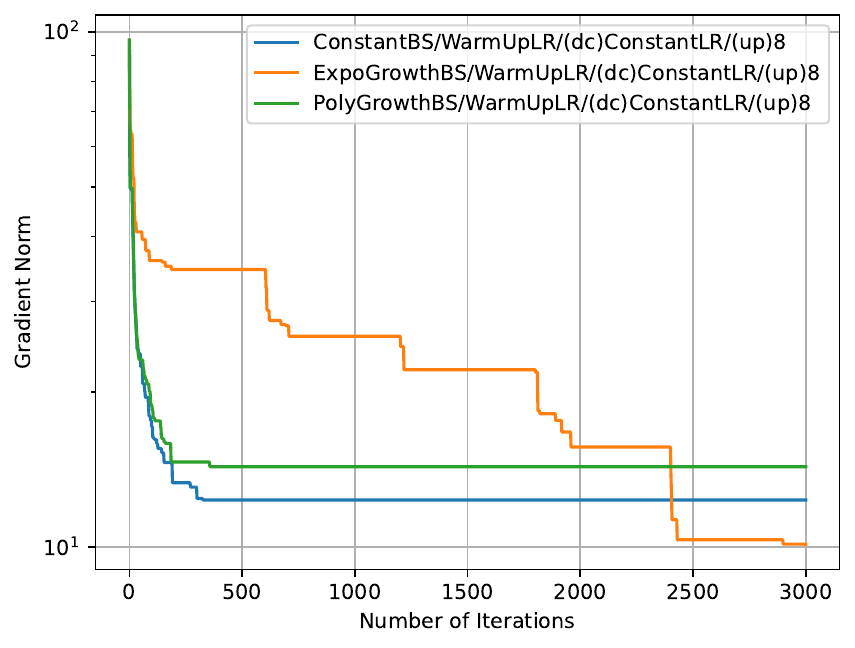}
\includegraphics[width=0.24\linewidth]{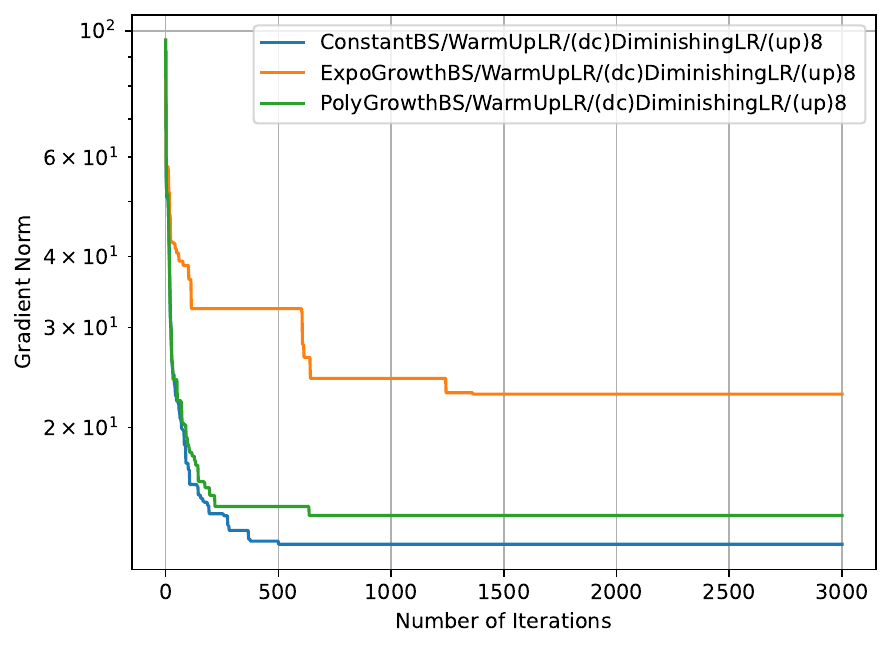}
\includegraphics[width=0.24\linewidth]{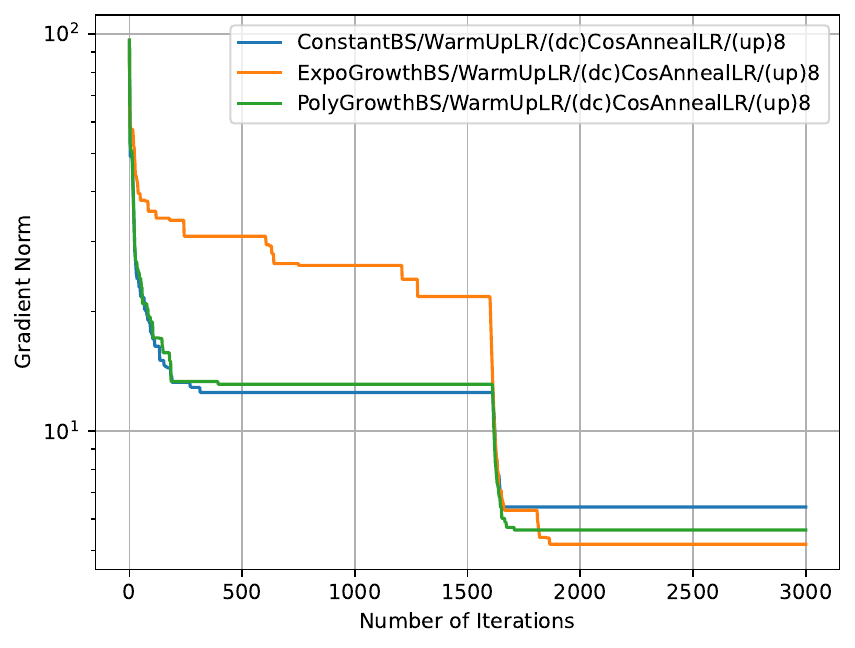}
\includegraphics[width=0.24\linewidth]{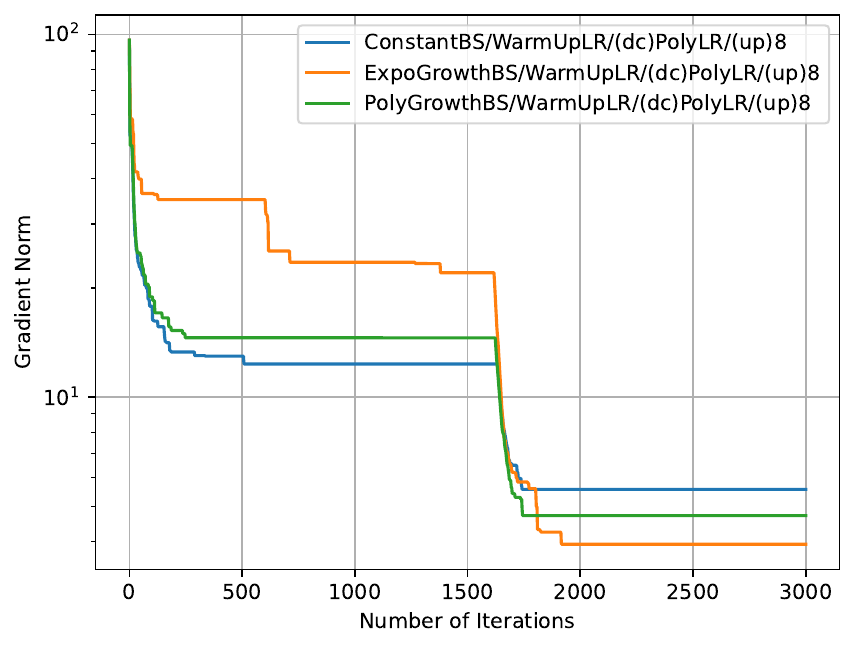}
\caption{Norm of the gradient of the objective function versus number of iterations for warm-up LRs that have an increasing part with eight increments on Jester dataset (LRMC).}
\label{fig:jester_grad_wup8}
\end{figure}

\begin{figure}[htbp]
\centering
\includegraphics[width=0.24\linewidth]{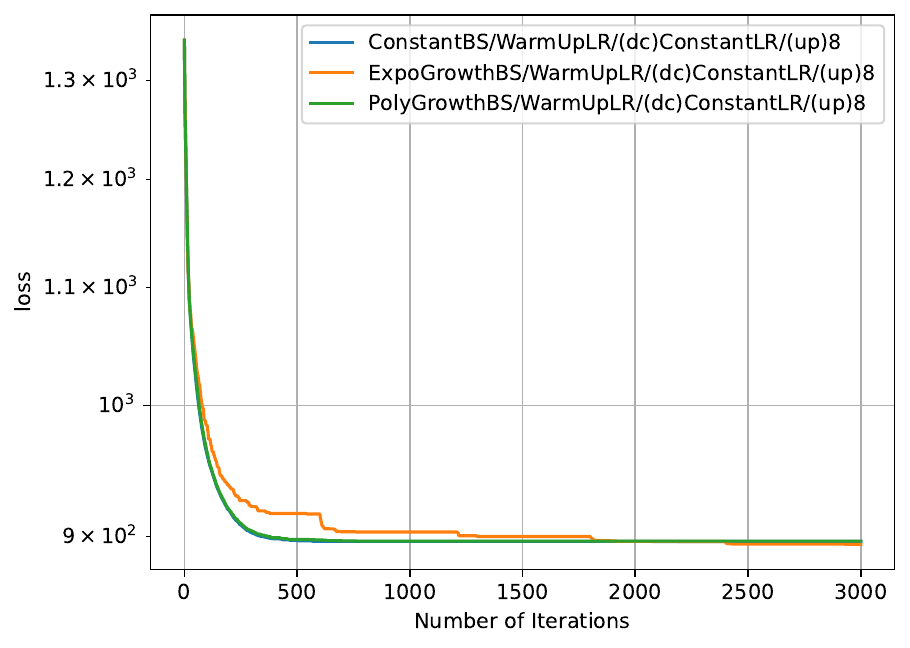}
\includegraphics[width=0.24\linewidth]{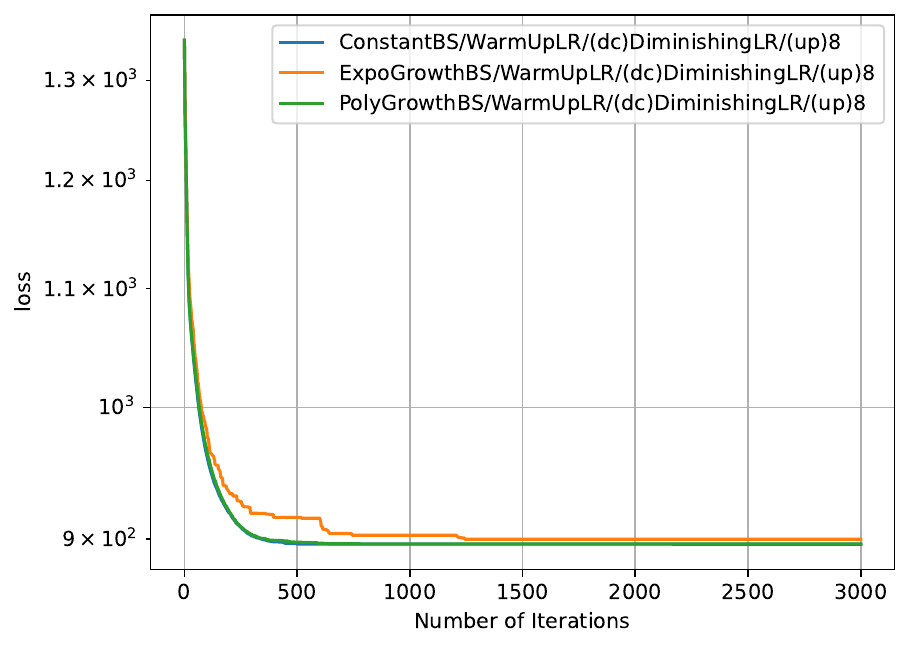}
\includegraphics[width=0.24\linewidth]{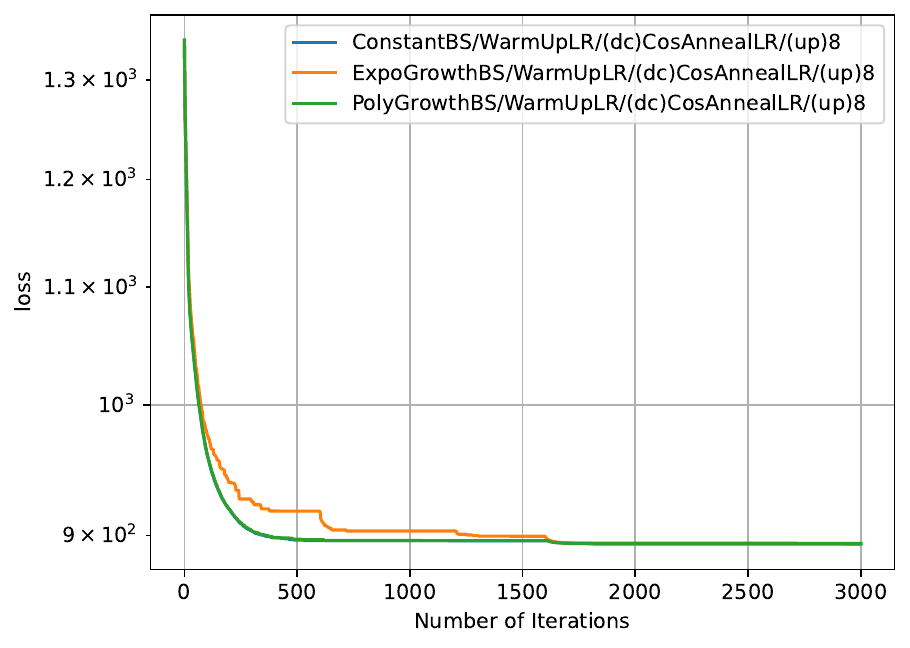}
\includegraphics[width=0.24\linewidth]{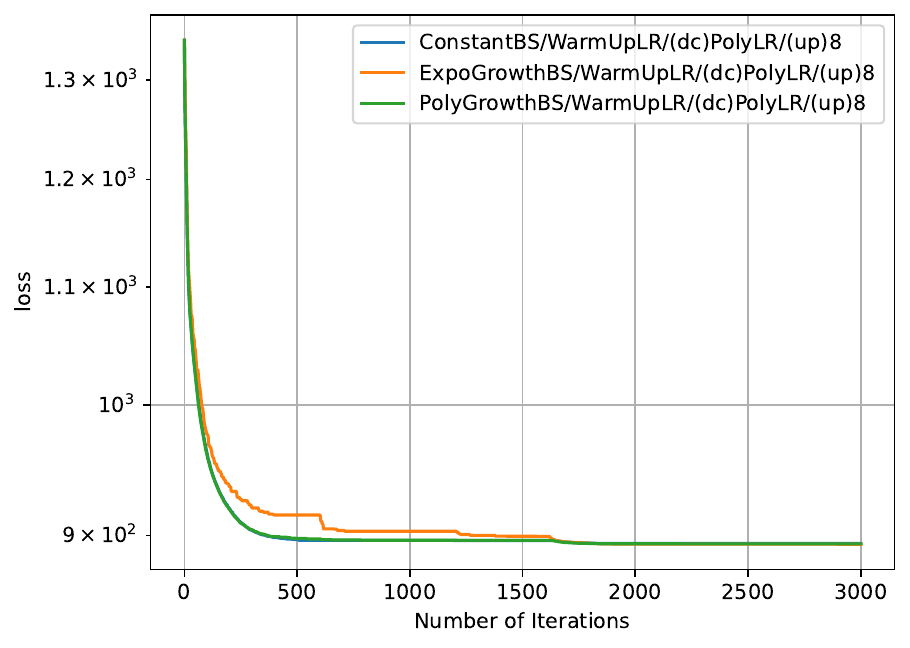}
\caption{Objective function value (loss) versus number of iterations for warm-up LRs that have an increasing part with eight increments on Jester dataset (LRMC).}
\label{fig:jester_loss_wup8}
\end{figure}

\end{document}